\documentclass[11pt]{article}

\usepackage{preamble_arxiv}

\usepackage{titling}
\predate{}
\postdate{}

\title{Finite-Sample Wasserstein Error Bounds and Concentration Inequalities for Nonlinear Stochastic Approximation
}
\author{Seo Taek Kong \and R. Srikant}

\date{}

\begin{document}

\maketitle
\let\thefootnote\relax\footnotetext{University of Illinois Urbana-Champaign. Emails: \texttt{\{skong10, rsrikant\}@illinois.edu}}

\begin{abstract}
This paper derives \nonasymptotic error bounds for \nonlinear stochastic approximation algorithms in the Wasserstein-$p$ distance. To obtain explicit finite-sample guarantees for the last iterate, we develop a coupling argument that compares the discrete-time process to a limiting Ornstein-Uhlenbeck process. Our analysis applies to algorithms driven by general noise conditions, including martingale differences and functions of ergodic Markov chains. Complementing this result, we handle the convergence rate of the Polyak-Ruppert average through a direct analysis that applies under the same general setting.

Assuming the driving noise satisfies a \nonasymptotic central limit theorem, we show that the normalized last iterates converge to a Gaussian distribution in the $p$-Wasserstein distance at a rate of order $\gamma_n^{1/6}$, where $\gamma_n$ is the \stepsize. Similarly, the Polyak-Ruppert average is shown to converge in the Wasserstein distance at a rate of order $n^{-1/6}$. These distributional guarantees imply high-probability concentration inequalities that improve upon those derived from moment bounds and Markov's inequality. We demonstrate the utility of this approach by considering two applications: (1) linear stochastic approximation, where we explicitly quantify the transition from heavy-tailed to Gaussian behavior of the iterates, thereby bridging the gap between recent finite-sample analyses and asymptotic theory and (2) stochastic gradient descent, where we establish rate of convergence to the central limit theorem. 
    
\end{abstract}

\tableofcontents

\section{Introduction}
Stochastic approximation (SA) provides a versatile framework for solving fixed point problems in the presence of noisy observations. 
These algorithms have become foundational to numerous fields including statistics, control theory, reinforcement learning, and machine learning. 
The algorithm seeks a solution $x^*$ to the equation
\begin{equation}\label{eq:sa_objective}
    \mathbb{E}_\pi \left[ f (x^*, \xi)\right]  = 0 , 
\end{equation}
where $\pi$ is some probability measure over $\xi$, by initializing some $x_1$ and using the recursion
\begin{equation}\label{eq:sa}
    x_{k+1} = x_k - \gamma_k (f(x_k, \xi_k) + W_{k})  ,
\end{equation}
where $\{\gamma_k\}$ is a diminishing \stepsize sequence and $\{(\xi_k, W_{k})\}$ is a noise sequence with $\mathbb{E}[W_{k}] = 0$ for all $k$.

The theoretical analysis of SA often begins with the ODE method.
This approach was founded in the seminal works of \citep{meerkov1972simplified,derevitskii1974two,ljung1977analysis,kushner1978stochastic}, and connects the discrete-time recursion to an associated ordinary differential equation (ODE).
This idea was later advanced in \citep{borkar2000ode,borkar2008stochastic,borkar25}, and is widely used to identify asymptotic properties of the algorithm. 
Under appropriate conditions, this method guarantees almost sure convergence of the algorithm.

A refined statistical understanding is developed by characterizing the fluctuations around this limit.
This is established via diffusion approximations \citep{kushner_yin} or the functional central limit theorem (FCLT) \citep{borkar25}, providing a complete picture of the algorithm's asymptotic distribution.

While ODE and diffusion approximation frameworks provide asymptotic characterizations, recent applications in statistical inference and machine learning have motivated analyses of finite-time guarantees. 
Non-asymptotic analysis of SA algorithms often provides explicit bounds on the moments of the error, most commonly the mean-squared error $\lVert x_n - x^* \rVert^2_{L_2}$. 
These finite-time bounds are invaluable for providing concrete performance guarantees and estimating confidence intervals. 
However, these guarantees are typically derived as upper bounds; while they control the rate of decay, they are often not tight enough to prove that the moments of the scaled error $\gamma_n^{-1/2}(x_n - x^*)$ converge to the asymptotic limit.
Existing literature generally focuses on either asymptotic distributional results, which characterize the limiting law without convergence rates, or \nonasymptotic moment bounds, which provide rates but do not capture the limiting distribution.

In this paper, we address this distinction by developing a \nonasymptotic analysis of weak convergence for SA. 
Our main result derives explicit bounds on the Wasserstein-$p$ distance between the law of the scaled error and the law of its limiting SDE. 
This approach provides a quantitative estimate of the CLT for stochastic approximation, with implications including:

\begin{enumerate}
    \item 
    \textbf{Convergence Rates in Wasserstein Metrics:}
    We establish \Cref{thm:general,thm:pr_average} that translate \nonasymptotic central limit theorems for the driving noise into convergence rates in the $p$-Wasserstein metric for the stochastic approximation algorithm's iterates. 
    These bounds are derived contingent on the verification of standard moment and stability conditions, providing a modular framework for analyzing both last-iterate and Polyak-Ruppert averaged estimators.

    \item 
    \textbf{Moment Convergence:}
    In \Cref{sec:applications}, we verify  the assumptions used to obtain \Cref{thm:general,thm:pr_average} for widely-used algorithms, thereby establishing weak convergence rate, convergence of moments up to order $p$, and providing explicit rates for this convergence. 
    This provides a finer characterization than previous \nonasymptotic analyses, which typically yield finite-time upper bounds that do not recover the exact asymptotic moments in the limit.     
    
    \item 
    \textbf{Tail Behavior and Phase Transitions:}
    We utilize the distributional control provided by the Wasserstein bounds to derive high-probability concentration inequalities. 
    Unlike weaker metrics (e.g., Kolmogorov or Wasserstein-1), convergence in $\mathcal{W}_p$ allows us to demonstrate that the heavy tails associated with the finite-sample distribution are transient. 
    We demonstrate the utility of this approach by quantifying the transition from heavy-tailed (Weibull) to Gaussian behavior in linear stochastic approximation, providing a unified view of the finite-sample and asymptotic regimes.

\end{enumerate}

\section{Preliminaries and Problem Setup}
We state the assumptions that will be used throughout our analysis.

\textbf{Notations:} 
We use $\lVert x \rVert$ to denote a weighted norm defined as $\lVert x \rVert = \sqrt{x^\dagger Q x}$ for a positive definite matrix $Q$ defined via the Lyapunov equation \eqref{eq:lyapunov}. 
For a matrix $A$, $\lVert A \rVert$ is used to denote the corresponding operator norm. 
The $L_p$ norm $(\mathbb{E}\lVert M \rVert^p)^{1/p}$ of a random vector $M$ is denoted by $\lVert M \rVert_{L_p}$, and similarly $\lVert A \rVert_{L_p}$ is used to denote the $L_p$ norm of the operator norm of a matrix $A$. 
For a positive definite matrix $\Gamma$, we denote by $\sqrt{\Gamma}$ its unique square root. 
We use $Z$ to denote a standard Gaussian random variable or $\{Z_m\}$ to denote an i.i.d. standard Gaussian sequence. 
Throughout the paper, we use $\mathcal{H}_k$ to denote the filtration generated by the random variables $\{x_1, \xi_1, W_1, \cdots, \xi_{k}, W_{k}\}$. 
We use $\mathbb{E}_{k-1} [\cdot]$ to denote the conditional expectation $\mathbb{E}[\cdot | \mathcal{H}_{k-1}]$.
The identity matrix is denoted by $\identity$, and $\lVert \cdot \rVert_{HS}$ is the Hilbert-Schmidt norm of a matrix. 
For a (uniformly) Lipschitz continuous map $\phi$, we use $\lip(\phi)$ to denote its Lipschitz constant.

We state the assumptions used to establish our main results.
\begin{assumption}\label{ass:step_size}
    The \stepsize $\{\gamma_k\}_{k \geq 1}$ is chosen as $\gamma_k = \gamma_1 \cdot k^{-a}$ for some $a \in (0, 1]$ and $\gamma_1 > 0$. 
\end{assumption}
\begin{assumption}\label{ass:jacobian}
    The operator $f$ and its Jacobian $\nabla f$ are uniformly Lipschitz continuous in $x$. 
\end{assumption}
We use $A(\xi) = \nabla f(x^*, \xi)$ to denote the Jacobian of $f$ at $x^*$.
Next, we state our assumptions on the noise.
\begin{assumption}\label{ass:noise_markov}
    Let $\{\xi_k\}$ be a Markov chain on a general state space $\mathcal{S}$ with transition kernel $P$. 
    We assume:
    \begin{assumptionsub}
            \item The chain is $\psi$--irreducible and aperiodic. \label{ass:noise_markov:irreducibility}

        \item There exists a test function $V: \mathcal{S} \to [1, \infty)$ such that for some $\rho_V \in [0, 1)$ and $C_V > 0$, 
    \begin{equation*}
        P V(\xi_k) \leq \rho_V V(\xi_k) + C_V , \qquad \forall k .
    \end{equation*}
    \label{ass:noise_markov:drift}

        \item The initial distribution satisfies $\lvert V(\xi_1) \rvert_{L_{1}} < \infty$.
        \label{ass:noise_markov:initial}

    \end{assumptionsub}    
\end{assumption}
\Cref{ass:noise_markov:irreducibility,ass:noise_markov:drift} imply existence of a unique stationary measure \citep[Theorem 15.0.1]{meyn2012markov}. 
We require that the stationary measure is the probability measure $\pi$ under which the objective \eqref{eq:sa_objective} is defined. 
For the SA algorithm to be stable around the solution $x^*$, the operators $f$ and $\nabla f$ must have sufficiently well-behaved tails. 
This is imposed in \Cref{ass:noise_operator_compatibility} by relating the operators to the test function $V$ in \Cref{ass:noise_markov} as done in \citep{borkar25}.

Let $p \geq 1$ be the largest integer such that \Crefrange{ass:noise_operator_compatibility}{ass:CLT} are satisfied.
\begin{assumption}\label{ass:noise_operator_compatibility}
    The stationary measure of the Markov chain is $\pi$. 
    The operator $f$ and its Jacobian $A$ satisfy
        \begin{align*}
            \sup_{\xi \in \mathcal{S}} \frac{\lVert f(x^*, \xi) \rVert^{2p}}{V(\xi)} < \infty , 
            \qquad
            \sup_{\xi \in \mathcal{S}} \frac{\lVert A(\xi) \rVert^{2p}}{V(\xi)} < \infty 
        \end{align*}
        for the function $V$ defined in \Cref{ass:noise_markov:drift}.
\end{assumption}
The additive noise sequence $\{W_k\}$ is used to model exogenous noise. 
\begin{assumption}\label{ass:noise_martingale}
    The sequence $\{W_k\}_{k \geq 1}$ is a martingale difference sequence (MDS) with respect to $\{\mathcal{H}_{k}\}_{k \geq 0}$ and is independent of $\{\xi_k\}$. 
    For $\Gamma_k^W = \mathbb{E}_{k-1} [W_k W_k^\dagger]$, the MDS satisfies the following.
    \begin{assumptionsub}
        \item There exists a finite constant $\rosenthal{W}$ such that        
        \begin{equation*}
        \rosenthal{W} \coloneqq \sup_{k \geq 1} \sqrt{\left\lvert 
        \mathbb{E}\left[  \left\lVert M_k \right\rVert^2 | \mathcal{H}_{k-1} \right]
        \right\rvert_{L_{p/2 }}} + \lVert W_k \rVert_{L_{p}} 
        .
    \end{equation*} 
    When $p < 2$, we assume $\kappa_2^W$ exists and use $\rosenthal{W}$ to denote $\rosn{2}{W}$. 

    \label{ass:martingale:rosenthal}

    \item 
    There exists a finite constant $\beta_{2p}^W$ such that
    \begin{equation*}
        \sup_{k \geq 1}\,  \lVert W_k \rVert_{L_{2p}} \leq \beta_{2p}^W . 
    \end{equation*}
    \label{ass:martingale:moments}

        \item  
        There exists some $\Gamma^W \succ 0$, $C_W > 0$, and $\rho_W \in [0, 1)$ such that
    \begin{equation*}
        \lVert \mathbb{E}\Gamma_k^W - \Gamma^W \rVert_{HS} \leq C_W \rho_W^k .
    \end{equation*}
    \label{ass:martingale:covariance}
    \end{assumptionsub}
\end{assumption}
The condition in \Cref{ass:martingale:rosenthal} is needed to apply the martingale version of the Rosenthal inequality as discussed in \Cref{sec:BDG}.
From \Cref{ass:martingale:moments}, it follows from Jensen's inequality that there exists a constant $\beta_p^W$ such that $\sup_{k \geq 1} \lVert W_k \rVert_{L_p} \leq \beta_p^W$.

Denote by $\bar{f}(x) = \pi(f(x, \cdot))$ the expectation of $f(x, \xi)$ with respect to the stationary measure $\pi$ for each $x$. 
\begin{assumption}\label{ass:general_sa}
    For \stepsize $\{\gamma_k\}$, the Jacobian $\nabla \bar{f}(x^*)$ at $x^*$ is such that
    \begin{equation}\label{eq:hurwtiz}
        -\bar{A}_a = -\lim_{n \to \infty} \left \{\nabla \bar{f}(x^*) - \frac{a}{2 n \gamma_n} \identity \right\}
    \end{equation}    
    is Hurwitz stable. 
\end{assumption}
The value of the matrix $\bar{A}_a$ is determined by the step size exponent $a$ in \Cref{ass:step_size}, where it is equal to $\nabla \bar{f}(x^*)$ when $\lim_{n \to \infty} (n \gamma_n)^{-1} = 0$.

The Hurwitz condition on $\bar{A}_a$ ensures a contraction occurs in a weighted norm $\lVert \cdot \rVert$ \citep[Lemma 17]{kaledin2020finite}. 
Specifically, let $Q \succ 0$ be the unique solution to the Lyapunov equation
\begin{equation}\label{eq:lyapunov}
    \bar{A}_a Q + Q \bar{A}_a^\dagger = \identity.
\end{equation}
Denote by $\lVert \cdot \rVert_2$ the Euclidean norm when applied to vectors and the spectral norm when applied to matrices.
Define a weighted vector norm $\lVert \cdot \rVert$ and its induced operator norm as
\begin{equation}\label{eq:weighted_norm}
    \lVert x \rVert = \sqrt{x^\dagger Q x} 
    \qquad 
    \text{and} \qquad 
    \lVert \bar{A}_a \rVert^2 
    =\sup_{v \neq 0} \frac{v^\dagger \bar{A}_a^\dagger Q \bar{A}_a v}{v^\dagger Q v} .     
\end{equation}
For all $\gamma\leq (2 \lVert Q \rVert_2^{2} \lVert \bar{A} \rVert^2)^{-1}$, a contraction occurs:
\begin{equation}\label{eq:contraction}
    \lVert \identity - \gamma \bar{A}_a \rVert \leq 1 - \gamma \lambdaDT ,
    \qquad 
    \lambdaDT = \frac{1}{2 \lVert Q \rVert_2^2} 
    . 
\end{equation}
Similarly, the Hurwitz condition implies exponential stability: there exists a constant $K$ such that for all $t > 0$, 
\begin{equation*}
    \left\lVert e^{-\bar{A}_a t} \right\rVert \leq K e^{-\lambdaCT t} , \qquad \lambdaCT = \frac{1}{2\lVert Q \rVert_2}. 
\end{equation*}

When the \stepsize exponent $a$ is set to $1$, \Cref{ass:general_sa} demands that the constant $\gamma_1$ be sufficiently large. 
A direct consequence of this choice is that the contraction in \eqref{eq:contraction} may not hold for early iterates (e.g., at $n = 1$). 
However, this is only a transient effect; the contraction property is always guaranteed to hold for all $n \geq n_0$ for some finite $n_0$. 
Our analysis focuses on the finite-time behavior past this burn-in period, and we simplify the exposition by proceeding as if this contraction holds for all $n \geq 1$.

Much of the literature in stochastic approximation has focused on establishing asymptotic convergence or $\limsup$ moment bounds of the scaled error 
\begin{equation*}
    y_n = \gamma_n^{-1/2}(x_n - x^*)  .
\end{equation*}
Our goal is to establish convergence rates of $y_n$ to its stationary limit in the Wasserstein-$p$ distance by leveraging moment bounds. 
To that effect, we assume the following can be verified.
\begin{assumption}\label{ass:yn_moment}
    For every $k \geq 1$, $y_k$ has finite $2p$ moments.
\end{assumption}
By \Cref{ass:noise_markov} and \citep[Theorem 17.4.2]{meyn2012markov}, there exists solutions $\Phi, \Phi^A$ to the Poisson equations
\begin{equation}\label{eq:poisson}
    f(x^*, \xi) = \Phi(\xi) - \mathbb{E}[\Phi(\xi_{k+1}) | \xi_k = \xi],
    \quad
    A(\xi) - \mathbb{E}A(\xi) = \Phi^A (\xi) - \mathbb{E}[\Phi^A (\xi_{k+1}) | \xi_k = \xi] 
\end{equation}
such that $\lVert \Phi(\xi) \rVert^{2p} \leq K^\Phi (V(\xi) + 1)$ and $\lVert \Phi^A (\xi)  \rVert^{2p} \leq K^{\Phi^A} (V(\xi)+ 1)$ for all $\xi$. 
The Poisson equation is invariant to additive constants to the solution, and we use $\Phi, \Phi^A$ to denote the centered solutions such that these bounds hold. 
From \eqref{eq:poisson}, we define a sequence $\{M_k\}$ as
\begin{equation}\label{eq:Mk}
    -M_k = [\Phi(\xi_{k}) - (P\Phi)(\xi_{k-1})] + W_k ,
\end{equation}
which is a MDS with respect to $\mathcal{H}_{k-1}$ by construction. 
From \Cref{ass:noise_markov,ass:martingale:covariance}, it holds by \citep[Theorem 3]{srikant2024CLT} that $\Gamma_k \coloneqq \mathbb{E}_{k-1} M_k M_k^\dagger$ converges to some $\Gamma \succ 0$ such that $\lVert \Gamma_k - \Gamma\rVert$ is summable, and $\Gamma$ can be evaluated as
\begin{equation*}
    \Gamma = 
    \mathbb{E}_{\xi_1 \sim \pi} \left[ \left(\Phi(\xi_2) - (P \Phi(\xi_{1}) \right) \left(\Phi(\xi_2) - (P \Phi(\xi_{1}) \right)^\dagger \right]
    + \Gamma^W .
\end{equation*}

To establish rates of convergence, we consider the Wasserstein-$p$ metric which metrizes a topology stronger than weak convergence.
Recall that for any $p \geq 1$, the Wasserstein-$p$ distance is defined as 
\begin{equation}\label{eq:wasserstein_def}
    \mathcal{W}_p \left(X, Y \right) = \inf \lVert X - Y \rVert_{L_p} , 
\end{equation}
where the infimum is over all couplings $(X, Y)$ subject to marginal constraints. 
As is standard in finite-dimensional spaces, the norm $\lVert \cdot \rVert$ defined in \eqref{eq:weighted_norm} is equivalent to the Euclidean norm $\lVert \cdot \rVert_2$. 
Consequently, the Wasserstein-$p$ metric \eqref{eq:wasserstein_def} induced by the cost $\lVert X - Y \rVert^p$ is equivalent to the one induced by the standard Euclidean cost $\lVert X - Y \rVert^p_2$. 
Thus, any convergence result established with respect to one metric holds for the other, up to a constant multiplicative factor of $\sqrt{\lVert Q \rVert_2}$ or $\sqrt{\lVert Q^{-1} \rVert_2}$. 
\begin{assumption}\label{ass:CLT}\label{ass:last}
    For any non-negative integers $k_1, k_2$ such that $I = k_2 - k_1  \geq 1$, the MDS \eqref{eq:Mk} satisfies the central limit theorem uniformly in $k_1$: for some positive valued $I \mapsto \clt(I)$ independent of $k_1$,
    \begin{equation*}
        \mathcal{W}_{p} \left( (I \Gamma)^{-1/2} \sum_{k=k_1}^{k_2 - 1} M_k , Z\right) \leq 
        \sqrt{\lVert Q \rVert_2}  \frac{\clt(I)}{\sqrt{I}} \qquad \text{s.t.} \qquad \lim_{I \to \infty} \frac{\clt(I)}{\sqrt{I}} = 0. 
    \end{equation*}
\end{assumption}

\subsection{Discussion on \texorpdfstring{\Crefrange{ass:step_size}{ass:yn_moment}}{Assumptions}}\label{sec:discussion_assumptions}
\Cref{ass:step_size,ass:jacobian,ass:general_sa} are standard.

We incorporate the MDS noise $\{W_k\}$ to ensure our results apply to algorithms with additive (exogeneous) noise, consistent with the framework in \citep{polyakJuditsky}. 
It is instructive to contrast \Cref{ass:noise_martingale,ass:CLT} with the assumptions of \citep{polyakJuditsky}, who establish an asymptotic CLT for the PR average assuming a Lindeberg-type condition and bounds on the second moment ($\sup_{k \geq 1} \mathbb{E}[W_k W_k^\dagger | \mathcal{H}_{k-1}] < \infty$, $\lim_{k \to \infty} \Gamma_k^W = \Gamma^W$ in probability).

For \nonasymptotic analysis, stronger assumptions are requisite. 
Generally, the CLT in \Cref{ass:CLT} of the driving noise process requires that the first $p + 2$ moments of the noise is finite \citep{rio_lb}. 
The $2p$ moment condition in \Cref{ass:noise_martingale} imposes no additional restriction beyond the $p + 2$ moment condition requirement when convergence in the $\mathcal{W}_1$ or $\mathcal{W}_2$ distances are of interest. 
The condition is an additional regularity property imposed when convergence in the $p > 2$ Wasserstein distance is of interest, and is satisfied when exponential moments exist as assumed in \citep{borkar25,chen_contractive_concentration}.

The following remarks follow from \Crefrange{ass:noise_markov}{ass:noise_martingale}, which is used to establish our main results.
\begin{enumerate}
    \item \Cref{ass:noise_markov:drift} ensures that the mean of $V(\xi_k)$ is finite for all $k$, where the drift condition implies $\mathbb{E}[V(\xi_{k+1})] \leq \rho_V \mathbb{E}[V(\xi_k)] + C_V$, which further implies
    \begin{equation*}
        \mathbb{E}[V(\xi_k)] \leq \rho_V^{k-1} \mathbb{E}[V(\xi_1)] + \frac{C_V}{1-\rho_V} . 
    \end{equation*}    
    By \Cref{ass:noise_operator_compatibility}, we have $\sup_{k \geq 1} \lVert f(x^*, \xi_k) \rVert_{L_{2p}} \leq \beta_{2p}^f$ and $\sup_{k \geq 1} \lVert A(\xi_k) \rVert_{L_{2p}} \leq \beta_{2p}^A$.

    \item
    Recall $\lVert \Phi(\xi) \rVert^{2p} \leq K^\Phi (V(\xi) + 1)$ and $\lVert \Phi^A (\xi) \rVert^{2p} \leq K^{\Phi^A} (V(\xi) + 1)$ as discussed after \eqref{eq:poisson}. 
    Since $\mathbb{E}[V(\xi_k)] < \infty$ for all $k$, there exists some positive constants $\beta_{2p}^\Phi$ and $\beta_{2p}^{\Phi^A}$ such that
    \begin{align*}
        \sup_{k \geq 1} \lVert \Phi(\xi_k) \rVert_{L_{2p}} \leq \beta_{2p}^\Phi, 
        \qquad 
        \sup_{k \geq 1} \lVert \Phi^A (\xi_k) \rVert_{L_{2p}} \leq \beta_{2p}^{\Phi^A} . 
    \end{align*}
    This property is used to apply the Cauchy-Schwartz inequality
    \begin{equation}\label{eq:holder}
        \sup_{k \geq 1} \frac{\lVert \Phi^A(\xi_k) y_k \rVert_{L_p}}{\lVert y_k \rVert_{L_{2p}}} \leq \beta_{2p}^{\Phi^A}         .
    \end{equation}
    When the essential supremum $\lVert \Phi^A \rVert_{L_\infty}$ is finite (as in the case of finite state Markov chains), then we may use H\"{o}lder's inequality $\lVert \Phi^A (\xi_k) y_k \rVert_{L_p} \leq \beta_\infty^{\Phi^A} \lVert y_k \rVert_{L_p}$ instead of \eqref{eq:holder}, which is useful in relaxing the $2p$ moment bound for $\{y_k\}$ to a $p$-moment bound.

\end{enumerate}
The above remarks imply that there exists constants $\rosenthal{M}, \beta_{2p}^M$ associated with the sequence $\{M_k\}$, defined analogously to the constants $\rosenthal{W}, \beta_{2p}^W$ in \Cref{ass:noise_markov}. 
Similarly, we use $\beta_p^{\Gamma^{-1/2}M}$ to denote the $p$--th moment bound for the normalized sequence $\{\Gamma^{-1/2}M_k\}$.

We now discuss \Cref{ass:yn_moment}.
The analysis of stochastic approximation algorithms has progressed from establishing almost sure convergence of the unnormalized sequence $\{x_k\}$ to providing rates of convergence. 
It is now a well-established result \citep{gerencser1992rate,kushner_yin,borkar25} that under various assumptions on the noise process and function $f$, the asymptotic rate
\begin{equation}\label{eq:yn_bounds_Lp}
    \limsup_{k \to \infty} \frac{\lVert y_k \rVert_{L_{2p}}}{\lVert \Sigma_a^{1/2} Z \rVert_{L_{2p}}} < \infty
\end{equation}
holds for some $p \geq 1$. 
\Cref{ass:yn_moment} requires that this asymptotic rate of convergence, that $\{x_k\}$ converges to $x^*$ in the $L_p$ norm at rate $\sqrt{\gamma_n}$, is known.

When \eqref{eq:yn_bounds_Lp} holds, one can verify additional conditions to establish the distributional limit
\begin{equation}\label{eq:yn_asymptotic_clt}
    y_n \to \mathcal{N}(0, \Sigma_a), \qquad \bar{A}_a \Sigma_a + \Sigma_a \bar{A}_a^\dagger = \Gamma .
\end{equation}
While \eqref{eq:yn_asymptotic_clt} suggests $\lVert y_n \rVert_{L_{2p}} \to \lVert \Sigma_a^{1/2} Z \rVert_{L_{2p}}$ for some values of $p \geq 1$, upper bounds of the form \eqref{eq:yn_bounds_Lp} are not strong enough to obtain this conclusion. 
One contribution of our work is to show that, under standard conditions for which \Cref{ass:yn_moment} is often verified, convergence in $\mathcal{W}_p$ can be established and the corresponding rate can be obtained.

We remark that when $\bar{f}$ is a linear map and $\lVert \Phi^A (\xi_k) \rVert_{L_\infty}$ is finite for all $k$, then \Cref{ass:yn_moment} can be relaxed and the results in \Cref{sec:results} can be established with the assumption that $\{y_k\}$ is bounded in $L_p$.

\subsection{Non-asymptotic Central Limit Theorems}\label{sec:nonasymptotic_CLT}
\Cref{ass:CLT} encapsulates the requirement that the noise partial sums converge to a Gaussian limit in $\mathcal{W}_p$. 
We state this as a distinct condition because the implication from the moment bounds (\Crefrange{ass:noise_markov}{ass:noise_martingale}) to the central limit theorem depends on the metric's order $p$.
For $p=1$, existing results \citep{srikant2024CLT} establish that \Crefrange{ass:noise_markov}{ass:noise_martingale} imply \Cref{ass:CLT}, provided the noise possesses finite 3rd moments. 
In this regime, \Cref{ass:CLT} is a consequence of the primitive conditions. 
For $p > 1$, a general martingale CLT in $\mathcal{W}_p$ has not yet been established.
Therefore, \Cref{ass:CLT} serves as a necessary structural hypothesis, which we verify for specific cases such as independent noise.
Concrete instances of this hypothesis, covering both the general martingale case for $p=1$ and independent noise for $p \ge 2$, are established in \Cref{lem:martingale_clt,lem:bonis_weighted} below.

For $n \geq 2$, consider a sequence of intervals $\{[k_m, k_{m+1})\}_{m=1}^{N}$ that form a subset of $\{1, \cdots, n-1\}$ with $k_{N+1} = n$.
The length of each interval $[k_m, k_{m+1})$ is denoted by $I_m = k_{m+1} - k_m$, where our results are presented assuming that the interval lengths are increasing in size such that $\gamma_{k_m}I_m \to 0$. 
The cumulative \stepsize in the interval $[k_m, k_{m+1})$ is denoted by $\cumstep_m = \sum_{k=k_m}^{k_{m+1} - 1} \gamma_k$.
Consider the weighted and unweighted sums
\begin{equation}\label{eq:SmZm}
    \hat{Z}_m = (\cumstep_m \Gamma)^{-1/2} \sum_{k=k_m}^{k_{m+1} - 1} \sqrt{\gamma_k} M_k , 
    \qquad 
    S_m = (I_m \Gamma)^{-1/2} \sum_{k=k_m}^{k_{m+1} - 1} M_k . 
\end{equation}
Under appropriate moment conditions on the MDS sequence $\{M_k\}$ and convergence of the conditional covariance $\Gamma_k \to \Gamma$, the sum $S_m$ weakly converges to a Gaussian limit by the martingale CLT \citep{hallp1980martingalelimittheoryanditsapplication}.
In \Cref{ass:CLT}, we assume that the rate of convergence is known.

A bound on the Wasserstein-$p$ distance between $\hat{Z}_m$ and an appropriate Gaussian distribution is established using a coupling argument and \Cref{ass:CLT}. 
For the following, we use $C_p^{BR}$ to be a constant that depends on $p$, which arises in the martingale Rosenthal inequality as discussed in \Cref{sec:BDG}. 
\begin{lemma}\label{lem:martingale_clt_weighted}
   Let $\{M_k\}$ be a martingale difference sequence associated with $\rosenthal{M}$. 
   If $\mathcal{W}_{p} (S_m, Z)$ is finite, then it holds that
   \begin{equation}\label{eq:martingale_clt_weighted}
        \mathcal{W}_{p} (\hat{Z}_m, Z) 
        \leq 
        C_{p}^{BR} \rosenthal{\Gamma^{-1/2} M} \frac{aI_m}{k_m} 
        +
        \mathcal{W}_{p} (S_m, Z) . 
   \end{equation}
\end{lemma}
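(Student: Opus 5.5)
We bound $\mathcal{W}_p(\hat{Z}_m, Z)$ by the triangle inequality through $S_m$:
\begin{equation*}
    \mathcal{W}_p(\hat{Z}_m, Z) \leq \mathcal{W}_p(\hat{Z}_m, S_m) + \mathcal{W}_p(S_m, Z).
\end{equation*}
The second term appears unchanged in \eqref{eq:martingale_clt_weighted}. For the first term we do not need an optimal coupling. Both $\hat{Z}_m$ and $S_m$ are built from the same martingale differences $M_k$, so the natural (synchronous) coupling gives $\mathcal{W}_p(\hat{Z}_m, S_m) \leq \lVert \hat{Z}_m - S_m \rVert_{L_p}$.

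Writing $N_k = \Gamma^{-1/2} M_k$, we have
\begin{equation*}
    \hat{Z}_m - S_m = \sum_{k=k_m}^{k_{m+1}-1} c_k N_k, \qquad c_k = \sqrt{\gamma_k/\cumstep_m} - \sqrt{1/I_m}.
\end{equation*}
The $c_k$ are deterministic, so this is again a martingale sum. We then apply the martingale Rosenthal (Burkholder--Rosenthal) inequality of \Cref{sec:BDG}, using the constants $\rosenthal{\Gamma^{-1/2}M}$, which control the conditional second moments and the $p$-th moments of $N_k$ uniformly in $k$. This gives
\begin{equation*}
    \lVert \hat{Z}_m - S_m \rVert_{L_p} \leq C_p^{BR} \rosenthal{\Gamma^{-1/2}M} \Big( \sum_k c_k^2 \Big)^{1/2},
\end{equation*}
where we use $(\sum_k |c_k|^p)^{1/p} \leq (\sum_k c_k^2)^{1/2}$ when $p \geq 2$. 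For $p<2$ we instead use the $L_2$ bound with $\rosn{2}{\cdot}$, by the convention of \Cref{ass:martingale:rosenthal}.

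It remains to bound the deterministic quantity $\sum_k c_k^2$. Set $w_k = \gamma_k/\cumstep_m$, so that $\sum_k w_k = 1$, and note that
\begin{equation*}
    c_k^2 = \big(\sqrt{w_k} - \sqrt{1/I_m}\big)^2 \leq \frac{(w_k - 1/I_m)^2}{1/I_m} = I_m\,(w_k - 1/I_m)^2 .
\end{equation*}
Since $\gamma_k = \gamma_1 k^{-a}$, the ratio of any two step sizes in the window satisfies
\begin{equation*}
    \frac{\gamma_{k_m}}{\gamma_k} \leq \Big(\frac{k_{m+1}}{k_m}\Big)^a \leq 1 + C\,\frac{aI_m}{k_m} \qquad \text{for } aI_m/k_m \lesssim 1,
\end{equation*}
using the increasing-interval regime in which $I_m/k_m$ is small. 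Because $\cumstep_m$ lies between $I_m \gamma_{k_{m+1}}$ and $I_m \gamma_{k_m}$, it follows that $|I_m w_k - 1| \lesssim aI_m/k_m$. Substituting,
\begin{equation*}
    \sum_k c_k^2 \leq \sum_k \frac{1}{I_m}\,(I_m w_k - 1)^2 \lesssim \Big(\frac{aI_m}{k_m}\Big)^2 ,
\end{equation*}
so $(\sum_k c_k^2)^{1/2} \lesssim aI_m/k_m$. Combining this with the Rosenthal bound and the triangle inequality yields \eqref{eq:martingale_clt_weighted}.

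The main obstacle is making this last elementary estimate clean enough to absorb all numerical constants into $C_p^{BR}$. Specifically, we must check that the bound $(1+I_m/k_m)^a - 1 \le aI_m/k_m$ holds, which is true because $a \le 1$ (Bernoulli's inequality), and that the same bound controls the normalization through $\cumstep_m$. We must also verify the regime $I_m \le k_m$. If $I_m > k_m$, the claimed bound is at least of order one and follows trivially, since $\lVert \hat Z_m - S_m \rVert_{L_p} \leq \lVert \hat Z_m\rVert_{L_p} + \lVert S_m\rVert_{L_p}$ and each term is bounded by $C_p^{BR}\rosenthal{\Gamma^{-1/2}M}$ by Rosenthal's inequality with unit-norm weights.
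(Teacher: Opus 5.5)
Your proposal is correct and follows the paper's proof: triangle inequality through $S_m$, synchronous coupling, the Burkholder--Rosenthal bound for the deterministically weighted martingale sum, and an elementary estimate of $\sum_k c_k^2$. The only difference is in that last estimate: the paper expands $\sqrt{\gamma_k}$ around $\sqrt{\gamma_{k_m}}$ and uses $I_m\le k_m$, whereas your ratio bound together with Bernoulli's inequality $(1+x)^a\le 1+ax$, which holds for every $x\ge 0$ when $a\in(0,1]$, gives $(\sum_k c_k^2)^{1/2}\le aI_m/k_m$ with constant one and no smallness assumption on $I_m/k_m$. You should therefore drop the separate $I_m>k_m$ fallback, which is unnecessary and would in any case only give the constant $2$ rather than $aI_m/k_m$ unless $aI_m/k_m\ge 2$.
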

\begin{proof}
    See \Cref{proof:martingale_clt_weighted}.
\end{proof}

A bound on $\mathcal{W}_1(S_m, Z)$ is obtained using \citep[Theorem 1]{srikant2024CLT}, and we use \Cref{lem:martingale_clt_weighted} to obtain a bound on $\mathcal{W}_1(\hat{Z}_m, Z)$.
The proofs of the remaining lemmas in this section are provided in \Cref{app:clt_weighted}.
\begin{lemma}\label{lem:martingale_clt}
    Let $\{M_k\}$ be a MDS defined in \eqref{eq:Mk}.
    If $\sup_{k \geq 1} \lVert \Gamma^{-1/2} M_k \rVert_{L_3} \leq \beta_3^{\Gamma^{-1/2} M}$, then for some $C_1 > 0$ and sufficiently large $k_m$,
    \begin{equation*}
    \mathcal{W}_1 (\hat{Z}_m, Z) 
        \leq
    C_1 
    \left(\kappa_2^{\Gamma^{-1/2}M} \frac{a I_m}{k_m} 
    +
    \left(\beta_3^{\Gamma^{-1/2} M} \right)^3 \sqrt{\lVert Q \rVert_2 \lVert \Sigma_a \rVert_2} \frac{d \log I_m}{\sqrt{I_m}}
    \right) .
    \end{equation*}
\end{lemma}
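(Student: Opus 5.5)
The plan is to reduce the claim to \Cref{lem:martingale_clt_weighted} with $p=1$, and then bound the unweighted term $\mathcal{W}_1(S_m, Z)$ using the non-asymptotic martingale CLT of \citep[Theorem 1]{srikant2024CLT}. Applying \Cref{lem:martingale_clt_weighted} with $p = 1$ gives
\begin{equation*}
\mathcal{W}_1(\hat{Z}_m, Z) \leq C_1^{BR}\, \kappa_2^{\Gamma^{-1/2}M} \frac{a I_m}{k_m} + \mathcal{W}_1(S_m, Z),
\end{equation*}
which already produces the first term of the claimed bound. Here we use that for $p<2$ the constant $\rosenthal{\cdot}$ is interpreted as $\kappa_2$, as stipulated in \Cref{ass:martingale:rosenthal}. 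The $\mathcal{W}_1$ term is finite because $M_k$ has finite third moments. All remaining work goes into bounding $\mathcal{W}_1(S_m, Z)$ by a multiple of $(\beta_3^{\Gamma^{-1/2}M})^3 \, d\log I_m / \sqrt{I_m}$.

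For this step I would apply \citep[Theorem 1]{srikant2024CLT} to the normalized martingale differences $\Gamma^{-1/2}M_k$, $k \in [k_m, k_{m+1})$. That theorem gives a bound on the $\mathcal{W}_1$ distance between $I_m^{-1/2}\sum_k \Gamma^{-1/2}M_k$ and a standard Gaussian. The bound consists of a third-moment (Lyapunov-type) term, of order $d \log I_m \cdot \sup_k \mathbb{E}\lVert \Gamma^{-1/2}M_k\rVert^3 / \sqrt{I_m}$, and a term controlling the deviation of the conditional covariances. The second term involves quantities like $\sum_k \lVert \mathbb{E}[\Gamma_k] - \Gamma\rVert / I_m$ together with a fluctuation term for $\Gamma_k$.

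To handle the covariance term, I would use the fact recorded after \eqref{eq:Mk} that $\lVert \Gamma_k - \Gamma\rVert$ is summable, using \citep[Theorem 3]{srikant2024CLT}. This rests on the geometric ergodicity from \Cref{ass:noise_markov} and on \Cref{ass:martingale:covariance}. Summability gives a contribution of order $1/I_m$ or $\rho^{k_m}/I_m$, which is dominated by $d\log I_m/\sqrt{I_m}$ once $k_m$ and $I_m$ are large. This is exactly where the hypothesis ``sufficiently large $k_m$'' enters. Constants such as $\beta_3^{-1}$ coming from normalization can be absorbed, since $\beta_3^{\Gamma^{-1/2}M} \geq (\mathbb{E}\lVert\Gamma^{-1/2}M\rVert^2)^{1/2}$, which is bounded below for large $k$.

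Finally, I would convert between the Euclidean norm, used in the cited theorem, and the weighted norm $\lVert\cdot\rVert$ of \eqref{eq:weighted_norm}. This conversion, together with how the limiting covariance is scaled, is what produces the factor $\sqrt{\lVert Q\rVert_2\lVert\Sigma_a\rVert_2}$.

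The main obstacle is matching the precise form of \citep[Theorem 1]{srikant2024CLT}. Its covariance-fluctuation term involves conditional covariances of a Markov-driven martingale, and it must be shown to be of lower order uniformly in $k_m$. I would first try to bound that term through the Poisson-equation representation of $\Gamma_k - \Gamma$ together with a second application of Rosenthal's inequality, which should yield an $O(I_m^{-1/2})$ contribution. That contribution can then be absorbed into the $d\log I_m/\sqrt{I_m}$ term by enlarging $C_1$.
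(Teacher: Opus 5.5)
Your plan is essentially the paper's proof. It applies \Cref{lem:martingale_clt_weighted} with $p=1$ (so the constant is $\kappa_2$) and then \citep[Theorem 1]{srikant2024CLT} for $S_m$. The covariance term is controlled by the geometric bound $\lVert \mathbb{E}\Gamma_k - \Gamma \rVert_{HS} \leq C\rho^{k}$ from \citep[Theorem 3]{srikant2024CLT} and \Cref{ass:martingale:covariance}, which makes it $\mathcal{O}(\rho^{k_m})$ and absorbable once $k_m$ is large.

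In the paper's use of that theorem, the covariance term involves only the unconditional $\mathbb{E}\Gamma_k$, through $I_m^{-1/2}\sum_{k}(k_{m+1}-k)^{-1/2}\lvert \mathrm{Tr}(\Gamma^{-1/2}\mathbb{E}\Gamma_k\Gamma^{-1/2}-\mathbb{I})\rvert$. So the conditional-covariance fluctuation step you single out as the main obstacle never arises. That matters, because your Poisson/Rosenthal fix would only reach the Markov part of $M_k$: the conditional covariance $\Gamma_k^W$ of the $W_k$ component is controlled only in expectation, so its fluctuations could not be bounded that way.
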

The result holds for any initial distribution, where we used the convergence $\mathbb{E} \Gamma_k \to \Gamma$ by \Cref{ass:noise_markov,ass:noise_martingale}.
When $\{M_k\}$ is a sequence of independent random variables, the central limit theorem can be established for $p \geq 2$ Wasserstein distances \citep{bonis2020steinsmethodnormalapproximation}. 
Using \Cref{lem:martingale_clt_weighted}, we establish the corresponding CLT for the weighted sum $\hat{Z}_m$.
\begin{lemma}\label{lem:bonis_weighted}
    Let $p \geq 2$ and assume $\{M_k\}$ is a mean zero i.i.d. sequence with $\lVert M_k \rVert_{L_{p+2}} \leq \beta_{p+2}^M$. 
    There exists a constant $C_p$ such that 
    \begin{equation*}
        \mathcal{W}_p (\hat{Z}_m, Z) 
        \leq 
         C_p \left(\rosenthal{\Gamma^{-1/2}M}  \frac{a I_m}{k_m} 
+
    \sqrt{\lVert Q \rVert_2}\frac{d^{1/4} (\beta_4^{\Gamma^{-1/2}M})^2 + (\beta_{p+2}^{\Gamma^{-1/2}M})^{1+2/p} }{\sqrt{I_m}}
     \right)
. 
    \end{equation*}
\end{lemma}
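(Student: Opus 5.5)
The plan is to combine \Cref{lem:martingale_clt_weighted} with the i.i.d.\ Wasserstein-$p$ central limit theorem of \citep{bonis2020steinsmethodnormalapproximation}. The argument has three steps: reduce to a bound on $\mathcal{W}_p(S_m,Z)$, apply Bonis' theorem to the normalized i.i.d.\ summands, and compare the resulting Euclidean bound with the paper's weighted norm.

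\textbf{Step 1 (reduction).} We apply \Cref{lem:martingale_clt_weighted}. An i.i.d.\ mean-zero sequence with finite $p+2\ge 4$ moments is in particular a martingale difference sequence with respect to its natural filtration. Its conditional covariance is $\Gamma_k=\Gamma$ exactly, so $\rosenthal{\Gamma^{-1/2}M}$ is finite and controlled by $\beta_p^{\Gamma^{-1/2}M}\le\beta_{p+2}^{\Gamma^{-1/2}M}$. The lemma then gives
\begin{equation*}
\mathcal{W}_p(\hat Z_m,Z)\le C_p^{BR}\rosenthal{\Gamma^{-1/2}M}\frac{aI_m}{k_m}+\mathcal{W}_p(S_m,Z),
\end{equation*}
provided $\mathcal{W}_p(S_m,Z)$ is finite. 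Step 2 establishes this finiteness and supplies the second term of the claim.

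\textbf{Step 2 (CLT for $S_m$).} Set $X_k=\Gamma^{-1/2}M_k$. These are i.i.d., mean zero, with identity covariance, and $S_m=I_m^{-1/2}\sum_{k=k_m}^{k_{m+1}-1}X_k$. We invoke the Wasserstein-$p$ bound of \citep{bonis2020steinsmethodnormalapproximation} for normalized sums of i.i.d.\ isotropic vectors. In the Euclidean metric it reads
\begin{equation*}
\mathcal{W}_{p,2}(S_m,Z)\le C_p\, I_m^{-1/2}\Big(d^{1/4}\big\|\mathbb{E}[X X^\dagger\|X\|_2^2]\big\|_{HS}^{1/2}+\big(\mathbb{E}\|X\|_2^{p+2}\big)^{1/p}\Big).
\end{equation*}
Here $\mathcal{W}_{p,2}$ denotes the Wasserstein-$p$ distance with Euclidean cost. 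We bound the first term by $d^{1/4}\big(\mathbb{E}\|X\|_2^4\big)^{1/2}$, which is of order $d^{1/4}(\beta_4)^2$. The second term equals $(\beta_{p+2})^{(p+2)/p}=(\beta_{p+2})^{1+2/p}$.

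\textbf{Step 3 (norm equivalence).} The paper measures $\mathcal{W}_p$ in the weighted norm $\lVert\cdot\rVert$, so we convert the Euclidean bound using $\lVert v\rVert\le\sqrt{\lVert Q\rVert_2}\lVert v\rVert_2$. This produces the factor $\sqrt{\lVert Q\rVert_2}$. The moment constants $\beta$ in the paper's norm and in the Euclidean norm differ only by a factor of $\sqrt{\lVert Q^{-1}\rVert_2}$, which we absorb into $C_p$; equivalently, we may read $\beta$ as Euclidean moments. Combining the three steps gives the stated bound, with the constant $C_p$ absorbing $C_p^{BR}$ together with Bonis' constant.

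The main obstacle is Step 2: matching the precise form of Bonis' theorem to the claimed moment dependence, namely $d^{1/4}\beta_4^2$ and $\beta_{p+2}^{1+2/p}$. The first thing to check is the version of his i.i.d.\ result stated for $p\ge2$ under finite $(p+2)$-th moments, and in particular that its dimension dependence in the fourth-moment term is indeed $d^{1/4}$. Step 3 is routine bookkeeping of norm-equivalence constants.
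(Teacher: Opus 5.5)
Your proposal is correct and follows the paper's proof. The paper likewise applies Bonis' Theorem~1 to $S_m$, converts norms via $\mathcal{W}_p\le\sqrt{\lVert Q\rVert_2}\,\mathcal{W}_{p,2}$, and combines with \Cref{lem:martingale_clt_weighted}, absorbing $C_p^{BR}$ and Bonis' constant into one $C_p$; your reduction $\lVert\mathbb{E}[XX^\dagger\lVert X\rVert_2^2]\rVert_{HS}^{1/2}\le(\mathbb{E}\lVert X\rVert_2^4)^{1/2}$ settles the Step~2 worry, and your note that the $\beta$'s should be read as Euclidean moments (or else $C_p$ picks up powers of $\lVert Q^{-1}\rVert_2$) is a point the paper leaves implicit.
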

The constant $C_p$ in \Cref{lem:bonis_weighted} shares the same asymptotic growth in $p$ as the constant $C_p^{BR}$.

\Cref{lem:martingale_clt,lem:bonis_weighted} provide concrete settings where \Cref{ass:CLT} is satisfied. When \Cref{ass:CLT} holds, \Cref{lem:martingale_clt_weighted} implies that for any integer sequence $\{I_m\}_{m=1}^N$:
\begin{equation}\label{eq:clt_weighted_master}
    \mathcal{W}_p (\hat{Z}_m, Z) \leq  C_p^{BR} \rosenthal{\Gamma^{-1/2} M} \cdot a \frac{I_m}{k_m} + \frac{\clt(I_m)}{\sqrt{I_m}}.
\end{equation}
While we utilize the CLTs in \Cref{lem:martingale_clt,lem:bonis_weighted} to demonstrate the generality of our framework, Assumption 8 is modular. If the sequence $\{M_k\}$ possesses additional structure, such as independence and a Poincaré inequality, the sharper bounds established in \citep{courtade2018existencesteinkernelsspectral,bonis2024improved} can be used to verify \Cref{ass:CLT} with improved dependencies on the dimension $d$ and order $p$. Consequently, applying our main theorems in these settings simply requires checking \Crefrange{ass:step_size}{ass:yn_moment} alongside the specific conditions of the relevant CLT result.

\subsection{Related Work}
Our work is at the intersection of three major lines of inquiry in stochastic approximation theory: the asymptotic theory of weak convergence, finite-sample moment bounds, and non-asymptotic central limit theorems.

A central goal of SA theory is to characterize the full distribution of the iterates beyond their almost sure convergence.
The idea of using a diffusion approximation to connect the discrete-time recursion to a limiting Ornstein-Uhlenbeck (OU) process has a rich history, particularly in establishing weak convergence via functional central limit theorems (FCLTs) when the driving noise is a sequence of independent random variables \citep{pezeshki1997strong,joslin2000law,bucklew2002weak}.
The FCLT for stochastic approximation was recently extended to settings where the driving noise is Markovian \citep{borkar25}.
Such convergence results provide a much more complete picture of the dynamics, guaranteeing that the sample path fluctuations can be approximated by a stochastic differential equation (SDE). 
While these results focus on the asymptotic behavior of the sample path, our work leverages this connection to analyze the convergence rate of the last iterate. 
By providing convergence rates in the Wasserstein-$p$ distances, we are able to obtain tail probability bounds for finite $n$.

In response to the need for finite-sample guarantees, a parallel line of research has focused on deriving explicit moment bounds. 
This approach has been highly successful for stochastic gradient descent \citep{moulines_bach,bach2013non} and has been extended to challenging settings with dependent noise, including Markovian dynamics \citep{srikant19,chen2022finitesampleanalysisnonlinearstochastic,thinhMarkovGD}. 
While this body of work provides invaluable concrete performance bounds, its guarantees are distinct from distributional convergence.
Moment bounds concern specific statistics of the error, and the analyses typically culminate in $\limsup$ bounds on the moments of $y_n$ but not the convergence of the moments.

\Nonasymptotic CLTs for stochastic approximation have emerged to obtain sharper tail bounds than what can be obtained using Markov's inequality and moment bounds, or convergence of the moments. 
For the last iterates of linear SA, \citet{butyrin2025gaussian} established a finite-time bound in a convex distance which metrizes a topology weaker than the $\mathcal{W}_1$ distance \citep{nourdin2021multivariatenormalapproximationwiener}.
For \nonlinear SA with additive i.i.d. noise, \citet{sheshukova2025gaussianapproximationmultiplierbootstrap} established finite-time bounds for the PR average in the same convex distance.
For linear SA with more general noise structures, convergence rates have been established in the stronger $\mathcal{W}_1$ distance: \citep{srikant2024CLT} for multiplicative Markov noise and \citep{kong2025nonasymptoticclterrorbounds} for (additive) MDS noise.

We develop a \nonasymptotic theory of weak convergence for general nonlinear SA, for both the last iterate and its PR average, established in the strong $\mathcal{W}_p$ metric.
Our approach builds upon the diffusion approximation central to the FCLT, which connects the discrete recursion to a limiting SDE. 
However, rather than establishing weak convergence of the entire sample path in the limit, we utilize the coupling to derive explicit finite-sample bounds in the Wasserstein-$p$ metric. 
This converts the asymptotic qualitative insights of the diffusion framework into the algorithm's performance at a fixed iteration $n$.

\section{Main Results}\label{sec:results}
Our main result is a general, \nonasymptotic bound on the Wasserstein-$p$ distance between the SA output and its stationary distribution.

\subsection{Last Iterate Analysis}\label{sec:general}
We show in \Cref{app:error_recursion} that the sequence $\{y_k\}$ satisfies the recursion
\begin{equation}\label{eq:yk}
\begin{split}
    y_{k+1} &= y_k - \gamma_k \bar{A}_a y_k + \sqrt{\gamma_k} M_k + R (\gamma_k, y_k, \xi_k, W_k)  ,
    \qquad 
    y_1 = \frac{x_1 - x^*}{\sqrt{\gamma_1}} ,
\end{split}
\end{equation}
where $M_k$ was defined in \eqref{eq:Mk} and $R$, whose expression is deferred to \eqref{eq:Rk}, encapsulates higher order terms depending on the structure assumed for $f$. 
Neglecting the higher order term, the recursion \eqref{eq:yk} resembles a discretized-form of the Ornstein-Uhlenbeck (OU) process.

To view the sequence \eqref{eq:yk} as a discretized OU process, one must consider intervals of increasing length \citep{bertsekas_ndp}. 
To this end, we consider for $n \geq 1$ a sequence $\{k_m\}_{m=1}^{N+1}$ that partitions $\{1, \cdots, n\}$ with intervals of length $I_m = k_{m+1} - k_m$ as discussed in \Cref{sec:nonasymptotic_CLT}. 
A sub-sequence $\{y_{k_m}\}$ is obtained as a telescopic sum of the sequence \eqref{eq:yk} over the interval $[k_m, k_{m+1}]$:
\begin{equation}\label{eq:ytildem}
    y_{k_{m+1}} 
    = y_{k_m} - \cumstep_m \bar{A}_a y_{k_m} + \sqrt{\gamma_k \Gamma} \hat{Z}_m + \tilde{R}_m ,     
\end{equation}
where $\cumstep_m = \sum_{k=k_m}^{k_{m+1} - 1} \gamma_k$ is the cumulative \stepsize.
We show that the higher order term is negligible when $I_m$ is chosen so that $\cumstep_m \approx \gamma_{k_m} I_m \to 0$.

Under \Cref{ass:yn_moment}, we use $\{\beta_p^y (m)\}$ and $\{\beta_{2p}^y (m)\} $ to denote sequences such that
\begin{equation}\label{eq:yn_interval_bound}
    \max_{k \in [k_m, k_{m+1})} \lVert y_k \rVert_{L_p}
    \leq \beta_p^y (m) \lVert \Sigma_a^{1/2} Z \rVert_{L_p}    
    \quad 
    \text{and} \quad
    \max_{k \in [k_m, k_{m+1})} \lVert y_k \rVert_{L_{2p}} \leq  
    \beta_{2p}^y (m) \lVert \Sigma_a^{1/2} Z \rVert_{L_{2p}}
    .
\end{equation}
For $a \in (0, 1]$ in \Cref{ass:step_size}, define the sequence $\{E_k^a\}$ as
\begin{equation}\label{eq:Eka_def}
     E_k^a = \left\lVert \sqrt{\frac{\gamma_k}{\gamma_{k+1}}} \left( \identity - \gamma_k \nabla \bar{f}(x^*) \right) - \left(\identity - \gamma_k \bar{A}_a \right) \right\rVert ,
\end{equation}
where $E_k^a = \mathcal{O}(1/k)$ when $a < 1$ and $E_k^a = \mathcal{O}(1/k^2)$ when $a = 1$.
\begin{lemma}\label{lem:general_remainder}
    Suppose \Crefrange{ass:step_size}{ass:yn_moment} hold, and assume that the sequence $\{\beta_{2p}^y(m)\}$ is bounded.
    For any increasing sequence $\{I_m\}$ that satisfies $I_m \gamma_{k_m} \to 0$,
    \begin{equation*}
    \begin{split}
    \Lp{\tilde{R}_m} 
    &\leq \lVert \Sigma_a^{1/2}Z \rVert_{L_p} \cdot I_m E_{k_m}^a \beta_p^y (m) + 4 \beta_p^{\Phi} \sqrt{\gamma_{k_m}}
    + 
    C_p^{BR} \rosenthal{M} \lVert \bar{A}_a \rVert \cumstep_m^{3/2}  \\ 
    &
    + \mathcal{O}\left(\cumstep_{m} I_m E_{k_m}^a 
    + \sqrt{\gamma_{k_m}} \frac{I_m}{k_m} + \sqrt{\gamma_{k_m} \cumstep_m}
    + \cumstep_m^2
    \right)
    .
    \end{split}
    \end{equation*}
\end{lemma}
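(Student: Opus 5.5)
I would obtain $\tilde{R}_m$ explicitly by telescoping \eqref{eq:yk} over $[k_m,k_{m+1})$ and then bound each piece in $L_p$ with Minkowski's inequality. Summing \eqref{eq:yk} gives
\begin{equation*}
    y_{k_{m+1}} - y_{k_m} = -\sum_{k=k_m}^{k_{m+1}-1}\gamma_k \bar{A}_a y_k + \sum_{k=k_m}^{k_{m+1}-1}\sqrt{\gamma_k} M_k + \sum_{k=k_m}^{k_{m+1}-1} R(\gamma_k,y_k,\xi_k,W_k),
\end{equation*}
so that
\begin{equation*}
    \tilde{R}_m = -\bar{A}_a\sum_{k}\gamma_k (y_k - y_{k_m}) + \sum_k R(\gamma_k,y_k,\xi_k,W_k),
\end{equation*}
with the martingale part absorbed exactly into $\sqrt{\cumstep_m\Gamma}\hat Z_m$. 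I expect $R$ in \eqref{eq:Rk} to consist of four pieces: (i) the rescaling mismatch $\sqrt{\gamma_k/\gamma_{k+1}}(\identity-\gamma_k\nabla\bar f(x^*))-(\identity-\gamma_k\bar A_a)$ acting on $y_k$; (ii) the Poisson-equation boundary terms $\sqrt{\gamma_k}[(P\Phi)(\xi_{k-1})-(P\Phi)(\xi_k)]$ with the $\sqrt{\gamma_k/\gamma_{k+1}}$ factors; (iii) the multiplicative noise $\sqrt{\gamma_k}\,(A(\xi_k)-\bar A)y_k$, rewritten via $\Phi^A$; and (iv) the Taylor remainder of $f$, which is quadratic in $x_k-x^*$, i.e.\ of size $\gamma_k^{3/2}\lVert y_k\rVert^2$ by \Cref{ass:jacobian}.

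The terms would then be bounded one at a time. Term (i) is at most $\sum_k E_k^a\lVert y_k\rVert_{L_p}\le I_mE^a_{k_m}\beta_p^y(m)\lVert\Sigma_a^{1/2}Z\rVert_{L_p}$ because $E_k^a$ is monotone; this gives the leading term of the bound. Term (ii) telescopes, leaving two boundary terms each bounded by $2\beta_p^\Phi\sqrt{\gamma_{k_m}}$, which gives $4\beta_p^\Phi\sqrt{\gamma_{k_m}}$. The residual from differences of consecutive $\sqrt{\gamma_k}$ is $\sum_k|\sqrt{\gamma_k}-\sqrt{\gamma_{k+1}}|\lesssim\sqrt{\gamma_{k_m}}I_m/k_m$. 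For term (iii), I would apply the Poisson equation for $\Phi^A$, sum by parts, and use \eqref{eq:holder} with $\beta^y_{2p}$ bounded. This produces a martingale sum with $L_p$ norm $\lesssim\sqrt{\gamma_{k_m}\cumstep_m}$ by Rosenthal, boundary terms of order $\sqrt{\gamma_{k_m}}$, and increments $\Phi^A(\xi_k)(y_{k+1}-y_k)$ that are smaller. Term (iv) is $\lesssim\sum_k\gamma_k^{3/2}\lVert y_k\rVert_{L_{2p}}^2$, which is absorbed into the $\mathcal O$ terms.

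The main step is the drift term $\bar A_a\sum_k\gamma_k(y_k-y_{k_m})$, and it is where I expect the most care to be needed. The idea is to write $y_k-y_{k_m}$ with the same telescoping identity over $[k_m,k)$. Its martingale part $\sum_{j<k}\sqrt{\gamma_j}M_j$ has $L_p$ norm at most $C_p^{BR}\rosenthal{M}\sqrt{\sum_{j<k}\gamma_j}$ by the martingale Rosenthal inequality (\Cref{sec:BDG}). Multiplying by $\gamma_k$ and summing gives $\lVert\bar A_a\rVert C_p^{BR}\rosenthal{M}\cumstep_m^{3/2}$, which is the explicit third term. The drift part contributes $\lesssim\cumstep_m^2$, using boundedness of $\beta_p^y$. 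The remainder part within the inner interval contributes $\cumstep_m$ times the bounds above, for example $\cumstep_mI_mE^a_{k_m}$ and $\sqrt{\gamma_{k_m}}\cumstep_m$. The delicate point is that this inner recursion should not be iterated: I would bound the inner $R$ crudely, as above, so that only lower-order products appear, all of which are $o(1)$ since $I_m\gamma_{k_m}\to0$.
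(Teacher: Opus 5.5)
Your proposal is correct and follows essentially the paper's route. You split $R$ into the rescaling mismatch, the Poisson boundary terms handled by summation by parts, the $\Phi^A$-Poisson treatment of the multiplicative noise, and the Taylor remainders. You handle the drift term (f) by re-telescoping $y_k-y_{k_m}$: Rosenthal on the martingale part gives $C_p^{BR}\rosenthal{M}\lVert\bar A_a\rVert\cumstep_m^{3/2}$, and the inner remainder contributes $\cumstep_m$ times the partial-sum bounds.

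Two bookkeeping points need correcting.

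\begin{enumerate}
\item The multiplicative-noise piece carries the prefactor $\frac{\gamma_k}{\sqrt{\gamma_{k+1}}}\sqrt{\gamma_k}\approx\gamma_k$, not $\sqrt{\gamma_k}$. Your own $\sqrt{\gamma_{k_m}\cumstep_m}$ martingale estimate already implicitly uses $\gamma_k$. With the correct prefactor, the summation-by-parts boundary terms are $O(\gamma_{k_m})$ rather than $O(\sqrt{\gamma_{k_m}})$, and the increment terms are $O(\sqrt{\gamma_{k_m}}\cumstep_m)$. This matters because the stated bound has no spare $O(\sqrt{\gamma_{k_m}})$ or $O(\cumstep_m)$ slot beyond $4\beta_p^\Phi\sqrt{\gamma_{k_m}}$.

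\item Since the martingale part of \eqref{eq:yk} is $\sqrt{\gamma_k}M_k$, $R$ also contains the mismatch $\sqrt{\gamma_k}\bigl(\sqrt{\gamma_k/\gamma_{k+1}}-1\bigr)M_k$ (term (c) in \eqref{eq:Rk}), which is missing from your four pieces. Rosenthal bounds it by $O(\sqrt{\cumstep_m}/k_m)$, so it is absorbed into $O(\sqrt{\gamma_{k_m}}I_m/k_m)$.
\end{enumerate}
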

\begin{proof}
    See \Cref{app:general_remainder}.
\end{proof}
The sequence $\{y_{k_m}\}$ will be compared to the Euler-Maruyama discretization $\{u_m\}_{m=1}^{N+1}$ of an OU process $(U_t)$, described by the equations
\begin{equation*}
\begin{split}
    u_{m+1} &= u_m - \cumstep_m \bar{A}_a u_m + \sqrt{\cumstep_m \Gamma} Z_m, \qquad Z_m \sim \mathcal{N}(0, \mathbb{I})     
    \\ 
    dU_t &= -\bar{A}_a U_t dt + \sqrt{\Gamma} dB_t ,
\end{split}
\end{equation*}
where $(B_t)$ is the standard Brownian motion. 
Since $I_m$ is chosen such that $\cumstep_m \to 0$, the discretized OU process converges to the same limit as the continuous process. 
For the following, we use $U_\infty$ to denote the stationary limit of the OU process.  
The Burkholder-Davis-Gundy (BDG) inequality \citep[Theorem 5.16]{le2016brownian} is used to obtain a bound on the Wasserstein-$p$ distance between the discretized and continuous time OU processes, and we use $C_p^{BDG}$ to denote the constant that arises in the inequality as discussed in \Cref{app:ou_discretization}.

The following uses \Cref{lem:general_remainder} to deduce the bound $\Lp{\tilde{R}_m} = o(\epsilon_m)$, which holds when $I_m$ is such that $I_m \sqrt{\gamma_{k_m}} \to 0$.
\begin{lemma}\label{lem:Wp_recursion}
    Suppose \Crefrange{ass:step_size}{ass:last} hold, and assume $\{I_m\}$ is such that $I_m \sqrt{\gamma_{k_m}} \to 0$. 
    Under the assumptions of \Cref{lem:general_remainder}, the sequence \eqref{eq:ytildem} satisfies  
    \begin{equation}\label{eq:Wp_recursion}
    \begin{split}
    \mathcal{W}_{p} (y_{n}, U_\infty)
    &\leq
    K e^{-\lambdaCT \sum_{m=1}^N \cumstep_m} \mathcal{W}_{p} \left(y_1, U_\infty \right) 
    + \frac{K C_p^{BDG}}{\lambdaCT} \frac{\lVert \bar{A}_a \rVert}{\lambdaDT} \sqrt{\frac{\lVert Q \rVert_2\mathrm{Tr} \Gamma}{3}}\sqrt{\cumstep_N}
    \\ &
    + \frac{1}{\lambdaDT \cumstep_N} \left(
    \sqrt{\lVert \Gamma \rVert \cumstep_N} \left(C_{p}^{BR} \rosenthal{M} \cdot a \frac{I_N}{k_N} +  \frac{\clt(I_N)}{\sqrt{I_N}} \right)
    + \lVert \tilde{R}_N \rVert_{L_p}
    \right)    
    \\ &
    + \mathcal{O}\left(\cumstep_N 
    +
    \frac{1}{\cumstep_N^2 k_N} \left( \sqrt{\cumstep_N} \left(\frac{I_N}{k_N} + \frac{ \clt (I_N)}{\sqrt{I_N}} \right) + \lVert \tilde{R}_N \rVert_{L_p} \right) 
    \right)
    .
    \end{split}
    \end{equation}
\end{lemma}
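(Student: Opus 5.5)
Our plan is a triangle inequality through the discretized OU chain $\{u_m\}$, combined with a synchronous coupling of the blockwise recursion \eqref{eq:ytildem} with the Euler--Maruyama recursion. We bound
\begin{equation*}
\mathcal{W}_p(y_n, U_\infty) \leq \mathcal{W}_p(y_{k_{N+1}}, u_{N+1}) + \mathcal{W}_p(u_{N+1}, U_{T_N}) + \mathcal{W}_p(U_{T_N}, U_\infty),
\end{equation*}
where $T_N = \sum_{m=1}^N \cumstep_m$. We initialize $u_1 = y_1$ and $U_0 = y_1$. The last term is handled by coupling two OU processes driven by the same Brownian motion, started from $y_1$ and from a stationary draw. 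Their difference is $e^{-\bar{A}_a T_N}(y_1 - U_\infty)$, so exponential stability gives the term $K e^{-\lambdaCT T_N}\mathcal{W}_p(y_1,U_\infty)$. The middle term is the Euler--Maruyama discretization error. On each block we write the exact OU step as $e^{-\bar{A}_a\cumstep_m}U + \int e^{-\bar{A}_a(\cdot)}\sqrt\Gamma\, dB$. We compare it with $(\identity-\cumstep_m\bar{A}_a)u + \sqrt{\cumstep_m\Gamma}Z_m$, taking $Z_m$ to be the normalized Brownian increment. The local error of the stochastic integral has $L_p$ norm of order $C_p^{BDG}\lVert\bar{A}_a\rVert\sqrt{\lVert Q\rVert_2 \mathrm{Tr}\Gamma/3}\,\cumstep_m^{3/2}$, by BDG applied to $\int_0^{\cumstep}(e^{-\bar{A}_a s}-\identity)\sqrt\Gamma\, dB_s$; the factor $1/3$ comes from $\int_0^{\cumstep} s^2\,ds$. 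The drift error is $\mathcal{O}(\cumstep_m^2)$. Propagating these with the continuous contraction $Ke^{-\lambdaCT t}$ and summing a geometric-type series turns $\sum_m e^{-\lambdaCT(T_N-T_{m+1})}\cumstep_m^{3/2}$ into roughly $\sqrt{\cumstep_N}/\lambdaCT$, plus $\mathcal{O}(\cumstep_N)$. This uses the fact that $\cumstep_m$ is slowly increasing.

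For the first term we couple block by block. Conditionally on $\mathcal{H}_{k_m-1}$, we take $Z_m$ to be an optimal $\mathcal{W}_p$ coupling of $\hat Z_m$ with a standard Gaussian. This relies on the conditional form of the CLT in \Cref{ass:CLT} with the weighted correction \eqref{eq:clt_weighted_master}; gluing gives a joint coupling. Writing $e_m = y_{k_m}-u_m$, we get
\begin{equation*}
e_{m+1} = (\identity-\cumstep_m\bar{A}_a)e_m + \sqrt{\cumstep_m\Gamma}(\hat Z_m - Z_m) + \tilde R_m .
\end{equation*}
Applying \eqref{eq:contraction} and Minkowski yields
\begin{equation*}
\lVert e_{m+1}\rVert_{L_p} \leq (1-\lambdaDT\cumstep_m)\lVert e_m\rVert_{L_p} + \delta_m,
\end{equation*}
with $\delta_m = \sqrt{\lVert\Gamma\rVert\cumstep_m}\,\mathcal{W}_p(\hat Z_m,Z) + \lVert\tilde R_m\rVert_{L_p}$. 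Unrolling gives $\lVert e_{N+1}\rVert_{L_p} \le \sum_m \prod_{j>m}(1-\lambdaDT\cumstep_j)\delta_m$. This discrete-Gronwall sum is bounded by $\delta_N/(\lambdaDT\cumstep_N)$ plus a correction from the variation of $\delta_m/\cumstep_m$ across blocks. The standard lemma comparing $\sum_m \prod(1-\lambda \cumstep_j)\delta_m$ to $\delta_N/(\lambda\cumstep_N)$ has a remainder of order $(\delta_N/\cumstep_N)\cdot(\text{relative change of }\cumstep_m)/\cumstep_N$. Since $\cumstep_m$ changes by relative amount $\mathcal{O}(1/k_m)$ per block, this gives the $\mathcal{O}(\delta_N/(\cumstep_N^2 k_N))$ term in \eqref{eq:Wp_recursion}. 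Substituting \eqref{eq:clt_weighted_master} for $\mathcal{W}_p(\hat Z_N,Z)$ then produces the displayed middle line.

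The main obstacle is the coupling construction. \Cref{ass:CLT} is stated unconditionally, whereas the recursion needs $Z_m$ independent of $\mathcal{H}_{k_m-1}$ and hence of $e_m$. We would handle this by coupling the whole vector $(\hat Z_1,\dots,\hat Z_N)$ with i.i.d. Gaussians sequentially, using the martingale structure. If needed, we would accept that only $L_p$ norms of $\hat Z_m - Z_m$ enter, which is all Minkowski requires. Independence of $Z_m$ from $e_m$ is needed only to make $\{u_m\}$ a genuine Euler--Maruyama chain, and this holds if each $Z_m$ is constructed with fresh randomness matched to $\hat Z_m$. A secondary point is checking that $I_m\sqrt{\gamma_{k_m}}\to0$ makes $\lVert\tilde R_m\rVert_{L_p}=o(\cumstep_m^{1/2})$ via \Cref{lem:general_remainder}, so that the Gronwall sum converges.
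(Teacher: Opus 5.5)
Your proposal is correct and follows essentially the paper's proof. It uses the same three-way triangle inequality through the Euler--Maruyama chain, synchronous couplings (with BDG and exponential stability) for the two OU terms, and the blockwise contraction recursion driven by \eqref{eq:clt_weighted_master} and $\lVert \tilde R_m\rVert_{L_p}$, solved via \Cref{lem:recursion}. Your sequential conditional coupling just makes explicit the paper's remark that the CLT holds uniformly over $\mathcal{H}_{k_m}$. Propagating the discretization error with $Ke^{-\lambdaCT t}$ instead of the discrete contraction only changes constants.

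One small correction concerns the Gronwall step. $\cumstep_m \approx \gamma_{k_m} I_m$ is slowly \emph{decreasing}, not increasing. Its relative change per block is $\Theta(I_m/k_m)$ rather than $\mathcal{O}(1/k_m)$, so the correction is really $\mathcal{O}\bigl(I_N\delta_N/(\cumstep_N^2 k_N)\bigr)$. This is still lower order for $a<1$, because $\gamma_{k_N}k_N\to\infty$, and the paper's stated remainder has the same understatement.
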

\begin{proof}
    See \Cref{app:Wp_recursion}.
\end{proof}
We now present our first convergence result.
While the error bound in \Cref{lem:Wp_recursion} holds for general noise sequences that satisfy \Cref{ass:CLT}, the following theorem instantiates this bound for the concrete noise settings established in \Cref{lem:martingale_clt,lem:bonis_weighted}. 
\begin{theorem}\label{thm:general}
    Under the assumptions of \Cref{lem:general_remainder}, the following convergence rates hold depending on the structure of the noise sequence $\{M_k\}$:

    \begin{enumerate}
        \item General Noise: If $\{M_k\}$ satisfies the conditions of \Cref{lem:martingale_clt}, then
        \begin{equation}\label{eq:W1_main}
        \begin{split}
            \mathcal{W}_1 (y_n, U_\infty) 
            =
            \mathcal{O}\left(\gamma_n^{1/6} \left(\log \gamma_n \right)^{1/3}
            + \frac{E_n^a}{\gamma_n} \right) 
            .
        \end{split}
        \end{equation}

        \item Independent Noise Sequence: If $\{M_k\}$ is an i.i.d. sequence satisfying \Cref{lem:bonis_weighted}, then for all $p \geq 2$,
        \begin{equation}\label{eq:Wp_main}
        \mathcal{W}_p (y_n, U_\infty)  
        \leq
        \Upsilon_{p,d} \cdot \Xi_{p, d}\cdot \gamma_n^{1/6}
        +
        \frac{\beta_p^y (N) \lVert \Sigma_a^{1/2}Z\rVert_{L_p}}{\lambdaDT}\cdot \frac{E_{n}^a}{\gamma_n}
        + 
        \mathcal{O}\left( \gamma_n^{1/3} \right), 
        \end{equation}        
        where $\Upsilon_{p,d}$ and $\Xi_{p,d}$ are defined as
        \begin{equation*}
        \begin{split}
    \Upsilon_{p,d} &= 
     \frac{2}{\lambdaDT} \left(\lVert \bar{A}_a \rVert \left(K \frac{C_p^{BDG}}{ \lambdaCT} \sqrt{\frac{\lVert Q \rVert_2\mathrm{Tr} \Gamma}{3}}  + C_p^{BR} \rosenthal{M} \right) \right)^{2/3}   
\\
        \Xi_{p,d} &=
        \left(  C_p \sqrt{\lVert Q \rVert_2 \lVert \Gamma \rVert}
        \left(d^{1/4} (\beta_4^{\Gamma^{-1/2}M})^2 + (\beta_{p+2}^{\Gamma^{-1/2}M})^{1+2/p} \right)  
        + 4 \beta_p^f\right)^{1/3}         
        .
        \end{split}       
        \end{equation*}

    \end{enumerate}    
\end{theorem}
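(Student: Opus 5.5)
The plan is to read \Cref{thm:general} as an instantiation of \Cref{lem:Wp_recursion}. Only one free parameter remains, the length $I_N$ of the last block, and we choose it to balance the competing error terms.

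First, the partition. Because $y_n$ is a last iterate, only the final block $[k_N, k_{N+1})$ with $k_{N+1}=n$ enters the leading terms of \eqref{eq:Wp_recursion}. The earlier blocks only supply the factor $e^{-\lambdaCT\sum_m \cumstep_m}$. We take blocks of length $I_m = \lceil \gamma_{k_m}^{-\theta}\rceil$ for an exponent $\theta\in(0,1/2)$, fixed later. This gives $I_m\sqrt{\gamma_{k_m}}\to 0$, as \Cref{lem:Wp_recursion} requires, and $\cumstep_m\approx \gamma_{k_m} I_m \approx \gamma_n^{1-\theta}$ for the last block. Since $\sum_m\cumstep_m=\sum_{k<n}\gamma_k\to\infty$, at least logarithmically when $a=1$, the initial-condition term $K e^{-\lambdaCT\sum\cumstep_m}\mathcal{W}_p(y_1,U_\infty)$ decays faster than any power of $\gamma_n$ (polynomially fast when $a=1$ with $\gamma_1$ large), so it is absorbed into the remainder. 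Moreover $I_N/k_N\approx \gamma_n^{-\theta} n^{-1}$ is of smaller order than $\gamma_n^{1/3}$ whenever $\theta$ is moderate, because $\gamma_n\ge \gamma_1/n$.

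Second, we insert \Cref{lem:general_remainder} into the term $\lVert\tilde R_N\rVert_{L_p}/(\lambdaDT\cumstep_N)$ of \eqref{eq:Wp_recursion}. The component $\lVert\Sigma_a^{1/2}Z\rVert_{L_p} I_N E^a_{k_N}\beta^y_p(N)$, divided by $\cumstep_N\approx\gamma_n I_N$, gives exactly the displayed term $\beta_p^y(N)\lVert\Sigma_a^{1/2}Z\rVert_{L_p}E_n^a/(\lambdaDT\gamma_n)$. The other components contribute as follows:
\begin{itemize}
\item $4\beta_p^\Phi\sqrt{\gamma_{k_N}}/\cumstep_N\approx \gamma_n^{-1/2}I_N^{-1}$;
\item $C_p^{BR}\rosenthal{M}\lVert\bar A_a\rVert\cumstep_N^{1/2}$;
\item the CLT term $\sqrt{\lVert\Gamma\rVert}\,\clt(I_N)/\sqrt{\cumstep_N I_N}$.
\end{itemize}
The BDG discretization term is likewise of order $\sqrt{\cumstep_N}$. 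The leading error is therefore
\begin{equation*}
c_1\sqrt{\gamma_n I_N} + \frac{c_2}{\sqrt{\gamma_n}\, I_N},
\end{equation*}
where $c_1$ collects the BDG and $C_p^{BR}\rosenthal{M}\lVert \bar A_a\rVert$ constants and $c_2$ collects the CLT constant together with $4\beta_p^f$. Here we use $\clt(I)\lesssim$ const in the independent case by \Cref{lem:bonis_weighted}, and $\clt(I)\lesssim \log I$ in the general case by \Cref{lem:martingale_clt}. Balancing the two terms gives $I_N \asymp (c_2/c_1)^{2/3}\gamma_n^{-2/3}$, i.e. $\theta=2/3$. We must then check that this respects $I_N\sqrt{\gamma_{k_N}}\to0$: $\gamma_n^{-2/3}\cdot\gamma_n^{1/2}=\gamma_n^{-1/6}$ fails. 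So we instead take $I_N\asymp\gamma_n^{-1/3}$ up to constants, for which $\sqrt{\gamma_n I_N}\asymp\gamma_n^{1/3}$ and $1/(\sqrt{\gamma_n}I_N)\asymp\gamma_n^{-1/6}$. This does not balance either, so the bookkeeping needs care: the CLT term should really be $\clt(I_N)/\sqrt{I_N}\cdot\sqrt{\cumstep_N}/\cumstep_N = \clt(I_N)/\sqrt{I_N\cumstep_N}\asymp 1/(I_N\sqrt{\gamma_n})$, and the $\sqrt{\gamma_{k_N}}/\cumstep_N$ term is similar. I expect this exponent matching to be the main obstacle. The plan is to optimize $c_1 x^{1/2} + c_2 x^{-1}$ in $x=\gamma_n I_N$ (more precisely, trading $\sqrt{\cumstep_N}$ against $(\cumstep_N)^{-1}\sqrt{\gamma_n}$ with $\cumstep_N=\gamma_nI_N$). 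This yields $x\asymp (c_2/c_1)^{2/3}\gamma_n^{1/3}$ and total error $\asymp c_1^{2/3}c_2^{1/3}\gamma_n^{1/6}$, which matches the structure $\Upsilon_{p,d}\Xi_{p,d}\gamma_n^{1/6}$: the $2/3$ power sits on the $c_1$ constants and the $1/3$ power on the CLT and $\beta^f_p$ constants. This choice gives $I_N\asymp\gamma_n^{-2/3}$, and the compatibility with $I_N\sqrt{\gamma_{k_N}}\to0$ must then be re-examined, possibly by relaxing that hypothesis to the weaker $I_N\gamma_{k_N}\to0$ actually used in \Cref{lem:general_remainder}.

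Finally, we verify the remainder. The terms $\mathcal{O}(\cumstep_N)$, $\cumstep_N I_N E^a/\cumstep_N$, $\sqrt{\gamma\cumstep}/\cumstep$ cross terms, $\cumstep_N^2/\cumstep_N$, and the $1/(\cumstep_N^2k_N)$ terms are all $\mathcal{O}(\gamma_n^{1/3})$ under $\cumstep_N\asymp\gamma_n^{1/3}$, using $k_N\approx n$ and $\gamma_n\gtrsim 1/n$. This yields \eqref{eq:Wp_main}. For the general-noise case, $\clt(I)\asymp d\log I$ turns $c_2^{1/3}$ into $(\log\gamma_n^{-1})^{1/3}$, giving \eqref{eq:W1_main}.
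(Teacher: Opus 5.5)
Your route is the paper's: substitute \Cref{lem:general_remainder} into \Cref{lem:Wp_recursion}, balance the $\sqrt{\cumstep_N}$ terms (BDG and $C_p^{BR}\rosenthal{M}\lVert \bar{A}_a\rVert$) against the $\sqrt{\gamma_n}/\cumstep_N$ terms (the CLT term and $4\beta_p^\Phi\sqrt{\gamma_{k_N}}$), and take $I_N\asymp (c_2/c_1)^{2/3}\gamma_n^{-2/3}$ to get $c_1^{2/3}c_2^{1/3}\gamma_n^{1/6}$. The step you leave open should not be closed the way you suggest. The condition $I_m\sqrt{\gamma_{k_m}}\to 0$ printed in \Cref{lem:Wp_recursion} is inverted relative to what its proof uses. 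To solve the block recursion with \Cref{lem:recursion}, the forcing term must decay faster than the block step $\cumstep_m\approx\gamma_{k_m}I_m$, and that forcing term includes the $4\beta_p^\Phi\sqrt{\gamma_{k_m}}$ contribution of $\lVert\tilde{R}_m\rVert_{L_p}$. Requiring $\sqrt{\gamma_{k_m}}\ll\gamma_{k_m}I_m$ means $I_m\sqrt{\gamma_{k_m}}\to\infty$. Your own trial $I_N\asymp\gamma_n^{-1/3}$, where $\sqrt{\gamma_{k_N}}/\cumstep_N\asymp\gamma_n^{-1/6}$ blows up, exhibits exactly this failure. Relaxing to $I_m\gamma_{k_m}\to 0$ alone would therefore not let you invoke \Cref{lem:recursion}. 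What you need is the two-sided window $\gamma_{k_m}^{-1/2}\ll I_m\ll\gamma_{k_m}^{-1}$. So drop your initial restriction $\theta<1/2$ and use $I_m\asymp\gamma_{k_m}^{-2/3}$ on every block, which satisfies both conditions. This is the choice the paper makes; it applies \Cref{lem:Wp_recursion} with $I_N^*\propto\gamma_n^{-2/3}$ without comment.

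There is also a bookkeeping slip, which the paper's proof shares. The weighting error from \Cref{lem:martingale_clt_weighted} enters \eqref{eq:Wp_recursion} as $\frac{1}{\lambdaDT\cumstep_N}\sqrt{\lVert\Gamma\rVert\cumstep_N}\,C_p^{BR}\rosenthal{M}\,a\frac{I_N}{k_N}$. That is, it carries an extra factor $\cumstep_N^{-1/2}$ relative to the bare $I_N/k_N$ you checked, so its order is $\sqrt{\gamma_n I_N}/(\gamma_n k_N)$. For $a<1$ this is $o(\gamma_n^{1/6})$ and $o(E_n^a/\gamma_n)$, but it is not $\mathcal{O}(\gamma_n^{1/3})$ once $a>6/7$. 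For $a=1$, $\gamma_n k_N\to\gamma_1$, so it is of the same order as the leading $\sqrt{\cumstep_N}$ terms and must be absorbed into $c_1$. The $\gamma_n^{1/6}$ rate and \eqref{eq:W1_main} survive. However, the $\mathcal{O}(\gamma_n^{1/3})$ remainder in \eqref{eq:Wp_main}, and for $a=1$ the constant $\Upsilon_{p,d}$, need this correction.
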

\begin{proof}
    See \cref{app:general}.
\end{proof}

The term $E_n^a/\gamma_n$ dominates the convergence rates in \Cref{thm:general} when $a \in (6/7, 1)$. 
Its presence is necessary because convergence in $\mathcal{W}_p$ for $p \geq 2$ implies the convergence of the first two moments, and the covariance convergence rate is determined by this term. 
To see this, consider the simplest setting of linear stochastic approximation with additive i.i.d. noise: $x_{k+1} = x_k - \gamma_k A x_k + \gamma_k W_k$ with $a < 1$.
The recursion for the covariance $\mathbb{E}[x_k x_k^\dagger]$ satisfies
\begin{align*}
    \mathbb{E} x_{k+1} x_{k+1}^\dagger &= \mathbb{E}x_k x_k^\dagger - \gamma_k \left(A \mathbb{E} x_k x_k^\dagger + \mathbb{E}x_k x_k^\dagger A^\dagger - \gamma_k \Gamma^W \right)
    + \gamma_k^2 A \mathbb{E} x_k x_k^\dagger A^\dagger
    .
\end{align*}
Renormalizing the covariance update with $\Sigma_k^y = \gamma_k^{-1} \mathbb{E}x_k x_k^\dagger$, we obtain
\begin{align*}
    \gamma_{k+1} \Sigma_{k+1}^y &= \gamma_k \Sigma_k^y - \gamma_k^2 \left(A \Sigma_k^y + \Sigma_k^y A^\dagger - \Gamma^W \right) + \gamma_k^3 A \Sigma_k^y A^\dagger 
    \\
    \Rightarrow 
    \gamma_{k+1}\left( \Sigma_{k+1}^y  - \Sigma_a\right) &= \gamma_k (\Sigma_k^y - \Sigma_a)  
    - \gamma_k^2 \left(A (\Sigma_k^y - \Sigma_a) + (\Sigma_k^y - \Sigma_a) A^\dagger - \Gamma^W\right)
    \\ &
    + (\gamma_k - \gamma_{k+1}) \Sigma_a
    + \gamma_k^3 A \Sigma_k^y A^\dagger 
    . 
\end{align*}
Using that $\gamma_k - \gamma_{k+1} = a \gamma_k k^{-1} + \mathcal{O}(\gamma_k k^{-2})$, we obtain
\begin{align*}
    \Sigma_{k+1}^y - \Sigma_a &= 
    \Sigma_k^y - \Sigma_a - \gamma_k (A (\Sigma_k^y - \Sigma_a) + (\Sigma_k^y - \Sigma_a) A^\dagger - \Gamma^W)
    +
    \mathcal{O}\left(\frac{1}{k} + \frac{\gamma_k}{k} + \gamma_k^2 \right) 
    ,
\end{align*}
which can be rearranged to obtain
\begin{align*}
    \frac{(\Sigma_{k+1}^y - \Sigma_a) - (\Sigma_k^y - \Sigma_a)}{\gamma_k}
    &= - (A (\Sigma_k^y - \Sigma_a) + (\Sigma_k^y - \Sigma_a) A^\dagger - \Gamma^W)
    + \mathcal{O}\left(\frac{1}{k \gamma_k}+ \frac{1}{k} + \gamma_k\right) . 
\end{align*}
When the sequence $\{\Sigma^y_k\}$ converges, the rate of convergence is $\lVert \Sigma_n^y - \Sigma_a \rVert = \mathcal{O}(1/n\gamma_n + \gamma_n)$.
A similar calculation shows that the rate of convergence is $\mathcal{O}(1/n^2 \gamma_n)$ when $a = 1$ and $\gamma_1$ is sufficiently large. 
The recursion for the scaled covariance $\Sigma_k^y$ reveals that the convergence of $\Sigma_k^y$ to the stationary covariance $\Sigma_a$ occurs at the rate $\mathcal{O}(E_n^a/\gamma_n)$. 
Since the $\mathcal{W}_p$ metric for $p \geq 2$ controls the convergence of the second moment, our bound in \Cref{thm:general} must reflect this intrinsic rate limit.

\subsection{Polyak-Ruppert Averaging}\label{sec:general_pr}
Polyak-Ruppert averaging is used with \stepsize exponent $a < 1$ such that $\bar{A}_a$ in \eqref{eq:hurwtiz} is $\nabla \bar{f}(x^*)$. 
We therefore assume $a < 1$ and use the notation $\bar{A}$ instead of $\bar{A}_a$, since it is a constant independent of $a$ in this range.

A central limit theorem for the Polyak-Ruppert (PR) average $\bar{x}_n = n^{-1} \sum_{k=1}^n x_k$ was initially obtained by \citet{polyakJuditsky}, where it was shown to achieve the optimal asymptotic mean square convergence
\begin{equation*}
    \lim_{n \to \infty} \left\{n \mathbb{E} (\bar{x}_n - x^*) (\bar{x}_n - x^*)^\dagger - \bar{\Sigma} \right\}= 0 ,
    \qquad \bar{\Sigma} = \bar{A}^{-1} \Gamma \bar{A}^{-\dagger} . 
\end{equation*}
The optimality refers to both the $n^{-1}$ rate in the bound of $\mathbb{E}\lVert \bar{x}_n - x^* \rVert^2$ and the covariance $\bar{\Sigma}$, where $\mathrm{Tr} \bar{\Sigma} \leq \mathrm{Tr} \Sigma_a \leq \mathrm{Tr} \Sigma_1$ for all $a \in (0, 1]$ and $\Sigma_1$ is the asymptotic covariance \eqref{eq:yn_asymptotic_clt} corresponding to $a = 1$. 
The following is a \nonasymptotic CLT for the PR average, where we use $\bar{y}_n = \sqrt{n}(\bar{x}_n - x^*)$ to denote the scaled error of the PR average. 
\begin{theorem}\label{thm:pr_average}
    Under \Cref{ass:step_size} with $a \in (1/2, 1)$ and \Crefrange{ass:jacobian}{ass:last} with $\{\beta_{2p}^y\}$ bounded,
    \begin{equation}\label{eq:pr_average_thm}
    \begin{split}
    \mathcal{W}_{p} \left( \bar{A} \bar{y}_n, \Gamma^{1/2}Z \right) 
    &\leq 
    \mathcal{W}_{p} \left(\frac{1}{\sqrt{n}} \sum_{k=1}^n M_k, \Gamma^{1/2}Z \right)
    \\ 
    &+    
    \frac{\max_{k \leq n+1} \lVert y_k \rVert_{L_p}}{\sqrt{\gamma_n n}}
    +
    \frac{\lip(\nabla \bar{f}) (\max_{k \leq n} \lVert y_k \rVert^2_{L_{2p}})}{1-a} n^{1/2} \gamma_n
    \\ 
    &
    + \beta_{2p}^{\Phi^A}
    \left(\frac{2 \max_{k \leq {n+1}} \lVert y_k \rVert_{L_{2p}} }{a \sqrt{n \gamma_n}} + (\beta_{2p}^f + \beta_{2p}^W) \frac{\gamma_n}{1-a} n^{1/2} \right)
    \\ 
    &+
    \frac{\lip(\nabla f) + \lip(\nabla \bar{f}) }{1-a} \left(\max_{k \leq n} \lVert y_k \rVert^2_{L_{2p}}\right) n^{1/2} \gamma_n
    \\ &
    + \mathcal{O}\left(\sqrt{\gamma_n}\right) .             
    \end{split}
    \end{equation}
\end{theorem}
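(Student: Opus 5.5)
The plan is to follow the classical Polyak--Juditsky decomposition and control each term in $L_p$ directly. Once this is done, the Wasserstein bound follows from the triangle inequality $\mathcal{W}_p(X+Y,G)\le \mathcal{W}_p(X,G)+\lVert Y\rVert_{L_p}$. Writing the recursion \eqref{eq:sa} as $x_{k+1}-x^* = x_k - x^* - \gamma_k(f(x_k,\xi_k)+W_k)$, I would first linearize
\[
f(x_k,\xi_k) = f(x^*,\xi_k) + A(\xi_k)(x_k-x^*) + r_k, \qquad \lVert r_k\rVert \le \lip(\nabla f)\lVert x_k-x^*\rVert^2 ,
\]
and then split $A(\xi_k) = \bar A + (A(\xi_k)-\mathbb{E}A(\xi_k)) + (\mathbb{E}A(\xi_k)-\bar A)$. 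A second Taylor remainder, controlled by $\lip(\nabla\bar f)$, is needed to pass between $\nabla\bar f(x^*)$ and $\bar f$. The two Poisson equations \eqref{eq:poisson} turn $f(x^*,\xi_k)+W_k$ into $-M_k$ plus a telescoping difference $\Phi(\xi_k)-(P\Phi)(\xi_{k-1})$-type term. They likewise turn $(A(\xi_k)-\mathbb{E}A(\xi_k))(x_k-x^*)$ into a martingale part plus a telescoping part in $\Phi^A$. Solving for $\bar A(x_k-x^*)$ and summing over $k\le n$ gives
\[
\bar A\,\bar y_n = \frac{1}{\sqrt n}\sum_{k=1}^n M_k + \frac{1}{\sqrt n}\sum_{k=1}^n \frac{(x_k-x^*)-(x_{k+1}-x^*)}{\gamma_k} + (\text{remainders}).
\]

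The steps are then as follows.

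\textbf{(i) Leading term.} The martingale term is exactly the first term on the right of \eqref{eq:pr_average_thm}.

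\textbf{(ii) Telescoping term.} Apply Abel summation to the telescoping sum: $\sum_k (x_k-x_{k+1})/\gamma_k = (x_1-x^*)/\gamma_1 - (x_{n+1}-x^*)/\gamma_n + \sum_{k=2}^{n}(x_k-x^*)(\gamma_k^{-1}-\gamma_{k-1}^{-1})$. Write $x_k-x^*=\sqrt{\gamma_k}y_k$ and use $\gamma_k^{-1}-\gamma_{k-1}^{-1}\approx a/(k\gamma_k)$. Then $\sum_k \sqrt{\gamma_k}\,a/(k\gamma_k)\lesssim 1/\sqrt{\gamma_n}$, which bounds this term by $\max_k\lVert y_k\rVert_{L_p}/\sqrt{n\gamma_n}$ up to constants.

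\textbf{(iii) Quadratic remainders.} Each quadratic remainder contributes $n^{-1/2}\sum_k \gamma_k\lVert y_k\rVert_{L_{2p}}^2$, since $\lVert\, \lVert x_k-x^*\rVert^2\rVert_{L_p}=\gamma_k\lVert y_k\rVert_{L_{2p}}^2$. Using $\sum_{k\le n}\gamma_k\le n\gamma_n/(1-a)$, this yields the terms $\lip(\cdot)(1-a)^{-1}\max_k\lVert y_k\rVert^2_{L_{2p}}\,n^{1/2}\gamma_n$.

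\textbf{(iv) $\Phi^A$ terms.} The term $n^{-1/2}\sum_k[\Phi^A(\xi_{k+1})-P\Phi^A(\xi_k)](x_k-x^*)$ is a martingale, and its $L_p$ norm is bounded by Rosenthal/BDG (\Cref{sec:BDG}) together with \eqref{eq:holder}. That gives $\beta^{\Phi^A}_{2p}(n^{-1}\sum_k\gamma_k)^{1/2}\max\lVert y_k\rVert_{L_{2p}}$, which is $\mathcal{O}(\sqrt{\gamma_n})$. The remaining telescoping part $\sum_k[P\Phi^A(\xi_k)(x_k-x^*)-P\Phi^A(\xi_{k+1})(x_{k+1}-x^*)] + \sum_k P\Phi^A(\xi_{k+1})(x_{k+1}-x_k)$ is handled by the same Abel-summation argument. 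The boundary and step-size-difference pieces give the $2\max\lVert y_k\rVert_{L_{2p}}/(a\sqrt{n\gamma_n})$ term. The increment piece uses $\lVert x_{k+1}-x_k\rVert\le\gamma_k(\lVert f(x^*,\xi_k)\rVert+\lVert W_k\rVert+\dots)$, which yields $(\beta_{2p}^f+\beta_{2p}^W)\gamma_n n^{1/2}/(1-a)$ after Cauchy--Schwarz.

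\textbf{(v) Bias terms.} The $\Phi$ telescoping term contributes $\mathcal{O}(1/\sqrt n)$. The bias $\lVert\mathbb{E}A(\xi_k)-\bar A\rVert$ decays geometrically by the drift condition, so its sum contributes $\mathcal{O}(1/\sqrt n)$ as well. Both are absorbed into $\mathcal{O}(\sqrt{\gamma_n})$.

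The main obstacle is the Markovian multiplicative term in step (iv). The product $\Phi^A(\xi_{k+1})(x_k-x^*)$ requires splitting moments between the chain and the iterate, which is why $2p$ moments of $y_k$ are needed. Its telescoping piece must also be re-indexed carefully so that the increments $x_{k+1}-x_k$ are bounded at order $\gamma_k$ and not at order $\sqrt{\gamma_k}$. If that bookkeeping fails, I would fall back to a Lipschitz-in-$x$ bound on the increments and accept an extra $\max\lVert y_k\rVert$ factor inside the $\mathcal{O}(\sqrt{\gamma_n})$ term.
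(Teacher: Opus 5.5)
Your proposal is essentially the paper's proof. It uses the same Polyak--Juditsky decomposition of $\bar A\bar y_n$, Abel summation of $n^{-1/2}\sum_k\gamma_k^{-1}(x_k-x_{k+1})$, $L_{2p}$-controlled Taylor remainders summed with $\sum_{k\le n}\gamma_k\le n\gamma_n/(1-a)$, the Poisson split of the $\Phi^A$ term into boundary, increment and Burkholder--Rosenthal martingale pieces, and Poisson telescoping for $\Phi$. Linearizing $f(\cdot,\xi_k)$ directly and bounding $x_{k+1}-x_k$ from the recursion are harmless simplifications. The bias split in (v) is unnecessary, since $A(\xi_k)-\bar A$ already solves the Poisson equation. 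In (iv) your indices are off by one as written: your $P\Phi^A$ telescoping must be paired with the martingale $[\Phi^A(\xi_k)-P\Phi^A(\xi_{k-1})](x_k-x^*)$, or else keep $\Phi^A$ in the telescoping as the paper does.

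One caveat concerns the explicit constant in (ii). The boundary term $\lVert y_{n+1}\rVert_{L_p}/\sqrt{n\gamma_n}$ has coefficient $1$, and the Abel sum, where $a\sum_{k\le n}k^{-1}\gamma_k^{-1/2}\le 2\gamma_n^{-1/2}$, adds $2$, so the total coefficient is $3$, not the stated $1$. The paper reaches $1$ only through the incorrect estimate $\sum_{k\le n}k^{-1}\gamma_k^{-1/2}\le\gamma_n^{-1/2}$ and by dropping the boundary term. The sum is actually about $(2/a)\gamma_n^{-1/2}$, which is the value the paper itself uses for its $\Phi^A$ term. Your ``up to constants'' is therefore the accurate conclusion of this argument, yours or the paper's.
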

\begin{proof}
    See \cref{app:pr_average}.
\end{proof}
The first term $\mathcal{W}_{p}(n^{-1/2}\sum_{k=1}^n M_k, \Gamma^{1/2}Z)$ concerns the convergence of a martingale sum, and does not depend on the step size sequence. 
The requirement $a > 1/2$ is imposed so that $n^{1/2} \gamma_n \to 0$ and $\bar{A} \bar{y}_n \to \Gamma^{1/2} Z$ in distribution. 
A step size exponent $a = 2/3$ optimizes the remaining error rates (ignoring constants that depend on $a$), which matches the suggestion in \citep{moulines_bach,srikant2024CLT} and leads to the convergence rate $n^{-1/6}$.
On the other hand, this differs from the $a = 3/4$ suggested by \citet{sheshukova2025gaussianapproximationmultiplierbootstrap} based on an analysis of the convex distance. 
This is a reasonable discrepancy given that the analyses are based on different metrics; since the $\mathcal{W}_p$ metric for any $p \geq 1$ yields a bound on a more diverse class of test functions \citep{nourdin2021multivariatenormalapproximationwiener}, the choice $a = 2/3$ may be needed to achieve the $n^{-1/6}$ rate uniformly over the larger class of test functions.

The nature of the error bound is significantly different from that in \Cref{thm:general}.
Furthermore, the bound in \Cref{thm:general} depends on the CLT rate of the weighted sum $\hat{Z}_N$ while the bound in \Cref{thm:pr_average} depends on the CLT for the unweighted sum $S_N$. 
Consequently, the dependence of the bound in \Cref{thm:pr_average} on problem constants and $p$ is generally different from that of the bound in \Cref{thm:general}.

\subsection{High Probability Bounds}
A practical application of \nonasymptotic analysis is the construction of confidence intervals or high-probability bounds on the error $\lVert x_k - x^* \rVert$. 
Two methods are commonly used.

\textbf{Moment-Based Bounds:}
In proving moment bounds of the form in \Cref{ass:yn_moment}, one often shows that $\lVert y_n \rVert_{L_p}$ is within a constant multiple of its limit's, i.e., $\lVert y_n \rVert_{L_p} \leq \beta_p^y  \lVert \Sigma^{1/2}Z \rVert_{L_p} + o(1)$ for some time-independent constant $\beta_p^y$.
One can apply Markov's inequality to deduce 
\begin{equation}\label{eq:markov_last}
\begin{split}
    \lVert x_n - x^* \rVert_{L_p}
    &\leq
    \gamma_n^{1/2} \left( \beta_p^y \lVert \Sigma^{1/2} Z \rVert_{L_p} + o(1)\right)  \confint^{-1/p} 
    , 
    \qquad \text{w.p. } \geq 1 - \confint
    .     
\end{split}
\end{equation}
While rigorous, this bound fails to capture the sub-Gaussian nature of the error distribution. 
Any bounds of the form \eqref{eq:markov_last} with a constant $\beta_p^y > 1$ cannot be used to prove convergence. 
The convergence of the $p$ moments implied by Theorem \ref{thm:general} already improves the above estimate by removing the $\beta_p^y$ factor.

We are not aware of finite-time $p > 2$ moment bounds on the PR average $ \bar{x}_n - x^*$, so we discuss the implication of Markov's inequality in combination with Theorem \ref{thm:pr_average}.
Using the implied $p$ moment convergence $\lVert \bar{x}_n - x^* \rVert_{L_p} \leq n^{-1/2} \lVert \bar{\Sigma}^{1/2}Z \rVert_{L_p} + o(n^{-1/2})$, the confidence interval inferred using Markov's inequality is
\begin{equation}\label{eq:markov_pr}
    \lVert \bar{x}_n - x^* \rVert \leq \frac{1}{\sqrt{n}} \left(\lVert \bar{\Sigma}^{1/2} Z \rVert_{L_p}  + o(1)\right)\confint^{-1/p},
    \qquad 
    \text{w.p. }
    \geq 1 - \confint
    .  
\end{equation}

\textbf{Gaussian Approximation:} 
Confidence intervals are often estimated assuming the asymptotic Gaussian approximation $y_n \approx \Sigma_a^{1/2} Z$, which yields estimates of the form
\begin{equation}\label{eq:gaussian_tail_estimates}
\begin{split}
    \lVert x_n - x^* \rVert 
    &\lesssim 
    \gamma_n^{1/2} \left(\mathbb{E}\lVert \Sigma_a^{1/2} Z \rVert 
    + 
   \sqrt{2  \lVert \Sigma_a \rVert \log \frac{2}{\confint}}     \right)
   \\ 
   \lVert \bar{x}_n - x^* \rVert 
   &\lesssim 
   \frac{1}{\sqrt{n}} \left(\mathbb{E}\lVert \bar{\Sigma}^{1/2}Z \rVert + \sqrt{2 \lVert \bar{\Sigma} \rVert \log \frac{2}{\confint}} \right) 
\end{split}
\end{equation}
for some confidence tolerance $1 - \confint \in (0, 1)$, where $\lesssim$ is used to describe that the inequalities are based on approximations. 
While asymptotically valid, this approach can be overly optimistic in finite-time regimes. 
Specifically, standard Gaussian approximations fail to capture the transient, heavy-tailed behavior often exhibited by stochastic approximation algorithms with multiplicative noise (e.g., linear SA). 
By relying solely on the limit distribution, these approximations neglect the convergence rate of the tail probabilities, potentially leading to significant underestimation of risk in early iterations. 
Our Wasserstein bounds address this by explicitly quantifying the transition from this transient phase to the asymptotic Gaussian regime, as demonstrated for linear SA in \Cref{sec:lsa}.

For linear SA, high probability bounds can be obtained as in \citep[Theorem 1]{durmus2021tighthighprobabilitybounds}. 
However, this utilizes a structure specific to linear SA and the technique can be hard to generalize for nonlinear SA. 
Moreover, the confidence interval can still be significantly larger than the estimate obtained using an asymptotic Gaussian approximation.

High probability bounds tighter than those in \eqref{eq:markov_last} and \eqref{eq:markov_pr} are implied by \Cref{thm:general,thm:pr_average}, using an argument similar to those discussed in \citep{fang2023moderatedeviations,liu2023wassersteinpboundscentrallimit}. 
For the following result, let $C_{p,1}, C_{p,2} > 0$ be constants such that $\mathcal{W}_p (y_n, \Sigma_a^{1/2} Z) \leq C_{p,1} \gamma_n^{1/6} + C_{p,2} E_n^a/\gamma_n$ under the assumptions of \Cref{thm:general}.
Denote by $\bar{C}_p$ a constant such that $\mathcal{W}_p (\bar{y}_n, (A^{-1} \Gamma A^{-\dagger})^{1/2}Z) \leq \bar{C}_p n^{-1/6}$  which arises in \Cref{thm:pr_average} when the \stepsize is chosen with exponent $a = 2/3$. 
\begin{corollary}\label{cor:high_probability}
    Assume \Crefrange{ass:noise_markov}{ass:CLT} hold for some $p \geq 1$.
    For any $\confint \in (0, 1)$ and sufficiently large $n$, the following error bounds hold with probability $\geq 1 - \confint$.

    Under the assumptions of \Cref{thm:general}, the last iterate achieves the error 
    \begin{equation*}
        \lVert x_n - x^* \rVert 
        \leq 
        \gamma_n^{1/2}\left( \mathbb{E}\lVert \Sigma_a^{1/2} Z \rVert 
        + \sqrt{2  \lVert \Sigma_a \rVert \log \frac{2}{\confint}} \right) + \left(C_{p, 1} \cdot \gamma_n^{2/3} + C_{p, 2} \cdot \gamma_n^{-1/2} E_{n}^a\right) \left( \frac{2}{\confint} \right)^{1/p} .
    \end{equation*}
    Under the assumptions of \Cref{thm:pr_average}, the PR average with \stepsize exponent $a = 2/3$ achieves the error
    \begin{equation*}
        \lVert \bar{x}_n - x^* \rVert 
        \leq 
    \frac{1}{\sqrt{n}} \left( 
    \mathbb{E}\lVert \bar{\Sigma}^{1/2}Z \rVert
    + \sqrt{2 \lVert \bar{\Sigma} \rVert \log \frac{2}{\confint}}
    \right)
    + 
    \frac{\bar{C}_{p}}{n^{2/3}} \left(\frac{2}{\confint} \right)^{1/p}
        .
    \end{equation*}
\end{corollary}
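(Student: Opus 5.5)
The plan is a coupling argument combined with a union bound. I first treat the last iterate. By the definition of $\mathcal{W}_p$ as an infimum over couplings (and since in finite dimensions an optimal coupling exists), fix a coupling $(y_n, G)$ with $G \sim \Sigma_a^{1/2} Z$ such that
\begin{equation*}
\lVert y_n - G \rVert_{L_p} = \mathcal{W}_p(y_n, \Sigma_a^{1/2}Z) \le C_{p,1}\gamma_n^{1/6} + C_{p,2} E_n^a/\gamma_n ,
\end{equation*}
where the bound is the one supplied by \Cref{thm:general} for sufficiently large $n$. Here $U_\infty \sim \mathcal{N}(0,\Sigma_a)$ because $\bar{A}_a \Sigma_a + \Sigma_a \bar{A}_a^\dagger = \Gamma$. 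Then I decompose via the triangle inequality:
\begin{equation*}
\lVert y_n \rVert \le \lVert G \rVert + \lVert y_n - G \rVert .
\end{equation*}
I split the failure probability $\confint$ evenly between the two terms.

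For the Gaussian term, the map $z \mapsto \lVert \Sigma_a^{1/2} z \rVert$ is Lipschitz with constant $\lVert \Sigma_a \rVert^{1/2}$, where the operator norm is taken in the weighted norm. Gaussian concentration for Lipschitz functions (Tsirelson--Ibragimov--Sudakov) then gives
\begin{equation*}
\lVert G \rVert \le \mathbb{E}\lVert G \rVert + \sqrt{2\lVert \Sigma_a\rVert \log(2/\confint)}
\end{equation*}
with probability at least $1-\confint/2$.

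For the coupling error, I apply Markov's inequality to $\lVert y_n - G\rVert^p$:
\begin{equation*}
\mathbb{P}\bigl(\lVert y_n - G \rVert > t\bigr) \le \lVert y_n - G \rVert_{L_p}^p / t^p .
\end{equation*}
I choose $t = \lVert y_n - G\rVert_{L_p}(2/\confint)^{1/p}$, which makes this probability at most $\confint/2$. A union bound gives both estimates simultaneously with probability at least $1-\confint$. Multiplying through by $\gamma_n^{1/2}$, since $x_n - x^* = \gamma_n^{1/2} y_n$, converts $\gamma_n^{1/6}$ to $\gamma_n^{2/3}$ and $E_n^a/\gamma_n$ to $\gamma_n^{-1/2}E_n^a$. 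This yields the stated last-iterate bound.

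The PR average follows the same steps. With $a=2/3$, \Cref{thm:pr_average} controls $\mathcal{W}_p(\bar{A}\bar{y}_n, \Gamma^{1/2}Z)$, and I define $\bar{C}_p$ so that $\mathcal{W}_p(\bar{y}_n, \bar{\Sigma}^{1/2}Z) \le \bar{C}_p n^{-1/6}$. To transfer the bound from $\bar{A}\bar{y}_n$ to $\bar{y}_n$, I apply the linear map $\bar{A}^{-1}$ to a coupling; this changes $\mathcal{W}_p$ by at most a factor $\lVert \bar{A}^{-1}\rVert$, which I absorb into $\bar{C}_p$. The image $\bar{A}^{-1}\Gamma^{1/2}Z$ has covariance $\bar{\Sigma}$, so it has the right law. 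Repeating the concentration, Markov and union-bound steps and multiplying by $n^{-1/2}$ gives $n^{-1/2}(\cdots) + \bar{C}_p n^{-2/3}(2/\confint)^{1/p}$.

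I expect the only delicate points to be bookkeeping. First, the Gaussian concentration must be stated in the weighted norm $\lVert\cdot\rVert$ with the matching operator norm of $\Sigma_a$. Second, the "sufficiently large $n$" qualifier is needed to absorb the $\mathcal{O}(\cdot)$ remainder terms of \Cref{thm:general,thm:pr_average} into $C_{p,1}$ and $\bar{C}_p$. Third, when only $p=1$ is available (the general-noise case of \Cref{thm:general}), the same argument with $p=1$ still goes through, giving the factor $2/\confint$.
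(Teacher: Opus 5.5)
Your argument is the paper's own: an optimal coupling of $y_n$ (resp.\ $\bar{y}_n$) with its Gaussian limit, the triangle inequality, Gaussian Lipschitz concentration for $\lVert \Sigma_a^{1/2} Z\rVert$ plus Markov's inequality on the $p$-th moment of the coupling error with the failure probability split evenly, then rescaling by $\gamma_n^{1/2}$ or $n^{-1/2}$. Your explicit transfer from $\bar{A}\bar{y}_n$ to $\bar{y}_n$ through $\bar{A}^{-1}$ (absorbing $\lVert \bar{A}^{-1}\rVert$ into $\bar{C}_p$) makes precise a step the paper leaves implicit in its definition of $\bar{C}_p$.

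One correction to your bookkeeping remark, an imprecision the paper shares. Gaussian concentration needs the Lipschitz constant of $z \mapsto \lVert \Sigma_a^{1/2} z\rVert$ with respect to the Euclidean norm of $z$. That constant is $\lVert Q^{1/2}\Sigma_a^{1/2}\rVert_2 = \lambda_{\max}(Q^{1/2}\Sigma_a Q^{1/2})^{1/2}$, not the $Q$-weighted operator norm of $\Sigma_a$; for $Q = c\,I$ the two differ by the factor $c$. So $\lVert \Sigma_a \rVert$ in the bound should be read as $\lambda_{\max}(Q^{1/2}\Sigma_a Q^{1/2})$, or the whole argument should be run in the Euclidean norm.
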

\begin{proof}
    See Section \ref{app:high_probability}.
\end{proof}
The above bounds improve over the Markov inequalities \eqref{eq:markov_last} and \eqref{eq:markov_pr} in several aspects. 
The dependencies on the $p$-moments $\lVert \Sigma^{1/2} Z \rVert_{L_p}$ and $\lVert \bar{\Sigma}^{1/2}Z \rVert_{L_p}$ are replaced by the asymptotic means $\mathbb{E}\lVert \Sigma_a^{1/2}Z \rVert$ and $\mathbb{E}\lVert \bar{\Sigma}^{1/2}Z\rVert$, which are strictly smaller by Jensen's inequality. 
Unlike the $\confint^{-1/p}$ factors that arise in the dominant terms, the estimates in Corollary \ref{cor:high_probability} are multiplied by transient terms, and the confidence interval estimates are asymptotically tight, meaning the Gaussian tails \eqref{eq:gaussian_tail_estimates} are recovered as $n \to \infty$.

Provided that \Crefrange{ass:step_size}{ass:CLT} hold for sufficiently large values of $p \geq 1$, the bound in \Cref{cor:high_probability} can be optimized by tuning $p$ as a function of $\confint$. 
In \Cref{sec:lsa}, we establish the validity of these conditions for linear stochastic approximation and demonstrate that this optimization yields sharp tail guarantees.

\section{Applications}\label{sec:applications}

In this section, we apply our main results to two key problems in stochastic approximation to demonstrate how non-asymptotic Wasserstein bounds provide a finer characterization of the error distribution.
In \Cref{sec:lsa}, we investigate linear stochastic approximation (LSA). 
The error distribution in LSA exhibits a notable dichotomy: while classical theorems establish asymptotic normality, recent finite-sample analyses have shown that the error tails are necessarily heavy tailed \citep{chen_contractive_concentration}. 
Our framework provides a quantitative bridge between these two regimes, showing that our high-probability bounds become asymptotically sharper than what is guaranteed by the finite-sample heavy-tailed analysis.

In \Cref{sec:mcmc}, we consider stochastic gradient descent (SGD) with Markovian data and nonlinear objectives. 
Our results provide a quantitative version of the general asymptotic CLT in \citep{borkar25} by establishing an explicit rate of convergence. 
Furthermore, our analysis extends the non-asymptotic CLT for the Polyak-Ruppert average established in \citep{srikant2024CLT} from the linear to the nonlinear setting.

For brevity, we focus our exposition primarily on the last-iterate bounds, noting that analogous results hold for the Polyak-Ruppert averaged iterates via \Cref{thm:pr_average}.

\subsection{Linear Stochastic Approximation: Weibull and Gaussian Tails}\label{sec:lsa}
We first apply our framework to linear stochastic approximation (LSA) with multiplicative i.i.d. noise, a setting that encompasses algorithms such as online least squares and temporal difference (TD) learning. 
A notable feature of LSA is that the error distribution can exhibit heavy tails even when the underlying noise processes are uniformly bounded. 
As shown in \citep{chen_contractive_concentration}, the finite-sample distribution of the iterates necessarily follows a Weibull-type tail, which is heavier than the asymptotic Gaussian limit's.

Our theory provides a granular, \nonasymptotic analysis that quantifies the transition from the finite-time heavy tail behavior to the asymptotic Gaussian limit.
We demonstrate that the error can be understood as having two components: a transient, heavy-tailed term reflecting the finite-sample risk identified in \citep{chen_contractive_concentration}, and an asymptotically dominant sub-Gaussian term consistent with the CLT. The key result of this section is a quantification of the rate at which this heavy-tailed influence vanishes, providing a precise account of how the error distribution approaches its Gaussian limit.
From a practical standpoint, this refined understanding is useful for constructing confidence intervals and setting stopping criteria. 
Our result moves beyond asymptotic normality approximations, offering a principled way to quantify the transient, heavy-tailed risk inherent in the early phases of the algorithm and to determine the iteration count after which Gaussian-based approximations become reliable.

Our analysis leverages the tail bounds from \citep{chen_contractive_concentration} to verify \Cref{ass:yn_moment}. 
Consider linear stochastic approximation with multiplicative i.i.d. noise
\begin{align*}
    x_{k+1} &= x_k - \gamma_k (A(\xi_k) x_k - b(\xi_k)) , 
\end{align*}
where $\{\xi_k\}$ is an i.i.d. sequence. 
The following is assumed. 
\begin{lsa}\label{ass:lsa}
    Let $\{\xi_k\}$ be an i.i.d. sequence with $\lVert A(\xi_1) \rVert_{L_\infty} < \infty$ and $\lVert b(\xi_1) \rVert_{L_\infty} < \infty$. 
    \begin{assumptionsub}
        \item The matrix $- \mathbb{E}_\pi A(\xi)$ is Hurwitz stable. 
        \item The \stepsize is chosen as $\gamma_n =\gamma_1/n$ for all $n$ and $\gamma_1$ such that \Cref{ass:general_sa} is satisfied. 
    \end{assumptionsub}    
\end{lsa}
Clearly, \Cref{ass:lsa} implies all of \Crefrange{ass:step_size}{ass:last} except for \Cref{ass:yn_moment}. 
\citet[Theorem 2.1 (a)]{chen_contractive_concentration} establish a high probability bound, where for some constants $\alpha > 1$, $D_1 > 0$, and sequences $\{D_{2,n}\}, \{D_{3,n}\}$, there exists some $\gamma_1 > 0$ satisfying \Cref{ass:general_sa} such that
\begin{equation}\label{eq:cmz_high_probability}
    \lVert x_{n} - x^* \rVert
    \leq 
    \gamma_n^{1/2} \sqrt{D_{1}} \lVert x_1 - x^* \rVert \left[\log \frac{\alpha}{\confint} + \max\left\{ D_{2, n}, D_{3, n} \right\} \right]^{\alpha/2}
\end{equation}
with probability $\geq 1 - \confint$.

The high probability bound in \eqref{eq:cmz_high_probability} is obtained by exploiting a recurrence of the exponential moments, rather than a direct bound on $\lVert y_n \rVert_{L_p}$. 
A standard argument can be used to recover the corresponding moment bound and verify \Cref{ass:yn_moment}. 
The following requires the use of the Gamma function, and we use $\gfunc$ to denote the Gamma function to avoid confusion with the asymptotic covariance $\Gamma$.
\begin{proposition}\label{prop:weibull_moment}
    Suppose a deterministic initialization $x_1$. 
    Under \Cref{ass:lsa} and for sufficiently small $\gamma_1 > 0$, \Cref{ass:yn_moment} is satisfied with $\lVert y_n \rVert_{L_p} \leq \beta_p^y \lVert \Sigma_a^{1/2}Z \rVert_{L_p}$, where 
    \begin{equation}
        \beta_p^y = D_1 \alpha \exp \left(\frac{\max\left\{D_{2, n}, D_{3, n} \right\}}{p}  \right) \frac{\lVert x_1 - x^* \rVert}{\lVert \Sigma_a^{1/2}Z \rVert_{L_p}}  
         \gfunc \left(\frac{\alpha p}{2} + 1\right)^{1/p}
        , 
    \end{equation}
\end{proposition}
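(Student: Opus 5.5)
The plan is to convert the high-probability bound \eqref{eq:cmz_high_probability} into a moment bound with the layer-cake formula. Dividing \eqref{eq:cmz_high_probability} by $\gamma_n^{1/2}$ gives, for every $\confint \in (0,1)$,
\begin{equation*}
    \mathbb{P}\left( \lVert y_n \rVert > c \left[\log \frac{\alpha}{\confint} + D_n \right]^{\alpha/2} \right) \leq \confint ,
    \qquad c = \sqrt{D_1} \lVert x_1 - x^* \rVert, \quad D_n = \max\{D_{2,n}, D_{3,n}\} .
\end{equation*}
The initialization $x_1$ is deterministic, so $c$ is a constant. We invert this in $\confint$. For $t \geq c D_n^{\alpha/2}$, set $t = c(\log(\alpha/\confint) + D_n)^{\alpha/2}$. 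This gives $\confint = \alpha \exp(D_n - (t/c)^{2/\alpha})$ and the Weibull-type tail
\begin{equation*}
    \mathbb{P}(\lVert y_n \rVert > t) \leq \min\left\{1, \alpha e^{D_n} e^{-(t/c)^{2/\alpha}}\right\} .
\end{equation*}
For small $t$, where $\confint$ would exceed $1$, we simply use the trivial bound $1$. Since $\alpha > 1$, the crude bound $\alpha e^{D_n} e^{-(t/c)^{2/\alpha}}$ also exceeds $1$ in that range. So we may use it for all $t \geq 0$ and avoid a case split.

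Next we compute the moment. We write $\mathbb{E}\lVert y_n \rVert^p = \int_0^\infty p t^{p-1} \mathbb{P}(\lVert y_n \rVert > t)\, dt$ and substitute the tail bound. The change of variables $s = (t/c)^{2/\alpha}$, i.e.\ $t = c s^{\alpha/2}$ with $dt = c \frac{\alpha}{2} s^{\alpha/2 - 1} ds$, gives
\begin{equation*}
    \mathbb{E}\lVert y_n \rVert^p \leq \alpha e^{D_n} c^p \frac{\alpha p}{2} \int_0^\infty s^{\alpha p/2 - 1} e^{-s} ds = \alpha e^{D_n} c^p \, \frac{\alpha p}{2}\, \gfunc\!\left(\frac{\alpha p}{2}\right) = \alpha e^{D_n} c^p \, \gfunc\!\left(\frac{\alpha p}{2}+1\right).
\end{equation*}
Taking $p$-th roots yields $\lVert y_n \rVert_{L_p} \leq \alpha^{1/p} e^{D_n/p} c\, \gfunc(\alpha p/2 + 1)^{1/p}$. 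We then bound $\alpha^{1/p} \leq \alpha$ and $\sqrt{D_1} \leq D_1$ (the latter when $D_1 \geq 1$, which we may assume after enlarging $D_1$). Finally we multiply and divide by $\lVert \Sigma_a^{1/2} Z \rVert_{L_p}$ to get exactly the stated form of $\beta_p^y$.

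It remains to check \Cref{ass:yn_moment}, i.e.\ finite $2p$ moments for every $k$. This follows from the same computation with $2p$ in place of $p$, since the Gamma integral is finite for every order. It also follows directly from boundedness: under \Cref{ass:lsa}, $A(\xi)$ and $b(\xi)$ are essentially bounded, so each $x_k$ is bounded for fixed $k$.

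The main obstacle is the validity range of \eqref{eq:cmz_high_probability}. If \citet{chen_contractive_concentration} state it only for $\confint$ in a restricted range or for $n$ beyond a threshold, then the tail inversion must be supplemented. For small $n$ we would use the almost-sure boundedness of $y_n$, and we would absorb the dependence on $n$ into $D_{2,n}$ and $D_{3,n}$, which is why these appear in $\beta_p^y$. The restriction to sufficiently small $\gamma_1$ is inherited from that theorem.
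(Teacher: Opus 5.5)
Your proposal is correct and follows the paper's proof: invert the high-probability bound \eqref{eq:cmz_high_probability} into a Weibull tail, then integrate it with the layer-cake formula, where the substitution $s=(t/c)^{2/\alpha}$ produces $\gfunc(\alpha p/2+1)$. Your version is actually more careful than the paper's. It carries the $\gamma_n^{-1/2}$ scaling and the factor $\lVert x_1 - x^*\rVert$ correctly through the tail, and it makes explicit the loosening $\alpha^{1/p}\sqrt{D_1}\le \alpha D_1$ (for $D_1\ge 1$) that the stated constant implicitly uses.
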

\begin{proof}
    The high probability bound \eqref{eq:cmz_high_probability} corresponds to the Weibull tail 
    \begin{align*}
    \mathbb{P}\left( \lVert x_{n} - x^* \rVert > t \right) 
    \leq 
    \alpha \exp \left(\max\{D_{2, n}, D_{3,n}\} \right)
    \cdot 
    \exp \left( 
    - \left(\frac{t}{\sqrt{D_{1}}} \right)^{2/\alpha}
    \right) . 
\end{align*}
The corresponding moments are obtained as 
\begin{align*}
    \mathbb{E} \lVert x_{n} - x^* \rVert^p 
    &\leq 
    \int_0^\infty p t^{p-1} \mathbb{P}(\lVert x_{n} - x^* \rVert > t) dt 
    \\ &\leq 
    \left(\alpha e^{D_{2, n} \vee D_{3, n}} \right) D_{1}^{p/2} \gfunc \left(\frac{\alpha p}{2} +1\right) .         
\end{align*}
Substituting $\lVert y_n \rVert_{L_p} = \sqrt{\gamma_n} \mathbb{E}^{1/p} \lVert x_n - x^* \rVert^p$, we obtain the result. 
\end{proof}

We establish a refined sample complexity that illustrates the interplay between Gaussian tails, which arises due to the central limit theorem, and the Weibull tail.
We first refine the general bound in \Cref{thm:general} to exploit the structure of LSA. 
In the general \nonlinear setting, the bound depends on the $2p$-th moment $(\beta_{2p}^y)$ of the iterates due to the second-order Taylor expansion of the operator $f$, and the Cauchy-Schwartz inequality \eqref{eq:holder}. 
However, for LSA, the operator $f(x, \xi) = A(\xi) x - b(\xi)$ is linear. 
This allows us to avoid the Taylor expansion and replace the Cauchy-Schwartz inequality with H\"{o}lder's inequality, as explained in the discussion following \eqref{eq:holder}, which relaxes the requirement to the $p$-th moment $\beta_p^y$. 
Consequently, for LSA, \Cref{thm:general} holds with all occurrences of $\beta_{2p}^y$ replaced by $\beta_p^y$. 
Furthermore, since $a = 1$, the $E_n^a/\gamma_n$ term decays as $\mathcal{O}(1/n)$, which is negligible compared to the transient $\mathcal{O}(\gamma_n^{1/3})$ rate.

We now turn to the high-probability guarantee. 
Under \Cref{ass:lsa}, the noise moments $\{\beta_p^M\}, \{\beta_p^\Phi\}, \{\beta_p^{\Phi^A}\}, \{\beta_p^f\}$ are uniformly bounded in $p$. 
We observe that there exists a threshold $n_{d, p}$ such that for all sample sizes $n \geq n_{d, p}$, the bound in \Cref{thm:general} is given by the form
\begin{equation}\label{eq:thm1_application_linear}
    \mathcal{W}_p (y_n, U_\infty) 
    \leq
    C_1 p \gamma_n^{1/6} + C_2 p \beta_p^y \gamma_n^{1/3}
\end{equation}
for some constants $C_1, C_2 > 0$ independent of $p$ and $n$. 
By Stirling's approximation, we obtain $\beta_p^y \sim p^{\alpha/2}$. 
For any $\confint \in (0, 1)$ such that $p^*(\alpha, \delta) = \log (1/\delta)/(1+\alpha/2) \geq 2$, the bound in \Cref{cor:high_probability} is optimized at $p = p^*(\alpha, \delta)$, and the following holds with probability $\geq 1 - \delta$:
\begin{align*}
    \lVert x_n - x^* \rVert\leq 
    \gamma_n^{1/2}\left( \mathbb{E}\lVert \Sigma_a^{1/2} Z \rVert 
    + \sqrt{2  \lVert \Sigma_a \rVert \log \frac{2}{\confint}} \right) + 
    \mathcal{O}\left( \left[1+ \gamma_n^{1/6}\log^{\alpha/2} \frac{1}{\confint}  \right]\gamma_n^{2/3}\log \frac{1}{\confint}
    \right)
    .
\end{align*}
This result provides a granular decomposition of the error bound in \eqref{eq:cmz_high_probability}. 
The bound in \eqref{eq:cmz_high_probability} is globally dominated by the heavy-tailed factor $\log^{\alpha/2} (1/\delta)$, reflecting the inherent sample complexity arising from multplicative noise identified by \citet{chen_contractive_concentration}. 
In contrast, our bound isolates this heavy-tailed behavior within a transient term scaling as $\tilde{\mathcal{O}}(\gamma_n^{2/3} \log^{\alpha/2} (1/\delta))$, while the leading order term scales with the Gaussian tail $\sqrt{\log (2/\delta)}$.

This structural distinction quantifies the phase transition between the two regimes. 
For small sample sizes, the transient term dominates, recovering the heavy-tailed behavior predicted by \Cref{prop:weibull_moment}. 
However, as $n$ increases, the heavy-tailed component decays faster than the leading Gaussian term. 
This confirms that the heavy-tailed dynamics are strictly transient and are eventually subsumed by the central limit theorem, validating the use of Gaussian approximations once the sample size is sufficiently large.

\subsection{Stochastic Gradient Descent with Markov Data}\label{sec:mcmc}
We now analyze a class of stochastic optimization problems where gradients are estimated using Markovian data. 
This setting is ubiquitous in large-scale machine learning and statistical physics, where the objective $\mathbb{E}_{\pi}[\ell(x, \xi)]$ is minimized, but the stationary distribution is intractable or accessible only via an ergodic Markov chain $\{\xi_k \}$ with transition kernel $P$.

To establish rigorous finite-sample guarantees in this setting, we focus on irreducible and aperiodic Markov chains that satisfy the DV3 variant of the Donsker-Varadhan condition following \citep{borkar25}. 
This condition is satisfied by many Markov chains, and is strong enough to imply several conditions assumed in our main results. 
Recall that a non-negative function $s: \mathcal{S} \to [0, 1]$ is said to be small if there exists a positive integer $m$, and a non-zero measure $\nu$ on $\mathcal{S}$ such that 
\begin{align*}
    P^m (x, A) \geq s(x) \nu(A) 
\end{align*}
holds for all $x \in \mathcal{S}$ and measurable sets $\mathcal{A} \subset \mathcal{S}$. 
\begin{dv3}\label{ass:DV3}
    There exist functions $\tilde{V}: \mathcal{S} \to \mathbb{R}_+$, $\tilde{V}': \mathcal{S} \to [1, \infty)$, a small function $s: \mathcal{S} \to [0, 1]$, and a constant $b > 0$ such that
    \begin{equation}
        \mathbb{E}\left[\exp(\tilde{V}(\xi_{k+1})) \mid \xi_k = \xi\right] \leq \exp(\tilde{V}(\xi) - \tilde{V}'(\xi) + b s(\xi)).
    \end{equation}
    For every $\tilde{v}_0 > 0$, the sublevel set $\{\xi: \tilde{V}'(\xi) \leq \tilde{v}_0\}$ is small or empty, and $\tilde{V}$ is finite on this sublevel set. 
\end{dv3}
The DV3 condition is satisfied by several important classes of Markov chains.
An irreducible and aperiodic finite Markov chain trivially satisfies \Cref{ass:DV3} with constants $\tilde{V}, \tilde{V}', b, s$. 
On general state spaces, \Cref{ass:DV3} is satisfied by a large class of Markov chains including those obtained from Metropolis-Hastings algorithms \citep{roberts1996geometric,mengersen96,JARNER2000341} and discretizations of diffusion processes with appropriate regularity conditions \citep{mattingly2002ergodicity,cattiaux2010note}.

The DV3 condition serves as a primitive condition under which our results hold. 
When a Markov chain satisfies \Cref{ass:DV3}, then \Cref{ass:noise_markov:drift} holds with $V = e^{\tilde{V}}$. 
The DV3 condition implies that the chain mixes exponentially fast and has exponential moments. 
Invoking \citep[Theorem 3]{srikant2024CLT}, we obtain that \Cref{ass:CLT} is satisfied.

The Hurwitz condition in \eqref{eq:hurwtiz} is not sufficient to imply \Cref{ass:yn_moment}.  
To this end, we follow \citep{borkar25} and assume the ODE@$\infty$ condition. 
Specifically, the condition states that the limit $\bar{f}_\infty (x) = \lim_{r \to \infty} r^{-1} \bar{f}(r x)$ exists and the origin is a globally asymptotic stable equilibrium for the ODE $\dot{x} = \bar{f}_\infty (x)$. 
\begin{sgd}\label{ass:sgd}
    The Markov chain $\{\xi_k\}$ satisfies \Cref{ass:DV3}.  
    The function $f$ satisfies \Cref{ass:jacobian,ass:general_sa}, the ODE@$\infty$ condition, and \Cref{ass:noise_operator_compatibility} with $V = e^{\tilde{V}}$. 
    Further, $\bar{f}$ is continuously differentiable and its Jacobian $\nabla \bar{f}$ is bounded. 
\end{sgd}

\Cref{ass:yn_moment} follows from the results of \citep{borkar25}, and we obtain the rate at which $y_n$ converges to its Gaussian limit $\mathcal{N}(0, \Sigma_a)$. 
\begin{theorem}\label{thm:mcmc}
    Consider minimizing a $\mu$-strongly convex function with access to stochastic gradients of the form $f(x, \xi_k)$ subject to \Cref{ass:sgd}. 
    Let $\{\gamma_k\}$ be chosen such that \Cref{ass:step_size} are satisfied with $a > 1/2$ and $\gamma_1 \mu > 1/2$. 

    Then, the CLT holds for the sequences $\{y_k\}$ and $\{\bar{y}_n\}$ with $\mathcal{W}_1(y_n, \Sigma_a^{1/2}Z) = \mathcal{O}(\gamma_n^{1/6} \log \gamma_n^{-1})$ and $\mathcal{W}_1 (\bar{y}_n, \bar{\Sigma}^{1/2} Z) = \mathcal{O}( (\gamma_n n)^{-1/2} + n^{1/2} \gamma_n)$. 
\end{theorem}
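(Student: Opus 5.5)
The plan is to reduce \Cref{thm:mcmc} to the two general results, \Cref{thm:general} (case 1, $p=1$) and \Cref{thm:pr_average}. The work is therefore to verify \Crefrange{ass:step_size}{ass:last} under \Cref{ass:sgd}, plus the boundedness of $\{\beta_{2p}^y(m)\}$ (equivalently $\sup_k \lVert y_k\rVert_{L_{2p}}<\infty$) required in \Cref{lem:general_remainder}. After that, the stated rates follow by reading off the bounds with $\gamma_n = \gamma_1 n^{-a}$.

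\textbf{Structural and noise assumptions.} \Cref{ass:step_size} holds by hypothesis. \Cref{ass:jacobian,ass:general_sa} are part of \Cref{ass:sgd}. Strong convexity gives $\nabla\bar f(x^*)\succeq \mu\identity$. Hence, when $a=1$, the condition $\gamma_1\mu>1/2$ makes $\bar A_a = \nabla\bar f(x^*) - \frac{1}{2\gamma_1}\identity$ positive definite, so $-\bar A_a$ is Hurwitz. When $a<1$ we have $\bar A_a=\nabla \bar f(x^*)$.

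Under DV3 with $V=e^{\tilde V}$, \Cref{ass:noise_markov:drift} follows. To see this, Jensen/DV3 gives $PV\le e^{-\tilde V'+bs}V$; since $\tilde V'\ge 1$, $V$ contracts off the small sublevel sets, and on those sets it is bounded by the finiteness clause of DV3. Irreducibility and aperiodicity are assumed, and \Cref{ass:noise_operator_compatibility} is imposed in \Cref{ass:sgd}. For the SGD case I take $W_k\equiv 0$, so \Cref{ass:noise_martingale} is trivial. Alternatively, if additive noise is present, it is assumed to satisfy \Cref{ass:noise_martingale}. \Cref{ass:CLT} for $p=1$ is supplied by \citep[Theorem 3]{srikant2024CLT}, i.e.\ \Cref{lem:martingale_clt}. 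That lemma needs the third-moment bound $\beta_3^{\Gamma^{-1/2}M}$, which follows from the Poisson-solution bounds $\lVert \Phi\rVert^{2p}\le K^\Phi(V+1)$ once $2p\ge 3$. This is the reason to apply \Cref{ass:noise_operator_compatibility} with $p=2$: exponential moments under DV3 allow any finite $p$.

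\textbf{Main obstacle: \Cref{ass:yn_moment} with a uniform bound.} I would invoke \citep{borkar25}. Under DV3, ODE@$\infty$ and Lipschitz $f$, that work proves stability $\sup_k\lVert x_k\rVert_{L_{q}}<\infty$ and the rate $\limsup_k \lVert y_k\rVert_{L_{q}}<\infty$ for every $q$ (the form \eqref{eq:yn_bounds_Lp}). Combined with finiteness of each $\lVert y_k\rVert_{L_{2p}}$ for finite $k$, which follows by induction from Lipschitz $f$ and finite moments of $f(x^*,\xi_k)$, this gives $\sup_k \lVert y_k\rVert_{L_{2p}}<\infty$. In particular $\{\beta_{2p}^y(m)\}$ is bounded. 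If the cited result is only stated as a $\limsup$, I would use the finite-$k$ induction to close the gap. The delicate point is checking that the step-size regime ($a>1/2$, and $\gamma_1\mu>1/2$ when $a=1$) matches the hypotheses of \citep{borkar25}; strong convexity is what supplies the needed Lyapunov contraction for the centered iterate.

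\textbf{Conclusion.} With all assumptions verified, \Cref{thm:general}(1) gives $\mathcal W_1(y_n,U_\infty)=\mathcal O(\gamma_n^{1/6}(\log\gamma_n^{-1})^{1/3}+E_n^a/\gamma_n)$, and $U_\infty\sim\Sigma_a^{1/2}Z$. We have $E_n^a/\gamma_n=\mathcal O(n^{a-1})$ for $a<1$ and $\mathcal O(1/n)$ for $a=1$, and absorbing this term into the stated $\mathcal O(\gamma_n^{1/6}\log\gamma_n^{-1})$ is valid only for $a\le 6/7$ or $a=1$ (cf.\ the discussion after \Cref{thm:general}). For $a\in(6/7,1)$ one must either restrict $a$ accordingly or keep the $E_n^a/\gamma_n=n^{a-1}$ term in the bound. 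For the average, \Cref{thm:pr_average} with $p=1$ and $a\in(1/2,1)$ applies. Its martingale term $\mathcal W_1(n^{-1/2}\sum M_k,\Gamma^{1/2}Z)$ is $\mathcal O(\log n/\sqrt n)$ by \citep[Theorem 1]{srikant2024CLT}. The remaining terms are $\mathcal O((n\gamma_n)^{-1/2}+n^{1/2}\gamma_n+\sqrt{\gamma_n})$, and both the martingale term and $\sqrt{\gamma_n}$ are dominated by $(n\gamma_n)^{-1/2}$. Finally, multiplying by $\bar A^{-1}$, a Lipschitz map, transfers the bound from $\bar A\bar y_n$ to $\bar y_n$ versus $\bar\Sigma^{1/2}Z$, at the cost of a factor $\lVert\bar A^{-1}\rVert$.
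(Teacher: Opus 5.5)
Your proposal is correct and follows the paper's route. The paper gives no separate proof: it justifies \Cref{thm:mcmc} by the same verifications you make (DV3 gives the drift condition with $V=e^{\tilde V}$, \citep[Theorem 3]{srikant2024CLT} gives \Cref{ass:CLT} for $p=1$, and \citep{borkar25} gives \Cref{ass:yn_moment}), followed by \Cref{thm:general} (case 1) and \Cref{thm:pr_average}. Two side remarks: with $W_k\equiv 0$, clause (c) of \Cref{ass:noise_martingale} fails as written, so what you actually need is nondegeneracy of the Markov-noise covariance so that $\Gamma\succ 0$, which the paper also leaves implicit; and \citep{borkar25} gives $L_4$ rather than all moments, which is all you need.

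Your caveat is correct, and it is a defect of the statement rather than of your argument. For $a<1$ the term $E_n^a/\gamma_n\asymp n^{a-1}$ is genuinely present even in $\mathcal{W}_1$: for linear SGD with Gaussian noise, $y_n$ is Gaussian with covariance off by $\Theta(1/(n\gamma_n))$, as in the computation after \Cref{thm:general}. Hence the last-iterate rate $\mathcal{O}(\gamma_n^{1/6}\log\gamma_n^{-1})$ holds only for $a\le 6/7$ or $a=1$, and the averaged rate is meaningful only for $a\in(1/2,1)$.
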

\Cref{thm:mcmc} applies to a broad class of stochastic optimization problems where the data stream is generated by Markov Chain Monte Carlo (MCMC) sampling. 
A canonical example is maximum likelihood estimation for latent variable models, where the gradient of the log-likelihood involves an expectation over the posterior of the latent variables \citep{geyer1994convergence}. 
When this posterior is intractable, it is approximated by running a Metropolis-Hastings sampler or unadjusted Langevin algorithm (ULA) targeting the posterior distribution. 
Provided the target density satisfies standard curvature conditions (e.g., log-concavity or dissipative drift), these samplers satisfy the DV3 condition, thereby enabling the rigorous application of our convergence rates to these intractable inference problems.

\section{Extensions and Open Problems}
In this work, we introduced a framework for deriving $\mathcal{W}_p$ error bounds for stochastic approximation. 
This section discusses some potential improvements and limitations of the probabilistic framework considered in this paper.

\subsection{Wasserstein-$p$ Convergence in CLT for Martingale Differences}
Our analysis crucially relies on \nonasymptotic central limit theorems to bound the distance between the stochastic approximation iterates and their Gaussian limits. 
While sharp rates in the Wasserstein-$p$ distance for all $p \geq 1$ are well-established for sums of independent random variables \citep{bonis2020steinsmethodnormalapproximation,bonis2024improved}, the literature on more general martingale difference sequences is currently limited to $\mathcal{W}_1$.

A feature of our proof technique is its modularity with respect to the underlying CLT. 
We emphasize that the restriction to $\mathcal{W}_1$ is a consequence of the current state of martingale limit theory, not an intrinsic limitation of our analysis. 
Should a quantitative martingale CLT for $\mathcal{W}_p$ with $p \geq 2$ be established, our framework would seamlessly extend these convergence rates to stronger metrics without modification. 
This would provide tighter control over the moments of the error distribution and a significant strengthening for applications requiring high-confidence bounds.

\subsection{Delayed Gratification: Transient Phase of Polyak-Ruppert Averaging}\label{disc:fast_rates}
A cornerstone result in the theory of stochastic approximation is that Polyak-Ruppert (PR) averaging outperforms standard SA with an asymptotically optimal convergence rate. 
When the last iterate is generated with \stepsize $\gamma_n \propto 1/n$ and the PR average is generated with \stepsize $\gamma_n \propto n^{-2/3}$, \Cref{thm:general,thm:pr_average} imply the bounds
\begin{equation}\label{eq:moment_bounds_discussion}
\begin{split}
    \lVert x_n - x^* \rVert_{L_p} 
    &\leq \sqrt{\gamma_1} \frac{\lVert \Sigma_1^{1/2} Z \rVert_{L_p}}{\sqrt{n}} + \mathcal{O}\left( \frac{1}{n^{2/3}} \right), 
    \\ 
    \lVert \bar{x}_n - x^* \rVert_{L_p} &\leq 
    \frac{\lVert \bar{\Sigma}^{1/2}Z \rVert_{L_p}}{\sqrt{n}} + \mathcal{O}\left(\frac{1}{n^{2/3}} \right) .     
\end{split}    
\end{equation}
The relation $\bar{\Sigma} \preceq \gamma_1 \Sigma_1$ implies asymptotic efficiency of the PR average.

Asymptotic theory and existing finite-time bounds describe the superiority of Polyak-Ruppert averaging, which is consistent with the bounds in \eqref{eq:moment_bounds_discussion}. 
In practice, however, the last iterate is sometimes observed to yield lower error for a large number of iterations \citep{moulines_bach}. 
Unlike prior works on finite-time bounds such as \citep{haque2023tightfinitetimebounds}, the leading terms in \eqref{eq:moment_bounds_discussion} are exact. 
This suggests that the observation in \citep{moulines_bach} may be attributable to the higher order terms in the bounds in \eqref{eq:moment_bounds_discussion}.
A refined analysis characterizing the critical sample size $n$ required for the averaged estimator to surpass the last iterate remains a subject of theoretical interest.

\section{Conclusion}
In this paper, we developed a \nonasymptotic framework for analyzing the distributional convergence of \nonlinear stochastic approximation algorithms. 
Our main results provide explicit, finite-sample bounds on the Wasserstein-$p$ distance between the law of the normalized iterates and their limiting Gaussian distribution. 
These bounds serve as a quantitative counterpart to classical asymptotic CLTs and provide a richer characterization of the algorithm's behavior than what is available from moment bounds alone.

A key feature of our framework is its modularity, which allows us to translate \nonasymptotic CLTs for the underlying noise process into guarantees for both last-iterate and Polyak-Ruppert averaged estimators. 
As a direct consequence, our Wasserstein bounds yield high-probability concentration inequalities that are asymptotically sharp and capture the transition from finite-sample, potentially heavy-tailed dynamics to the Gaussian behavior predicted by the central limit theorem. 
We demonstrated the utility of this approach by providing a refined analysis of the error tails in linear stochastic approximation and establishing distributional convergence rates for SGD with Markovian data.

\section*{Acknowledgement}
We thank Professor Siva Theja Maguluri from Georgia Institute of Technology for drawing our attention to the reference \citep{liu2023wassersteinpboundscentrallimit}.

\appendix
\section{Auxiliary Results}
\subsection{Convergence of a Recursive Sequence}\label{app:recursion_convergence}
The following recursion arises frequently, hence we dedicate a result that will be used repeatedly. 
\begin{lemma}\label{lem:recursion}
    Consider a non-negative scalar sequence $\{x_k\}$ that satisfies the recursion
    \begin{equation}\label{eq:recursion_lemma}
        x_{k+1} \leq (1 - \lambda \gamma_k) x_k + \alpha_k , 
    \end{equation}
    where $\lambda \gamma_1 \in (0, 1)$, $\gamma_k \propto k^{-a}$, and $\alpha_k \propto k^{-b}$ for some $0 < a  < 1$ and $b > a$. 
    The solution to \eqref{eq:recursion_lemma} satisfies
    \begin{equation}
        x_{n+1} \leq \frac{1}{\lambda} \frac{\alpha_n}{\gamma_n} 
        + \mathcal{O}\left(\frac{\alpha_n}{\gamma_n^2 n} \right)
        .
    \end{equation} 
\end{lemma}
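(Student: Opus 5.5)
The plan is to compare $x_{n+1}$ with the candidate fixed point $\alpha_n/(\lambda\gamma_n)$ and to show that the discrepancy satisfies a recursion of the same form, only with a smaller forcing term. Set $c_k = \alpha_k/(\lambda\gamma_k)$, which is proportional to $k^{a-b}$ and decreases because $b>a$. Let $e_k = x_k - c_{k-1}$, or more conveniently $e_{k+1} = x_{k+1} - c_k$.

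\textbf{Step 1: recursion for the discrepancy.} Substituting into \eqref{eq:recursion_lemma} gives
\begin{equation*}
x_{k+1} - c_k \le (1-\lambda\gamma_k)(x_k - c_{k-1}) + (1-\lambda\gamma_k)(c_{k-1}-c_k) + \bigl(\alpha_k - \lambda\gamma_k c_k\bigr).
\end{equation*}
The last bracket vanishes by the choice of $c_k$. For the middle term, $c_{k-1}-c_k = c_k\bigl((1-1/k)^{a-b}-1\bigr) = \mathcal{O}(c_k/k)$. The discrepancy therefore obeys $e_{k+1} \le (1-\lambda\gamma_k) e_k + \tilde\alpha_k$ with $\tilde\alpha_k = \mathcal{O}\bigl(\alpha_k/(\gamma_k k)\bigr) \propto k^{-(b-a+1)}$. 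Since $e_k$ may be negative, I will work with the positive part: $e_{k+1}^+ \le (1-\lambda\gamma_k)e_k^+ + \tilde\alpha_k$, which holds because $1-\lambda\gamma_k\in(0,1)$.

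\textbf{Step 2: crude bound on the new recursion.} Unroll the recursion for $e_k^+$:
\begin{equation*}
e_{n+1}^+ \le \prod_{j=1}^n (1-\lambda\gamma_j)\, e_1^+ + \sum_{k=1}^n \tilde\alpha_k \prod_{j=k+1}^n (1-\lambda\gamma_j).
\end{equation*}
The first term is at most $\exp(-\lambda\sum_{j\le n}\gamma_j) = \exp(-c\, n^{1-a})$, which is faster than any polynomial and hence $o(\alpha_n/(\gamma_n^2 n))$. For the sum, I will use the standard estimate that for a forcing term $\propto k^{-\beta}$ with $\beta > a$, the sum is $\mathcal{O}(k^{-\beta}/\gamma_k)$ evaluated at $k=n$. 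The sum is split at $n/2$. On $k\le n/2$ the product is at most $\exp(-c' n^{1-a})$, so that part is negligible. On $k>n/2$ we have $\tilde\alpha_k \le C\tilde\alpha_n$, and $\sum_{k>n/2}\prod_{j>k}(1-\lambda\gamma_j) \le C/(\lambda\gamma_n)$ because $\gamma_j \asymp \gamma_n$ on that range and the geometric sum with ratio $1-\lambda\gamma_n$-ish is bounded by $1/(\lambda \gamma_n)$ up to constants. Hence $e_{n+1}^+ = \mathcal{O}(\tilde\alpha_n/\gamma_n) = \mathcal{O}(\alpha_n/(\gamma_n^2 n))$.

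\textbf{Step 3: conclude.} We have $x_{n+1} \le c_n + e_{n+1}^+ = \frac{1}{\lambda}\frac{\alpha_n}{\gamma_n} + \mathcal{O}(\alpha_n/(\gamma_n^2 n))$. Since $a<1$, the correction $\alpha_n/(\gamma_n^2 n)$ is indeed of lower order than $\alpha_n/\gamma_n$.

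The main obstacle is Step 2's crude bound, in particular handling the interval $k\in(n/2,n]$ carefully. The comparison $\gamma_j\ge\gamma_n$ on that range does give $\prod_{j=k+1}^n(1-\lambda\gamma_j)\le (1-\lambda\gamma_n)^{n-k}$, which makes the geometric sum bound immediate. The other delicate point is that Step 2 only needs a constant-factor estimate, not a sharp one, because the sharp leading constant $1/\lambda$ has already been extracted in Step 1.
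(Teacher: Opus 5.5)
Your proof is correct, and it takes a genuinely different route from the paper's.

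\textbf{The paper's route.} The paper runs a barrier (supersolution) argument. It posits $x_n\le C\alpha_n/\gamma_n$ with $C>1/\lambda$. It shows the induction step survives as long as $(1-C\lambda)/C\le \gamma_n^{-1}\left(\frac{\alpha_{n+1}/\alpha_n}{\gamma_{n+1}/\gamma_n}-1\right)\approx -\frac{b-a}{\gamma_n n}$, and reads off $C=\lambda^{-1}+\mathcal{O}(1/(\gamma_n n))$. It then settles the entry point with the homogeneous comparison $x_{k+1}-C\alpha_{k+1}/\gamma_{k+1}\le(1-\lambda\gamma_k)(x_k-C\alpha_k/\gamma_k)$.

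\textbf{Your route.} You subtract the exact quasi-equilibrium $c_k=\alpha_k/(\lambda\gamma_k)$ instead of an inflated multiple of it. This leaves an inhomogeneous recursion with lower-order forcing $\tilde\alpha_k=(1-\lambda\gamma_k)(c_{k-1}-c_k)=\mathcal{O}(\alpha_k/(\gamma_k k))$. Only a constant-factor bound on the discrepancy is then needed.

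\textbf{What each buys.} Your version gains precision:
\begin{itemize}
\item the leading constant is exactly $1/\lambda$;
\item the remainder and the $\exp(-c\,n^{1-a})$ forgetting of the initial condition appear explicitly;
\item the indexing ($x_{n+1}$ against $\alpha_n/\gamma_n$) matches the statement.
\end{itemize}
In the paper's route, a single fixed $C$ must satisfy the step condition at every index of the induction chain. That condition is most restrictive at the smallest index, so a fixed $C$ only yields $x_n\le C\alpha_n/\gamma_n$ for large $n$ with some $C>1/\lambda$. Getting $C=\lambda^{-1}+\mathcal{O}(1/(\gamma_n n))$ requires restarting the induction at an index comparable to $n$ and absorbing a stretched-exponentially small transient. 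That transient is also what the paper's contradiction step for the entry point leaves implicit, and it is exactly what your split at $n/2$ handles. The paper's argument, for its part, is shorter and avoids explicit sums.

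\textbf{Two cosmetic fixes.}
\begin{itemize}
\item $c_0$ is undefined, so unroll from $e_2=x_2-c_1$ rather than from $e_1$.
\item State explicitly that $\tilde\alpha_k\ge 0$, which holds because $c_k$ decreases when $b>a$. This is what justifies passing to positive parts.
\end{itemize}
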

\begin{proof}
    The result is proved using induction, where we first prove the induction step and prove the base case afterwards. 
    Suppose for some $n \geq 1$ and any constant $C > 1/\lambda$,
    \begin{align*}
        x_{n} \leq C \cdot \frac{\alpha_n}{\gamma_n} . 
    \end{align*}
    From the recursion,
    \begin{align*}
        x_{n+1} \leq (1 - \lambda \gamma_n) x_n + \alpha_n 
        \leq \frac{\alpha_n}{\gamma_n} \left[(1 - \lambda \gamma_n) x_n \cdot \frac{\gamma_n}{\alpha_n} + \gamma_n  \right] .
    \end{align*}
    From the premise $x_n \leq C \cdot  \alpha_n/\gamma_n$ we have
    \begin{align*}
        x_{n+1} \leq  \frac{\alpha_n}{\gamma_n} \left[C(1 - \lambda \gamma_n)  + \gamma_n  \right] 
        = C \frac{\alpha_n}{\gamma_n} + \left(1 - C \lambda \right)\alpha_n 
        .
    \end{align*}
    The statement $x_{n+1} \leq C \alpha_{n+1}/\gamma_{n+1}$ holds if
    \begin{align*}
        C \frac{\alpha_n}{\gamma_n} + \left(1 - C \lambda \right)\alpha_n 
        \leq 
        C\frac{\alpha_{n+1}}{\gamma_{n+1}}
        \Leftrightarrow 
        \frac{1-C\lambda}{C} 
        &\leq 
        \alpha_n^{-1} \left(\frac{\alpha_{n+1}}{\gamma_{n+1}} - \frac{\alpha_n}{\gamma_n} \right)
        \\ &
        =  \frac{1}{\gamma_n} \left(
        \frac{\alpha_{n+1}/\alpha_n}{\gamma_{n+1}/\gamma_n} - 1\right)
    \end{align*}
    Using that
    \begin{align*}
        \frac{\alpha_{n+1}/\alpha_n}{\gamma_{n+1}/\gamma_n} 
        = 1 + \frac{a-b}{n} + \Theta\left(n^{-2} \right), 
    \end{align*}
    we obtain 
    \begin{equation}
        \frac{1 - C \lambda}{C} 
        =
        \mathcal{O}\left(\frac{1}{\gamma_n n} \right)
    \end{equation}
    The upper bound decays to zero, and the induction statement holds for any constant $C > 1/\lambda$ that satisfies the above. 
    By rearranging the inequality, we obtain that the induction statement holds with a more precise constant $C = \lambda^{-1} + \mathcal{O}(1/\gamma_n n)$.

    We now prove the entry point, that there exists some $n \geq 1$ such that $x_n \leq C \alpha_n/\gamma_n$. 
    In proving the above statement, we showed that for any $k \geq 1$ and a constant $C > 1/\lambda$,
    \begin{align*}
        (1 - \lambda \gamma_k) C \frac{\alpha_k}{\gamma_k} + \alpha_k \leq C \frac{\alpha_{k+1}}{\gamma_{k+1}} ,
    \end{align*}
    which implies
    \begin{align*}
        \alpha_k \leq C \frac{\alpha_{k+1}}{\gamma_{k+1}} - (1 - \lambda \gamma_k) C \frac{\alpha_k}{\gamma_k} . 
    \end{align*}
    From the recurrence relation $x_{k+1} \leq (1 - \lambda \gamma_k) x_k + \alpha_k$, we have from the preceding equation:
    \begin{align*}
        x_{k+1} \leq (1 - \lambda \gamma_k) x_k + \left[ C \frac{\alpha_{k+1}}{\gamma_{k+1}} - (1 - \lambda \gamma_k) C \frac{\alpha_k}{\gamma_k}\right] . 
    \end{align*}
    Rearranging, we obtain
    \begin{align*}
        x_{k+1} - C \frac{\alpha_{k+1}}{\gamma_{k+1}}
        \leq (1 - \lambda \gamma_k) \left( x_k - C \frac{\alpha_k}{\gamma_k}\right) . 
    \end{align*}
    This implies that
    \begin{equation}
        x_{n+1} - C \frac{\alpha_{n+1}}{\gamma_{n+1}} \leq \prod_{k=1}^n (1 - \lambda \gamma_k) \left(x_1 - C \frac{\alpha_1}{\gamma_1}\right) 
    \end{equation}
    The statement follows by contradiction: the sequence cannot satisfy $x_{n} > C \alpha_n/\gamma_n$ for all $n \geq 1$.

\end{proof}

\subsection{Rosenthal's Inequalities}\label{sec:BDG}
We use the Rosenthal inequality for martingale difference sequences from \citep{pinelis1994optimum}, which stems from the ideas of \citep{rosenthal1970subspaces,burkholder1973distribution}, and refer to the inequality as Burkholder-Rosenthal (BR) inequality.

We first consider $p \geq 2$. 
For a MDS $\{M_k\}_{k=1}^n$, the BR inequality \citep[Theorem 4.1]{pinelis1994optimum} states that for some constant $C_p^{BR} > 0$,
\begin{align*}
    \mathbb{E}^{1/p}  \left(\max_{k \leq n} \left\lVert \sum_{j=1}^k M_j \right\rVert \right)^p
    \leq C_p^{BR} \left( 
    \left\lvert 
    \left( \sum_{k=1}^n \mathbb{E}[\lVert M_k \rVert^2 | \mathcal{H}_{k-1}]\right)^{1/2}
    \right\rvert_{L_p}
    +
    \left\lvert 
    \max_{k \leq n} \lVert M_k \rVert
    \right\rvert_{L_p}
    \right) . 
\end{align*}
From the inequality $\max_{k \leq n} a_k \leq (\sum_k a_k^p)^{1/p}$ for any $p \geq 1$ and non-negative scalar sequence $\{a_k\}$, we obtain the inequalities
\begin{align*}
    \mathbb{E}^{1/p} \left(\max_{k \leq n} \lVert M_k \rVert \right)^p
    \leq 
    \mathbb{E}^{1/p} \left(\sum_{k=1}^n \lVert M_k \rVert^p\right)
    &\leq 
    \mathbb{E}^{1/p}\left[ \left(
    \max_{k \leq n} \left\lVert \sum_{j=1}^k M_j \right\rVert \right)^p
    \right]
    \\ &
    \leq \left(\sum_{k=1}^n \lVert M_k \rVert^p_{L_p} \right)^{1/p}
    . 
\end{align*}
Substituting for the second term in the upper bound of the BR inequality, we obtain
\begin{equation}\label{eq:BR}
    \left\lVert \sum_{k=1}^n M_k \right\rVert_{L_p}
    \leq C_p^{BR} \left(
        \sqrt{\left\lvert \sum_{k=1}^n \mathbb{E}\left[ \lVert M_k \rVert^2 | \mathcal{H}_{k-1} \right] \right\rvert_{L_{p/2}}}
        + \left(\sum_{k=1}^n \lVert M_k \rVert_{L_p}^p
    \right)^{1/p} \right) .
\end{equation}

We apply the BR inequality for a weighted sum of a MDS. 
For any deterministic sequence $\{\alpha_k\}$, the sequence $\{\alpha_k M_k\}$ is another MDS.
From \eqref{eq:BR} and \Cref{ass:noise_martingale}, we obtain for any $p \geq 2$,
\begin{align*}
    \left\lVert  
    \sum_{k=1}^n \alpha_k M_k
    \right\rVert_{L_p}
    &\leq 
    C_p^{BR} \left(
    \sqrt{\left\lvert \sum_{k=1}^n \alpha_k^2 \mathbb{E}\left[\lVert M_k \rVert^2 | \mathcal{H}_{k-1} \right] \right\rvert_{L_{p/2}}}
    + 
    \left(\sum_{k=1}^n \alpha_k^p \lVert M_k \rVert_{L_p}^p
    \right)^{1/p}\right)
    \\ &
    \leq C_p^{BR}  \sup_{k \geq 1} \left\{ \sqrt{\left\lvert 
        \mathbb{E}\left[  \left\lVert M_k \right\rVert^2 | \mathcal{H}_{k-1} \right]
        \right\rvert_{L_{p/2 }}} + \lVert M_k \rVert_{L_p} \right\} \sqrt{\sum_{k=1}^n \alpha_k^2}
    \\ &
    = C_p^{BR} \rosenthal{M} \sqrt{\sum_{k=1}^n \alpha_k^2} . 
    \numberthis \label{eq:br_inequality}
\end{align*}
By Jensen's inequality, it holds that for any $p \in [1, 2]$,
\begin{equation}
    \left\lVert \sum_{k=1}^n \alpha_k M_k \right\rVert_{L_p} \leq \left\lVert \sum_{k=1}^n \alpha_k M_k \right\rVert_{L_2}
    \leq C_2^{BR} \rosn{2}{M} \sqrt{\sum_{k=1}^n \alpha_k^2} .
\end{equation}
When $p < 2$, we therefore denote by $C_p^{BR}$ the universal constant $C_2^{BR}$. 
We will often use the BR inequality for a partial sum, where for any $k_1 < k_2$, 
\begin{align*}
    \Lp{\sum_{k=k_1}^{k_2 - 1} \alpha_k M_k}
    \leq C_p^{BR} \rosn{p}{M} \sqrt{\sum_{k=k_1}^{k_2 - 1} \alpha_k^2} . 
\end{align*}

\subsection{Nonasymptotic Central Limit Theorems for Weighted Sums}\label{proof:martingale_clt_weighted}
Here we prove \Cref{lem:martingale_clt_weighted}.

Let $\hat{Z}_m = (\cumstep_m \Gamma)^{-1/2}\sum_{k=k_m}^{k_{m+1} - 1} \sqrt{\gamma_k} M_k$ be the weighted average of interest, and $S_m = (I_m\Gamma)^{-1/2} \sum_{k=k_m}^{k_{m+1} - 1} M_k$ the unweighted counterpart.
By the triangle inequality, 
\begin{align*}
    \mathcal{W}_{p}(\hat{Z}_m, Z) 
    &\leq 
    \lVert \hat{Z}_m -  S_m\rVert_{L_{p}} +
    \mathcal{W}_{p} (S_m, Z) 
    .
\end{align*}

We have by the BR inequality that for $\alpha_k = \cumstep_m^{-1/2} \sqrt{\gamma_k} - I_m^{-1/2}$,
\begin{align*}
    \lVert \hat{Z}_m -  S_m\rVert_{L_p} 
    = 
    \left\lVert \sum_{k=k_m}^{k_{m+1} - 1} \alpha_k \Gamma^{-1/2} M_k  \right\rVert_{L_p}
    &\leq C^{BR}_{p} \rosenthal{\Gamma^{-1/2}M} \sqrt{\sum_{k=k_m}^{k_{m+1} - 1} \alpha_k^2} . 
\end{align*}
The sum is evaluated next. 
We have for $\bar{\gamma}_m = \cumstep_m/I_m$ the identities
\begin{equation}\label{eq:sqrt_gamma_difference}
    \sqrt{\sum_{k=k_m}^{k_{m+1} - 1} \alpha_k^2}
    = \sqrt{
    \frac{1}{I_m} \sum_{k=k_m}^{k_{m+1} - 1}\left( \sqrt{\frac{I_m}{\cumstep_m} \gamma_k} - 1 \right)^2}
  = \sqrt{\frac{1}{I_m \bar{\gamma}_m} \sum_{k=k_m}^{k_{m+1}-1} \left(\sqrt{\gamma_k} - \sqrt{\bar{\gamma}_m} \right)^2}
    .    
\end{equation}
Using Taylor's remainder theorem for the summand, we obtain
\begin{align*}
    \sum_{k=k_m}^{k_{m+1} - 1} (\sqrt{\gamma_k} - \sqrt{\gamma_{k_m}})^2
    \leq \frac{a^2}{4}\sum_{k=k_m}^{k_{m+1} - 1} \gamma_{k_m}\frac{(k - k_m)^2}{k_m^2}
    \leq \frac{a^2 I_m^3}{4 k_m^2} \gamma_{k_m} . 
\end{align*}
Substituting for \eqref{eq:sqrt_gamma_difference}, we obtain
\begin{align*}
    \sqrt{\sum_{k=k_m}^{k_{m+1} - 1} \alpha_k^2}
    \leq \frac{a I_m}{2 k_m} \sqrt{\frac{\gamma_{k_m}}{\bar{\gamma}_m}}
    \leq \frac{aI_m}{2 k_m} \left(1 + \frac{I_m}{k_m} \right)^{a/2} 
    \leq \frac{a I_m}{k_m} 
    ,
\end{align*}
where the last inequality uses that $I_m \leq k_m$. 
Summarizing, we have shown that
\begin{equation}\label{eq:weighted_clt}
    \mathcal{W}_p (\hat{Z}_m, Z) 
    \leq 
    C_{p}^{BR} \rosenthal{\Gamma^{-1/2} M} \frac{aI_m}{k_m} 
    +
    \mathcal{W}_{p} (S_m, Z) . 
\end{equation}

\subsection{Proof of \Crefrange{lem:martingale_clt}{lem:bonis_weighted}}\label{app:clt_weighted}
Any weighted combination of a mean-zero iid sequence $\{M_k\}_{k=1}^n$ satisfying $\mathrm{Cov}M_1 = \Gamma$ and $\lVert \sqrt{\gamma_1} M_1 \rVert_{L_p} < \infty$ for $p > 2$ converges asymptotically to a Gaussian distribution as
\begin{equation}
    \lim_{n \to \infty} \frac{1}{\sqrt{\sum_{k=1}^n \gamma_k}} \sum_{k=1}^n \sqrt{\gamma_k} M_k \overset{d.}{=} \mathcal{N}\left(0, \Gamma \right) 
\end{equation}
by the Lyapunov central limit theorem. 
Nonasymptotic CLTs often are established for the unweighted sum $S_m$, and we use \Cref{lem:martingale_clt_weighted} to establish the $\mathcal{W}_p$--CLT for the weighted $\hat{Z}_m$.

For this section, we use $\mathcal{W}_{p, 2}$ to denote the Wasserstein-$p$ distance defined with the Euclidean cost
\begin{align*}
    \mathcal{W}_{p, 2}(X, Y) = \inf \mathbb{E}^{1/p} \lVert X - Y \rVert_{2}^p , 
\end{align*}
where it holds that $\mathcal{W}_p(S_m, Z) \leq \sqrt{\lVert Q \rVert_2} \mathcal{W}_{p, 2} (S_m, Z)$.

\textbf{Proof of \Cref{lem:martingale_clt}:}
We use \citep[Theorem 1]{srikant2024CLT} to obtain that for some $\tilde{C}_1 > 0$,
\begin{equation}\label{eq:thm1_srikant}
\begin{split}
    \mathcal{W}_{1, 2} \left( 
        S_m, Z
    \right)
    &\leq 
    \tilde{C}_1 \left(\beta_3^{\Gamma^{-1/2} M} \right)^3 \sqrt{\lVert Q \rVert_2 \lVert \Sigma_a \rVert_2} \frac{d \log I_m}{\sqrt{I_m}} \beta_3^{\Gamma^{-1/2} M}     
    \\ & + 
    \frac{\tilde{C}_1}{\sqrt{I_m}} \left\lvert 
    \sum_{k=k_m}^{k_{m+1} - 1} \frac{1}{\sqrt{k_{m+1} - k}}
  \mathrm{Tr} \left( \Gamma^{-1/2}\mathbb{E}\Gamma_k \Gamma^{-1/2} - \identity \right)  \right\rvert
.    
\end{split}
\end{equation}
Define
\begin{align*}
    \Gamma^\xi_k &\coloneqq \mathbb{E}\left[ \left(\Phi(\xi_k) - (P \Phi)(\xi_k) \right)  \left(\Phi(\xi_k) - (P \Phi)(\xi_k) \right)^\dagger \Big| \mathcal{H}_{k-1}  \right],
    \\
    \Gamma^\xi &\coloneqq \mathbb{E}_{\xi_1 \sim \pi} \left[ \left(\Phi(\xi_2) - (P \Phi(\xi_{1}) \right) \left(\Phi(\xi_2) - (P \Phi(\xi_{1}) \right)^\dagger \right]. 
\end{align*}
By \citep[Theorem 3]{srikant2024CLT}, there exists constants $C_\xi > 0$ and $\rho_\xi \in [0, 1)$ such that 
\begin{align*}
    \left\| \mathbb{E}\Gamma_k^\xi - \Gamma^k \right\|_{HS} \leq C_\xi \rho_\xi^k.
\end{align*}
Combining with \Cref{ass:martingale:covariance}, we obtain that
\begin{align*}
    \left\| \mathbb{E}\Gamma_k - \Gamma \right\|_{HS} \leq 
    \left( C_W + C_\xi\right) \left( \rho_W \vee \rho_\xi \right)
    .
\end{align*}
and conclude that for some $\tilde{C}_1 > 0$ and $\rho_M \in (0, 1]$, 
\begin{align*}
    \frac{\tilde{C}_1}{\sqrt{I_m}} \left\lvert 
    \sum_{k=k_m}^{k_{m+1} - 1} \frac{1}{\sqrt{k_{m+1} - k}}
  \mathrm{Tr} \left( \Gamma^{-1/2}\mathbb{E}\Gamma_k \Gamma^{-1/2} - \identity \right)  \right\rvert
    & \leq 
    \frac{\tilde{C}_1'}{\sqrt{I_m}} 
    \sum_{k=k_m}^{k_{m+1} - 1} \frac{\rho_M^k}{\sqrt{k_{m+1} - k}}
    \\ &
    \leq \frac{\tilde{C}_1'}{\sqrt{I_m}} \sum_{j=1}^{I_m} \frac{\rho_M^{k_{m+1} - j}}{\sqrt{j}}
    \\ &
    = \mathcal{O}\left(\rho_M^{k_m}\right) .
\end{align*}
Therefore, when $k_m$ is large, the $\rho_M^{k_m}$ rate can be subsumed by the first term in \eqref{eq:thm1_srikant} to obtain that for some $C_1 > 0$, 
\begin{align*}
    \mathcal{W}_1 (S_m, Z) 
    \leq
    C_1\left(\beta_3^{\Gamma^{-1/2} M} \right)^3 \sqrt{\lVert Q \rVert_2 \lVert \Sigma_a \rVert_2} \frac{d \log I_m}{\sqrt{I_m}}
    ,
\end{align*}
The claimed result is obtained by combining with \Cref{lem:martingale_clt_weighted}.

\textbf{Proof of \Cref{lem:bonis_weighted}:}
By \citep[Theorem 1]{bonis2020steinsmethodnormalapproximation}, there exists a constant $\tilde{C}_p > 0$ such that
\begin{align*}
    \mathcal{W}_{p, 2} (S_m, Z) 
    &\leq 
    \tilde{C}_p \sqrt{\lVert Q \rVert_2}\frac{d^{1/4} (\beta_4^{\Gamma^{-1/2}M})^2 + (\beta_{p+2}^{\Gamma^{-1/2}M})^{1+2/p} }{\sqrt{I_m}}
\end{align*}
\Cref{lem:bonis_weighted} is obtained by combining with Lemma \ref{lem:martingale_clt_weighted}, using that $\tilde{C}_p$ and $C_p^{BR}$ stem from the same Rosenthal inequality and therefore have the same growth rates with respect to $p$.

\subsection{Error Recursion}\label{app:error_recursion}

We derive the recursion for $y_n = \gamma_n^{-1/2}(x_n - x^*)$ for general stochastic approximation algorithms of the form
\begin{equation}
    x_{k+1} = x_k - \gamma_k (f(x_k, \xi_k) + W_k). 
\end{equation}
Recall $\bar{f}(x) = \mathbb{E}_\pi[f(x, \xi)]$ is the expectation over the stationary distribution of $\{\xi_k\}$. 
From the Poisson equation \eqref{eq:poisson}, the update direction is written as
\begin{align*}
    f(x_k, \xi_k) + W_k 
    &=
    \left[f(x_k, \xi_k) - f(x^*, \xi_k) \right]
    + \left[f(x^*, \xi_k) - \bar{f}(x^*)\right]  + W_k
    \\ &
    =
    \left[f(x_k, \xi_k) - f(x^*, \xi_k) \right]
    + \left[\Phi(\xi_k) - (P\Phi)(\xi_{k-1})\right] 
    \\ &
    +\left[(P\Phi)(\xi_{k-1}) - (P\Phi)(\xi_k) \right]
    + W_k
    \\
    &= \left[f(x_k, \xi_k) - f(x^*, \xi_k) \right]
    + \left[ (P\Phi)(\xi_{k-1}) - (P\Phi)(\xi_{k})\right] - M_k 
    \\ &
    = [f(x_k, \xi_k) - \bar{f}(x_k)]
    - [f(x^*, \xi_k) - \bar{f}(x^*)]
    + \bar{f}(x_k) 
    \\ &
    + \left[ (P\Phi)(\xi_{k-1}) - (P\Phi)(\xi_{k})\right] - M_k 
    .    
\end{align*}
Using the centered function
\begin{equation}\label{eq:gk}
    g(x_k, \xi_k) = f(x_k, \xi_k)  - \bar{f}(x_k)   
\end{equation}
and the Taylor expansion $\bar{f}(x_k) = \nabla \bar{f}(x^*)(x_k - x^*) + (\nabla \bar{f}(\tilde{x}_k) - \nabla \bar{f}(x^*))(x_k - x^*)$ for $\tilde{x}_k = \theta x_k + (1 - \theta) x^*$ with $\theta \in [0, 1]$, we write the update direction as
\begin{align*}
    f(x_k, \xi_k) + W_k 
    &= [g(x_k, \xi_k) - g(x^*, \xi_k)] + \bar{f}(x_k)
    + \left[ (P\Phi)(\xi_{k-1}) - (P\Phi)(\xi_{k})\right] - M_k 
    \\ &
    = \nabla \bar{f}(x^*) (x_k - x^*)
    + (\nabla \bar{f}(\tilde{x}_k) - \nabla \bar{f}(x^*)) (x_k - x^*)
    \\ &
    +[g(x_k, \xi_k) - g(x^*, \xi_k)]
    + \left[ (P\Phi)(\xi_{k-1}) - (P\Phi)(\xi_{k})\right] - M_k 
    .
\end{align*}
The recursion for $\{x_k\}$ is therefore written as
\begin{align*}
    x_{k+1} - x^* &=
    x_k - x^* - \gamma_k \left(\nabla \bar{f}(x^*)(x_k - x^*) - M_k \right)
    \\ &
    -
    \gamma_k (\nabla \bar{f}(\tilde{x}_k) - \nabla \bar{f}(x^*)) (x_k - x^*)
    \\ &
    - \gamma_k \left[ \left(g(x_k, \xi_k) - g(x^*, \xi_k)\right) + \left( (P\Phi)(\xi_{k-1}) - (P\Phi)(\xi_{k})\right) \right]
\end{align*}
Substituting $y_k = \gamma_k^{-1/2}(x_k - x^*)$ and $y_{k+1} = \gamma_{k+1}^{-1/2}(x_{k+1} - x^*)$, we obtain
\begin{equation}\label{eq:yk_draft1}
\begin{split}
    y_{k+1} &= 
    \sqrt{\frac{\gamma_k}{\gamma_{k+1}}} \left(I - \gamma_k \nabla \bar{f}(x^*)\right)y_k + \frac{\gamma_k}{\sqrt{\gamma_{k+1}}} M_k 
    \\ &
    - \sqrt{\frac{\gamma_k}{\gamma_{k+1}}} \cdot \gamma_k (\nabla \bar{f}(\tilde{x}_k) - \nabla \bar{f}(x^*)) y_k 
    \\ &
    - \frac{\gamma_k}{\sqrt{\gamma_{k+1}}} \left[ \left(g(x_k, \xi_k) - g(x^*, \xi_k)\right) + \left[ (P\Phi)(\xi_{k-1}) - (P\Phi)(\xi_{k})\right] \right] .    
\end{split}    
\end{equation}
The operator in the first term is written as
\begin{align*}
    \sqrt{\frac{\gamma_k}{\gamma_{k+1}}} (\mathbb{I} - \gamma_k \nabla \bar{f}(x^*))
    &= \left[\mathbb{I} - \gamma_k \nabla \bar{f}(x^*) \right]
    + \left(\frac{\sqrt{\gamma_k} - \sqrt{\gamma_{k+1}}}{\sqrt{\gamma_{k+1}}} \right) \left[\mathbb{I}
    -  \gamma_k \nabla \bar{f}(x^*) \right]
    . 
\end{align*}
By Taylor's remainder theorem
\begin{align*}
    \frac{\sqrt{\gamma_k} - \sqrt{\gamma_{k+1}}}{\sqrt{\gamma_{k+1}}} = \frac{a}{2k} 
    + \mathcal{O}\left(\frac{1}{k^2}\right) , 
\end{align*}
we obtain that the drift can be expanded as
\begin{align*}
    \sqrt{\frac{\gamma_k}{\gamma_{k+1}}} 
    \left(\identity - \gamma_k \nabla \bar{f}(x^*)\right)     &=
    \left(1 + \frac{a}{2k} + \mathcal{O}\left(\frac{1}{k^2}\right)  \right) \left(\identity - \gamma_k \nabla \bar{f}(x^*) \right) 
    ,
\end{align*}
which can be written as
\begin{align*}
    \left\lVert \sqrt{\frac{\gamma_k}{\gamma_{k+1}}} 
    \left(\identity - \gamma_k \nabla \bar{f}(x^*)\right)      -
    \left(\identity - \gamma_k \nabla \bar{f}(x^*) + \frac{a}{2k} \identity \right) + \frac{a}{2k}   \gamma_k \nabla \bar{f}(x^*) \right\rVert 
    = \mathcal{O}\left(\frac{1}{k^2} \right) .
\end{align*}
When $a < 1$ so that $\gamma_k \gg a/2k$ for large $k$, we obtain 
\begin{equation}\label{eq:a_le_1}
    \left\lVert \sqrt{\frac{\gamma_k}{\gamma_{k+1}}} \left(\identity - \gamma_k \nabla \bar{f}(x^*)\right) 
    - 
    \left(\identity - \gamma_k \nabla \bar{f}(x^*) \right) 
    \right\rVert 
    \leq \frac{a}{2k} \identity + \mathcal{O}\left(\frac{\gamma_k}{k}\right)    
\end{equation}
and when $a = 1$,
\begin{equation}\label{eq:a_eq_1}
    \left\lVert 
    \sqrt{\frac{\gamma_k}{\gamma_{k+1}}} \left(\identity - \gamma_k \nabla \bar{f}(x^*)\right) 
    - 
    \left( \identity - \gamma_k \left(\nabla \bar{f}(x^*) - \frac{a}{2k \gamma_k} \right) \right)
    \right\rVert 
    \leq \frac{\gamma_k}{2k} \lVert \nabla \bar{f}(x^*) \rVert + \mathcal{O}\left(\frac{1}{k^2}\right) .    
\end{equation}
We defined $E_k^a$ in \eqref{eq:Eka_def} as
\begin{align*}
     E_k^a = \left\lVert \sqrt{\frac{\gamma_k}{\gamma_{k+1}}} \left( \identity - \gamma_k \nabla \bar{f}(x^*) \right) - \left(\identity - \gamma_k \bar{A}_a \right) \right\rVert     ,
\end{align*}
where \eqref{eq:a_le_1} demonstrates $E_k^a = \mathcal{O}(1/k)$ when $a < 1$ and \eqref{eq:a_eq_1} demonstrates $E_k^a = \mathcal{O}(1/k^2)$ when $a = 1$.
Substituting $\bar{A}_a$ for the recursion for $\{y_k\}$ in \eqref{eq:yk_draft1}, we obtain 
\begin{equation}
    y_{k+1} = y_k - \gamma_k \bar{A}_a y_k + \sqrt{\gamma_k} M_k + R(\gamma_k, y_k, \xi_k, W_k)    ,
\end{equation}
where
\begin{equation}\label{eq:Rk}
\begin{split}
    R(\gamma_k, y_k, \xi_k, W_k)
    &    
    = 
    \underbrace{\left[ 
    \sqrt{\frac{\gamma_k}{\gamma_{k+1}}}
    \left(\identity - \gamma_k \nabla \bar{f}(x^*) \right)
    - \left( 
    \identity - \gamma_k \bar{A}_a
    \right)
     \right]y_k }_{(a)}
    \\ &
    - \underbrace{\sqrt{\frac{\gamma_k}{\gamma_{k+1}}} \cdot \gamma_k (\nabla \bar{f}(\tilde{x}_k) - \nabla \bar{f}(x^*)) y_k}_{(b)}
    \\ &
    + \underbrace{\sqrt{\gamma_k} \left(\frac{\sqrt{\gamma_k} - \sqrt{\gamma_{k+1}}}{\sqrt{\gamma_{k+1}}} \right)M_k}_{(c)}  
    \\ &
    - \underbrace{\frac{\gamma_k}{\sqrt{\gamma_{k+1}}} 
    \left((P\Phi)(\xi_{k-1}) - (P\Phi)(\xi_{k})\right) }_{(d)} 
    \\ &
    - \underbrace{\frac{\gamma_k}{\sqrt{\gamma_{k+1}}} \left(g(x_k, \xi_k) - g(x^*, \xi_k)\right)}_{(e)}
    . 
\end{split}
\end{equation}
This completes the recursion \eqref{eq:yk}, and (a)--(e) are analyzed in \Cref{app:general_remainder}.

The sub-sequence \eqref{eq:ytildem} is obtained by summing over the interval $[k_m, k_{m+1})$ as
\begin{align*}
    y_{k_{m+1}} &= y_{k_m} - \sum_{k=k_m}^{k_{m+1} - 1} \gamma_k \bar{A} y_k + \sum_{k=k_m}^{k_{m+1} - 1} \sqrt{\gamma_k} M_k + \sum_{k=k_m}^{k_{m+1} - 1} R(\gamma_k, y_k, \xi_k, W_k) 
    \\&
    = y_{k_m} - \cumstep_m \bar{A} y_{k_m} + \sqrt{\cumstep_m \Gamma} \hat{Z}_m + \sum_{k=k_m}^{k_{m+1} - 1} R(\gamma_k, y_k, \xi_k, W_k)
    \\&
    + \left(\cumstep_m \bar{A} y_{k_m} - \sum_{k=k_m}^{k_{m+1} - 1} \gamma_k \bar{A} y_k \right)
 ,
\end{align*}
from which we define the remainder $\tilde{R}_m$ of the sub-sequence as 
\begin{equation}\label{eq:Rtildem}
\tilde{R}_m = \sum_{k=k_m}^{k_{m+1} - 1} R(\gamma_k, y_k, \xi_k, W_k)
+ \underbrace{\left(\cumstep_m \bar{A} y_{k_m} - \sum_{k=k_m}^{k_{m+1} - 1} \gamma_k \bar{A} y_k \right)}_{(f)}
\end{equation}

We state a result that will be useful in our proofs. 
\begin{lemma}\label{lem:yk_variation}
    Under \Crefrange{ass:step_size}{ass:last}, 
    \begin{equation}
        \lVert y_{k+1} - y_k \rVert_{L_{2p}} 
        \leq 
        \sqrt{\gamma_k} (\beta_{2p}^f + \beta_{2p}^W)
        +
        \mathcal{O}\left(\gamma_k \lVert y_k \rVert_{L_{2p}}+ \frac{\sqrt{\gamma_k}}{k} \left( \beta_{2p}^f + \beta_{2p}^W  \right) \right)  
        . 
    \end{equation}
\end{lemma}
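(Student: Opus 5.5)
We bound the increment directly from the unnormalized recursion rather than from \eqref{eq:yk}. Write $x_{k+1}-x^* = (x_k - x^*) - \gamma_k\bigl(f(x_k,\xi_k)+W_k\bigr)$ and divide by $\sqrt{\gamma_{k+1}}$. This gives
\begin{equation*}
y_{k+1}-y_k = \Bigl(\sqrt{\tfrac{\gamma_k}{\gamma_{k+1}}}-1\Bigr)y_k - \frac{\gamma_k}{\sqrt{\gamma_{k+1}}}\Bigl( f(x^*,\xi_k) + W_k + \bigl[f(x_k,\xi_k)-f(x^*,\xi_k)\bigr]\Bigr).
\end{equation*}
We treat the three resulting pieces separately in $L_{2p}$ and combine them with Minkowski's inequality.

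\textbf{Noise term.} By \Cref{ass:noise_operator_compatibility} and the remarks in \Cref{sec:discussion_assumptions}, we have $\sup_k \lVert f(x^*,\xi_k)\rVert_{L_{2p}}\le \beta_{2p}^f$. By \Cref{ass:martingale:moments}, $\lVert W_k\rVert_{L_{2p}}\le\beta_{2p}^W$. Moreover,
\begin{equation*}
\frac{\gamma_k}{\sqrt{\gamma_{k+1}}} = \sqrt{\gamma_k}\,\sqrt{\gamma_k/\gamma_{k+1}} = \sqrt{\gamma_k}\bigl(1+\tfrac{a}{2k}+\mathcal{O}(k^{-2})\bigr),
\end{equation*}
by the Taylor expansion already used in \Cref{app:error_recursion}. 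Together these yield the leading term $\sqrt{\gamma_k}(\beta_{2p}^f+\beta_{2p}^W)$ plus the correction $\mathcal{O}\bigl(\tfrac{\sqrt{\gamma_k}}{k}(\beta_{2p}^f+\beta_{2p}^W)\bigr)$.

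\textbf{State-dependent term.} Since $f$ is uniformly Lipschitz in $x$ (\Cref{ass:jacobian}), we have $\lVert f(x_k,\xi_k)-f(x^*,\xi_k)\rVert \le \lip(f)\sqrt{\gamma_k}\lVert y_k\rVert$. Multiplying by $\gamma_k/\sqrt{\gamma_{k+1}}\lesssim\sqrt{\gamma_k}$ gives a contribution $\mathcal{O}(\gamma_k\lVert y_k\rVert_{L_{2p}})$. Using the Lipschitz bound directly avoids any Taylor expansion of $f$ and any moment of $A(\xi_k)$, so the higher-order Jacobian terms never appear.

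\textbf{Rescaling term.} Here $\sqrt{\gamma_k/\gamma_{k+1}}-1 = \tfrac{a}{2k}+\mathcal{O}(k^{-2})$, which gives a contribution of order $k^{-1}\lVert y_k\rVert_{L_{2p}}$. This is the only step needing care: it must be absorbed into $\mathcal{O}(\gamma_k\lVert y_k\rVert_{L_{2p}})$, which requires $1/k \lesssim \gamma_k$. That holds for every $a\in(0,1]$ under \Cref{ass:step_size}, since $k\gamma_k=\gamma_1 k^{1-a}\ge\gamma_1$. Hence $1/k\le \gamma_k/\gamma_1$, and the term is $\mathcal{O}(\gamma_k\lVert y_k\rVert_{L_{2p}})$ with a constant depending on $a$ and $\gamma_1$.

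Finiteness of $\lVert y_k\rVert_{L_{2p}}$ is guaranteed by \Cref{ass:yn_moment}. Summing the three contributions by Minkowski's inequality gives the claimed bound.
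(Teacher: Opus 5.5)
Your proposal is correct and follows essentially the same route as the paper: divide the unnormalized recursion by $\sqrt{\gamma_{k+1}}$, bound the rescaling term $(\sqrt{\gamma_k/\gamma_{k+1}}-1)y_k$, the noise term $f(x^*,\xi_k)+W_k$, and the Lipschitz-controlled state term separately, and combine them with Minkowski's inequality. The only difference is that you explicitly justify absorbing the $\frac{a}{2k}\lVert y_k\rVert_{L_{2p}}$ contribution into $\mathcal{O}(\gamma_k\lVert y_k\rVert_{L_{2p}})$ via $1/k \le \gamma_k/\gamma_1$, a step the paper leaves implicit when it says it ``pulls out the dominant term.''
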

\begin{proof}
    By definition of $y_k = \gamma_k^{-1/2}(x_k - x^*)$, we have
    \begin{align*}
        y_{k+1} - y_k = \gamma_{k+1}^{-1/2}(x_{k+1} - x^*) - \gamma_k^{-1/2}(x_k - x^*) . 
    \end{align*}
    Substituting the recursion for $x_{k+1}$ on the right hand side,
    \begin{align*}
        y_{k+1} - y_k 
        &= \gamma_{k+1}^{-1/2}((x_k - x^*) - \gamma_k (f(x_k, \xi_k) - W_k)) - \gamma_k^{-1/2}(x_k - x^*) 
        \\ &
        = (\gamma_{k+1}^{-1/2} - \gamma_k^{-1/2}) (x_k - x^*) - \gamma_{k+1}^{-1/2} \gamma_k (f(x_k, \xi_k) - W_k) . 
    \end{align*}
    By the Minkowski inequality, 
    \begin{align*}
        &\lVert y_{k+1} - y_k \rVert_{L_{2p}} 
        \\ \leq& 
        \lvert \gamma_{k+1}^{-1/2} - \gamma_k^{-1/2} \rvert \sqrt{\gamma_k} \lVert y_k \rVert_{L_{2p}} + \frac{\gamma_k}{\sqrt{\gamma_{k+1}}} \lVert f(x_k, \xi_k) - W_k \rVert_{L_{2p}} 
        \\ 
        \leq& \frac{a}{2k} \lVert y_k \rVert_{L_{2p}} + \left(1 +\mathcal{O}\left(\frac{1}{k}\right) \right)  \sqrt{\gamma_k} \left\lVert f(x_k, \xi_k) - f(x^*, \xi_k) + f(x^*, \xi_k) - W_k \right\rVert_{L_{2p}}
        ,
    \end{align*}
    where we used the mean value theorem to obtain $\lvert \gamma_{k+1}^{-1/2} - \gamma_k^{-1/2} \rvert \leq \frac{a}{2 k \sqrt{\gamma_k}}$ and $\gamma_k/\sqrt{\gamma_{k+1}} = (1 + \mathcal{O}(1/k)) \sqrt{\gamma_k}$. 
    Applying the Minkowski inequality and Lipschitz continuity of $f$, 
    \begin{align*}
        \lVert y_{k+1} - y_k \rVert_{L_{2p}}
        &\leq \frac{a}{2k} \lVert y_k \rVert_{L_{2p}} + \left(1 + \mathcal{O}\left(\frac{1}{k}\right)\right) \sqrt{\gamma_k} \left( \sqrt{\gamma_k} \mathrm{Lip}(f) \lVert y_k \rVert_{L_{2p}} + \beta_{2p}^f + \beta_{2p}^W  \right) , 
    \end{align*}
    where we used $\lVert f(x^*, \xi_k) \rVert \leq \beta_{2p}^f$. 
    Pulling out the dominant term $\sqrt{\gamma_k}(\beta_{2p}^f + \beta_{2p}^W)$, we obtain the result. 
\end{proof}

\section{Supplement to Section \ref{sec:general}}

\subsection{Proof of Lemma \ref{lem:general_remainder}}\label{app:general_remainder}
Here we obtain the rate of decay of the remainder $\lVert \tilde{R}_m\rVert_{L_p}$ by obtaining bounds on terms (a)--(e) in \eqref{eq:Rk} and (f) in \eqref{eq:Rtildem}. 
The proof reveals that the bound holds for partial sums of the form
\begin{align*}
    \max_{k \in [k_m, k_{m+1})} \Lp{\sum_{j=k_m}^{k}
    R(\gamma_j, y_j, \xi_j, W_j)
    } 
\end{align*}
of (a)--(e). 
The proof is the same, and we present the proof only for the sum over the interval $[k_m, k_{m+1})$. 
The bound for the partial sum is used to obtain a bound for the (f) term in \eqref{eq:Rtildem}.

We apply the Minkowski inequality and Taylor's remainder theorems to (a) and (b).
\textbf{Term (a):}
A bound on the norm of (a) was obtained in \eqref{eq:a_le_1} and \eqref{eq:a_eq_1}.
By sub-additivity of $\lVert \cdot \rVert$,
\begin{align*}
    \left\lVert \sum_{k=k_m}^{k_{m+1} -1} (a) \right\rVert_{L_p}
    \leq \sum_{k=k_m}^{k_{m+1} - 1} E_k^a \lVert y_k \rVert_{L_p}
    \leq 
    I_m E_{k_m}^{a}\max_{k \in [k_m, k_{m+1})]} \lVert y_k \rVert_{L_p} 
    .
\end{align*}
Substituting \eqref{eq:yn_interval_bound}, we have
\begin{equation}\label{eq:abound}
    \left\lVert \sum_{k=k_m}^{k_{m+1} -1} (a) \right\rVert_{L_p}
    \leq 
    I_m E_{k_m}^{a}  \beta_p^y (m) \lVert \Sigma_a^{1/2} Z \rVert_{L_p} 
    .
\end{equation}

\textbf{Term (b):}
By Lipschitz continuity of $\nabla \bar{f}$, 
\begin{align*}
    \lVert  (\nabla\bar{f}(\tilde{x}_k) - \nabla \bar{f}(x^*)) y_k \rVert_{l_p}
    &\leq 
    \lip(\nabla \bar{f}) \mathbb{E}^{1/p}\left(\lVert \tilde{x}_k - x^* \rVert^p \lVert y_k \rVert^p \right)
    \\ &
    = \sqrt{\gamma_k} \lip(\nabla \bar{f}) \lVert y_k \rVert_{L_{2p}}^2 
    . 
\end{align*}
Using $(1 + x)^r \leq 1 + r x$ for $x > 0$ and $r \in [0, 1]$ to obtain $\sqrt{\gamma_k/\gamma_{k+1}} \leq 1 + a/2k$, we apply the Minkowski inequality to obtain
\begin{equation}\label{eq:bbound}
\begin{split}
    \Lp{\sum_{k=k_m}^{k_{m+1} - 1} (b)}
    \leq  
    \left(1 + \mathcal{O}\left( \frac{1}{k_m}\right) \right) \lip(\nabla \bar{f}) \lVert \Sigma_a^{1/2} Z \rVert_{L_{2p}} 
    \cdot \left(\beta_{2p}^y (m) \right)^2 \sqrt{\gamma_{k_m}}\cumstep_m
     .
\end{split}
\end{equation}

\textbf{Term (c):}
Let $\alpha_k = \sqrt{\frac{\gamma_k}{\gamma_{k+1}}} (\sqrt{\gamma_k} - \sqrt{\gamma_{k+1}})$. 
Since $\{M_k\}$ is a MDS, the BR inequality yields
\begin{align*}
    \Lp{\sum_{k=k_m}^{k_{m+1} - 1} \alpha_k M_k } 
    &\leq 
    C_p^{BR} \rosenthal{M} \sqrt{\sum_{k=k_m}^{k_{m+1} - 1} \alpha_k^2} . 
\end{align*}
Using the mean-value theorem 
\begin{equation*}
    \lvert \sqrt{\gamma_k} - \sqrt{\gamma_{k+1}} \rvert 
    \leq 
    \frac{a}{2} \sup_{t \in [k, k+1]} \frac{\sqrt{\gamma_t}}{t}
    = 
    \frac{a}{2k} \sqrt{\gamma_k} ,    
\end{equation*}
and that $\gamma_k/\gamma_{k+1}$ is monotonically decreasing, we obtain 
\begin{align*}
    \sum_{k=k_m}^{k_{m+1} - 1} \alpha_k^2 
    = \sum_{k=k_m}^{k_{m+1}-1} \frac{\gamma_k}{\gamma_{k+1}} \left\lvert \sqrt{\gamma_k} - \sqrt{\gamma_{k+1}}\right\rvert^2
    &\leq 
    \frac{a^2}{4} \sum_{k=k_m}^{k_{m+1} - 1} \frac{\gamma_{k}}{\gamma_{k+1} } \cdot \frac{\gamma_k}{k^2} 
    \\ &\leq\frac{a^2}{4} \frac{\gamma_{k_m}}{\gamma_{k_{m+1}}} \frac{1}{k_m^2} \sum_{k=k_m}^{k_{m+1} - 1} \gamma_k 
    \\ &=\frac{a^2}{4} \frac{\gamma_{k_m}}{\gamma_{k_{m+1}}} \cdot \frac{\cumstep_m}{k_m^2}
    \\ &
    \leq a^2 \frac{\cumstep_m}{k_m^2}
    .
\end{align*}
Therefore, the norm of the weighted sum is bounded as
\begin{equation}\label{eq:cbound}
    \Lp{\sum_{k=k_m}^{k_{m+1} - 1} (c)}
    \leq C_p^{BR}\rosenthal{M}\cdot a\frac{\sqrt{\cumstep_m}}{k_m}      
\end{equation}

\textbf{Term (d):}
Let $\alpha_k = \gamma_k/\sqrt{\gamma_{k+1}}$ and $\Psi$ be defined as $\Psi (\xi) = (P \Phi)(\xi)$ for every $\xi$, so that (d) is given by $\alpha_k (\Psi(\xi_{k-1}) - \Psi(\xi_k)) $.
The following identity is obtained from the summation by parts formula:
\begin{equation}\label{eq:d_sumbyparts}
\begin{split}
    \sum_{k=k_m}^{k_{m+1} - 1} \alpha_k (\Psi(\xi_{k-1}) - \Psi(\xi_{k}))
    & =  
    \alpha_{k_m} \Psi(\xi_{k_m - 1})
    - \alpha_{k_{m+1}} \Psi(\xi_{k_{m+1} - 1}) 
    \\ &+ \sum_{k=k_m + 1}^{k_{m+1}}(\alpha_k - \alpha_{k-1}) \Psi(\xi_{k-1}) ,    
\end{split}    
\end{equation}
From the mean value theorem applied to $\alpha(t) = (t+1)^{a/2}/t^a$, we obtain
\begin{align*}
    \left\lvert \alpha(t) - \alpha(t-1) \right\rvert 
    \leq 
    \max_{s \in [t-1, t]} \lvert \alpha'(s) \rvert 
    \leq 
    \frac{2a}{(t-1)^{1 + a/2}} , 
\end{align*}
which implies $\lvert \alpha_k - \alpha_{k-1} \rvert \leq 2a \sqrt{\gamma_{k-1}}/(k-1)$. 
Using that $\alpha_k$ is monotonic with $\alpha_k \leq 2\sqrt{\gamma_k}$, the $L_p$ norm of \eqref{eq:d_sumbyparts} is bounded using the Minkowski inequality as
\begin{align*}
    \Lp{\sum_{k=k_m}^{k_{m+1} - 1} \alpha_k (\Psi(\xi_{k-1}) - \Psi(\xi_{k}))}
    &\leq \beta_p^\Phi \left(2\alpha_{k_m}
    + \sum_{k=k_m+1}^{k_{m+1}} \lvert \alpha_k - \alpha_{k-1} \rvert 
    \right)
    \\ &
    \leq 
    \beta_p^\Phi \left(4 \sqrt{\gamma_{k_m}}
    + 2a \sum_{k=k_m+1}^{k_{m+1}} \frac{\sqrt{\gamma_{k-1}}}{k-1}
    \right)
    \\ &
    \leq 
    4\beta_p^\Phi \left(\sqrt{\gamma_{k_m}} + \frac{a}{2} \frac{\sqrt{\gamma_{k_m}}}{k_m} I_m \right)
    .
\end{align*}
The $\sqrt{\gamma_{k_m}}$ rate dominates, and we have
\begin{equation}\label{eq:dbound}
    \left\lVert \sum_{k=k_m}^{k_{m+1} - 1} (d) \right\rVert_{L_p} 
    \leq \left(1 + \mathcal{O}\left( \frac{I_m}{k_m}\right) \right) 4 \beta_p^\Phi \sqrt{\gamma_{k_m}}  , 
\end{equation}
where $I_m/k_m \to 0$.

\textbf{Term (e):}
Consider the Taylor expansion (mean value theorem)
\begin{align*}
    g(x_k, \xi_k) - g(x^*, \xi_k) 
    &= \nabla g(\tilde{x}_k, \xi_k) (x_k - x^* )
    \\ 
    &=  \nabla g(x^*, \xi_k) (x_k - x^*)  + (\nabla g(\tilde{x}_k, \xi_k) - \nabla g(x^*, \xi_k)) (x_k - x^* )
    \\ 
    &= (A(\xi_k) - \mathbb{E}A(\xi))  (x_k - x^*)
    +  (\nabla g(\tilde{x}_k, \xi_k) - \nabla g(x^*, \xi_k)) (x_k - x^*)
    \numberthis \label{eq:e_taylor}
    . 
\end{align*}
The absolute value of the last term is bounded using the Lipschitz continuity of $\nabla g$:
\begin{equation}\label{eq:e_last}
    \left\lvert (\nabla g(\tilde{x}_k, \xi_k) - \nabla g(x^*, \xi_k)) (x_k - x^*) \right\rvert 
    \leq \lip (\nabla g) \lVert x_k - x^* \rVert^2
    = \lip(\nabla g) \gamma_k \lVert y_k \rVert^2 .     
\end{equation}
Let $\alpha_k = \gamma_k/\sqrt{\gamma_{k+1}} \leq \sqrt{2 \gamma_k}$. 
Using \eqref{eq:e_taylor}, we express the $L_p$ norm of the sum of $(e)$ as
\begin{align*}
&\Lp{\sum_{k=k_m}^{k_{m+1} - 1}\alpha_k \left(g(x_k, \xi_k) - g(x^*, \xi_k)\right)}
\\ 
\leq&
\Lp{\sum_{k=k_m}^{k_{m+1} - 1} \alpha_k (A(\xi_k) - \mathbb{E}A(\xi))  (x_k - x^*) }
\\ + &
\Lp{\sum_{k=k_m}^{k_{m+1} - 1} \alpha_k (\nabla g(\tilde{x}_k, \xi_k) - \nabla g(x^*, \xi_k)) (x_k - x^*) }
\\ 
\leq &
\Lp{\sum_{k=k_m}^{k_{m+1} - 1} \alpha_k \sqrt{\gamma_k} (A(\xi_k) - \mathbb{E}A(\xi)) y_k}
+
\sqrt{2} \lip (\nabla g) \sqrt{\gamma_{k_m}} \cumstep_m (\beta_{2p}^y(m) \lVert \Sigma_a^{1/2}Z \rVert_{L_{2p}})^2
\numberthis \label{eq:e_high_level}
,
\end{align*}
where we used \eqref{eq:e_last} and $x_k - x^* = \sqrt{\gamma_k} y_k$.
We use the following bound on the first term.
\begin{lemma}\label{lem:e_bound}
    Let $\alpha_k = \gamma_k/\sqrt{\gamma_{k+1}}$.
    Under the assumptions of \Cref{lem:yk_variation}, it holds that
    \begin{equation}\label{eq:e_intermediate}
    \begin{split}
        &\Lp{\sum_{k=k_m}^{k_{m+1} - 1} \alpha_k \sqrt{\gamma_k} (A(\xi_k) - \mathbb{E}A(\xi))  y_k}
        = \mathcal{O}\left(\gamma_{k_m} \sqrt{I_m} \right)
        .
    \end{split}
    \end{equation}
\end{lemma}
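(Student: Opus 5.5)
We use the Poisson equation \eqref{eq:poisson} for the Jacobian noise and then sum by parts, which is the same device as for term (d). The key identity is
\begin{equation*}
A(\xi_k) - \mathbb{E}A(\xi) = \Phi^A(\xi_k) - (P\Phi^A)(\xi_k).
\end{equation*}
We add and subtract $(P\Phi^A)(\xi_{k-1})$ and obtain the split
\begin{equation*}
A(\xi_k) - \mathbb{E}A(\xi) = \underbrace{\left[\Phi^A(\xi_k) - (P\Phi^A)(\xi_{k-1})\right]}_{=: N_k} + \left[(P\Phi^A)(\xi_{k-1}) - (P\Phi^A)(\xi_k)\right].
\end{equation*}
Here $\{N_k\}$ is a matrix-valued martingale difference sequence with respect to $\{\mathcal{H}_k\}$. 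Write $c_k = \alpha_k\sqrt{\gamma_k} \le \sqrt2\,\gamma_k$. The sum in \eqref{eq:e_intermediate} then splits into a martingale part $\sum c_k N_k y_k$ and a telescoping part $\sum c_k [\Psi^A(\xi_{k-1}) - \Psi^A(\xi_k)] y_k$, where $\Psi^A = P\Phi^A$.

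\textbf{Martingale part.} The factor $y_k$ is $\mathcal{H}_{k-1}$-measurable, because $y_k$ depends on $x_k$, which is determined by $\xi_1,\dots,\xi_{k-1}$, $W_1,\dots,W_{k-1}$ and $x_1$. Hence $\{c_k N_k y_k\}$ is again a martingale difference sequence. We apply the BR inequality \eqref{eq:BR} to it. Both the conditional-variance term and the $L_p$ term are controlled by H\"older/Cauchy--Schwarz, as in \eqref{eq:holder}:
\begin{equation*}
\lVert N_k y_k\rVert_{L_p} \le 2\beta_{2p}^{\Phi^A}\lVert y_k\rVert_{L_{2p}} .
\end{equation*}
For the conditional variance we use $\mathbb{E}_{k-1}\lVert N_k y_k\rVert^2 \le \lVert y_k\rVert^2\,\mathbb{E}_{k-1}\lVert N_k\rVert^2$, followed by Cauchy--Schwarz in $L_{p/2}$. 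Since $\{\beta^y_{2p}(m)\}$ is bounded, this gives
\begin{equation*}
\mathcal{O}\Bigl(\sqrt{\textstyle\sum_{k} c_k^2}\Bigr) = \mathcal{O}(\gamma_{k_m}\sqrt{I_m}),
\end{equation*}
using $c_k \le \sqrt2\gamma_{k_m}$ on the interval.

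\textbf{Telescoping part.} We sum by parts as in \eqref{eq:d_sumbyparts}, with the weight $c_k y_k$ in place of $\alpha_k$. This produces two kinds of terms.
\begin{itemize}
\item Boundary terms of the form $c_{k_m}\Psi^A(\xi_{k_m-1})y_{k_m}$, each of $L_p$ norm $\mathcal{O}(\gamma_{k_m})$ by Cauchy--Schwarz.
\item An interior sum $\sum_k \Psi^A(\xi_{k-1})\,(c_k y_k - c_{k-1}y_{k-1})$.
\end{itemize}
For the interior sum we write
\begin{equation*}
c_k y_k - c_{k-1}y_{k-1} = (c_k-c_{k-1})\,y_k + c_{k-1}\,(y_k - y_{k-1}),
\end{equation*}
and bound the two pieces separately.
\begin{itemize}
\item \emph{Step-size increments.} We have $|c_k - c_{k-1}| = \mathcal{O}(\gamma_k/k)$, so these contribute $\mathcal{O}(\gamma_{k_m} I_m/k_m)$.
\item \emph{Iterate increments.} \Cref{lem:yk_variation} gives $\lVert y_k - y_{k-1}\rVert_{L_{2p}} = \mathcal{O}(\sqrt{\gamma_k})$. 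Applied crudely, this contributes $\mathcal{O}(\gamma_{k_m}^{3/2} I_m)$.
\end{itemize}
Under $I_m\sqrt{\gamma_{k_m}}\to0$ (or at least $I_m\gamma_{k_m}\to0$, together with $I_m\le k_m$), both contributions are $\mathcal{O}(\gamma_{k_m}\sqrt{I_m})$ or smaller. The crude bound suffices when $\sqrt{\gamma_{k_m} I_m}\lesssim 1$, since then $\gamma_{k_m}^{3/2}I_m = \gamma_{k_m}\sqrt{I_m}\cdot\sqrt{\gamma_{k_m}I_m}$.

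\textbf{Main obstacle.} The delicate step is the $c_{k-1}\Psi^A(\xi_{k-1})(y_k-y_{k-1})$ piece. Both factors carry noise: $\Psi^A(\xi_{k-1})$ and the increment $y_k - y_{k-1}$, which contains $\sqrt{\gamma_{k-1}}$ times $f(x^*,\xi_{k-1})+W_{k-1}$. Using Cauchy--Schwarz therefore needs $2p$ moments of each factor, which is exactly what \Cref{ass:noise_operator_compatibility} and \Cref{lem:yk_variation} provide.

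If the crude bound is insufficient for the general increasing $\{I_m\}$, the first alternative is to split $y_k - y_{k-1}$ further. Its drift part has size $\mathcal{O}(\gamma_k\lVert y\rVert)$, and summing it gives $\mathcal{O}(\gamma_{k_m}^2 I_m)$. Its noise part, $\sqrt{\gamma_{k-1}}\,(f(x^*,\xi_{k-1})+W_{k-1})$, multiplies $\Psi^A(\xi_{k-1})$, and we recenter this product around its mean. The resulting deterministic drift sum is $\mathcal{O}(\gamma_{k_m}^{3/2}I_m)$. The fluctuation would be handled by one more Poisson-equation decomposition plus BR, giving $\mathcal{O}(\gamma_{k_m}^{3/2}\sqrt{I_m})$. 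These are all dominated by $\gamma_{k_m}\sqrt{I_m}$, which yields \eqref{eq:e_intermediate}.
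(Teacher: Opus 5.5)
Your proposal is correct and is essentially the paper's proof. Both use the Poisson equation for $A$ and bound the resulting martingale piece by the BR inequality at rate $\mathcal{O}(\gamma_{k_m}\sqrt{I_m})$. Both then sum the remaining piece by parts: the boundary terms are $\mathcal{O}(\gamma_{k_m})$, and the interior is controlled by step-size increments plus \Cref{lem:yk_variation}, giving $\mathcal{O}(\cumstep_m\sqrt{\gamma_{k_m}})$, which is dominated once $\cumstep_m \lesssim 1$.

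The only difference is cosmetic. You split with the backward shift $\Phi^A(\xi_k) - (P\Phi^A)(\xi_{k-1})$ and telescope $P\Phi^A$, whereas the paper uses the martingale $\Phi^A(\xi_{k+1}) - (P\Phi^A)(\xi_k)$ and telescopes $\Phi^A$ itself. Your fallback in the last paragraph is unnecessary, and it would not improve the order anyway, since its recentered drift is still $\mathcal{O}(\gamma_{k_m}^{3/2} I_m)$.
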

\begin{proof}
    The proof is deferred to \Cref{app:e_state_dependent}.
\end{proof}
For $I_m$ increasing, \eqref{eq:e_intermediate} is the dominant rate in \eqref{eq:e_high_level} and therefore
\begin{equation}\label{eq:ebound}
    \left\lVert \sum_{k=k_m}^{k_{m+1} - 1} (e) \right\rVert_{L_p}
    = \mathcal{O}(\gamma_{k_m} \sqrt{I_m})
    .     
\end{equation}

Using the bounds \eqref{eq:abound}, \eqref{eq:bbound}, \eqref{eq:cbound}, \eqref{eq:dbound}, and \eqref{eq:ebound}, we obtain the following bound on the $L_p$ norm of the sum of \eqref{eq:Rk}:
\begin{equation}\label{eq:abcde_detailed}
\begin{split}
    &\Lp{\sum_{k=k_m}^{k_{m+1} - 1} R(\gamma_k, y_k, \xi_k, M_k)}
    \\ 
    \leq & 
        I_m E_{k_m}^{a}  \beta_p^y (m) \lVert \Sigma_a^{1/2} Z \rVert_{L_p} 
    \\ +&
        \left(1 + \mathcal{O}\left( \frac{1}{k_m}\right) \right) \lip(\nabla \bar{f}) \lVert \Sigma_a^{1/2} Z \rVert_{L_{2p}} 
    \cdot \left(\beta_{2p}^y (m) \right)^2 \sqrt{\gamma_{k_m}}\cumstep_m
     \\ 
     + &
      C_p^{BR} \rosenthal{M} \cdot a\frac{\sqrt{\cumstep_m}}{k_m}   
      \\ 
      + & \left(1 + \mathcal{O}\left( \frac{I_m}{k_m}\right) \right) 4 \beta_p^\Phi \sqrt{\gamma_{k_m}}  
      \\ 
    + & 
    \mathcal{O}(\gamma_{k_m} \sqrt{I_m})
    .         
\end{split}
\end{equation}
The rate $\sqrt{\gamma_{k_m}} \cumstep_m$ in the second line of the bound is negligible relative to $\sqrt{\gamma_{k_m}}$, and the third line is $\mathcal{O}(1/n)$ which is negligible. 
The $\gamma_{k_m} \sqrt{I_m} \approx \sqrt{\gamma_{k_m} \cumstep_m}$ rate in the last line decays faster than $\sqrt{\gamma_{k_m}}$, and \eqref{eq:abcde_detailed} can be written as
\begin{equation}\label{eq:abcde}
\begin{split}
    \Lp{\sum_{k=k_m}^{k_{m+1} - 1} R(\gamma_k, y_k, \xi_k, W_k)}
    &\leq 
     \lVert \Sigma_a^{1/2}Z \rVert_{L_p} \cdot I_m E_{k_m}^a \beta_p^y (m)
    +
    4    \beta_p^\Phi \sqrt{\gamma_{k_m}}
    \\ &+
    \mathcal{O}\left( \sqrt{\gamma_{k_m}} \left( \frac{I_m}{k_m} + \sqrt{\cumstep_m}\right) \right)
    .    
\end{split}    
\end{equation}

\textbf{Term (f):}
We start with the identity
\begin{align*}
    \cumstep_m y_{k_m} - \sum_{k=k_m}^{k_{m+1} - 1} \gamma_k y_k 
    &=
    \sum_{k=k_m}^{k_{m+1} - 1} \gamma_k (y_{k_m} - y_k) 
    \\ &
    = -\sum_{k=k_m}^{k_{m+1} - 1} \gamma_k \sum_{j=k_m}^{k-1} \left( \gamma_j \bar{A} y_j - \sqrt{\gamma_j} M_j - R_j
    \right)
    ,
\end{align*}
where $R_j = R(\gamma_j, y_j, \xi_j, W_j)$. 
By the Minkowski inequality, we have
\begin{align*}
    \Lp{    \cumstep_m y_{k_m} - \sum_{k=k_m}^{k_{m+1} - 1} \gamma_k y_k }
    &\leq 
    \Lp{\sum_{k=k_m}^{k_{m+1} - 1}
    \gamma_k \sum_{j=k_m}^{k - 1} \gamma_j \bar{A} y_j 
    }
    +
    \Lp{\sum_{k=k_M}^{k_{m+1} - 1} \gamma_k \sum_{j=k_m}^{k - 1} \sqrt{\gamma_j} M_j }
    \\ &
    + 
    \Lp{\sum_{k=k_m}^{k_{m+1} - 1} \gamma_k \sum_{j=k_m}^{k-1} R_j} . 
\end{align*}
The first term is bounded using the Minkowski inequality, where 
\begin{align*}
        \Lp{\sum_{k=k_m}^{k_{m+1} - 1}
    \gamma_k \sum_{j=k_m}^{k - 1} \gamma_j \bar{A} y_j 
    }
    &\leq\lVert \bar{A} \rVert \sup_{j \in [k_m, k_{m+1})} \lVert y_j \rVert_{L_p} \sum_{k=k_m}^{k_{m+1} - 1} \gamma_k \sum_{j=k_m}^{k-1} \gamma_j 
    \\ &
    \leq \lVert \bar{A} \rVert \lVert \Sigma_a^{1/2} Z \rVert_{L_p} \beta_p^y (m) \cumstep_m^2 
    \numberthis \label{eq:f1}
    . 
\end{align*}
Therefore, $\Lp{\sum_{k=k_m}^{k_{m+1} - 1}
    \gamma_k \sum_{j=k_m}^{k - 1} \gamma_j \bar{A} y_j 
    } = \mathcal{O}(\cumstep_m^2)$. 
The martingale sum is first rearranged to
\begin{align*}
    \Lp{\sum_{k=k_M}^{k_{m+1} - 1} \gamma_k \sum_{j=k_m}^{k - 1} \sqrt{\gamma_j} M_j }
    &=
    \Lp{
    \sum_{j=k_m}^{k_{m+1} - 2} \left( \sum_{k=j+1}^{k_{m+1} - 1} \gamma_k \right) \sqrt{\gamma_j} M_j
    } . 
\end{align*}
Using $\alpha_j = (\sum_{k=j+1}^{k_{m+1} - 1} \gamma_k) \sqrt{\gamma_j}$ and following previous arguments, we obtain
\begin{align*}
    \Lp{\sum_{j=k_m}^{k_{m+1} - 2}\alpha_j M_j }
    \leq 
    C_p^{BR} \rosenthal{M}\sqrt{\sum_{j=k_m}^{k_{m+1} - 1} \alpha_j^2}
    &=
    C_p^{BR} \rosenthal{M}\sqrt{\sum_{j=k_m}^{k_{m+1} - 1}
    \gamma_j \left(\sum_{k=j+1}^{k_{m+1} - 1} \gamma_k \right)^2
    }
    \\ &
    \leq C_p^{BR} \rosenthal{M}\cumstep_m^{3/2} . 
    \numberthis \label{eq:f2}
\end{align*}
Therefore, the second term is of order $\mathcal{O}(\cumstep_m^{3/2})$.

The last term is evaluated by first rearranging using the summation by parts formula
\begin{align*}
    \sum_{k=k_m}^{k_{m+1} - 1} \gamma_k \sum_{j=k_m}^{k-1} R_j
    = 
    \sum_{j=k_m}^{k_{m+1} - 2} \left(\sum_{k=j+1}^{k_{m+1} - 1} \gamma_k \right) R_j 
    &=
    \sum_{j=k_m}^{k_{m+1} - 2}\left(\sum_{k=j+1}^{k_{m+1} - 1} \gamma_k \right) \left(\sum_{i=k_m}^{j} R_i - \sum_{i=k_m}^{j-1} R_i \right)
    .
\end{align*}
Let $\alpha_j = \sum_{k=j+1}^{k_{m+1} - 1} \gamma_k$ and $B_j = \sum_{i=k_m}^{j} R_i$, so that the above is written as
\begin{align*}
    \sum_{j=k_m}^{k_{m+1} - 2} \alpha_j \left(B_j - B_{j-1} \right) 
    &=
    \alpha_{k_{m+1} - 2} B_{k_{m+1} - 3} - \sum_{j=k_m}^{k_{m+1} - 2} (\alpha_{j+1} - \alpha_j) B_j
    \\ &
    = \gamma_{k_{m+1} - 1} B_{k_{m+1} - 2} + \sum_{j=k_m}^{k_{m+1} - 3} \gamma_{j+1} B_j . 
\end{align*}
Taking the $L_p$ norm and using that the bound \eqref{eq:abcde} holds for the partial sum $\max_{j \in [k_m, k_{m+1})} \lVert B_j \rVert = \mathcal{O}(I_m E_{k_m}^a + \sqrt{\gamma_{k_m}})$ (as discussed in the beginning of \Cref{app:general_remainder}), we obtain 
\begin{align*}
    \Lp{    \sum_{k=k_m}^{k_{m+1} - 1} \gamma_k \sum_{j=k_m}^{k-1} R_j}
    &= \mathcal{O}\left( 
     \left(I_m E_{k_m}^a +  \sqrt{\gamma_{k_m}} \right) \left( \gamma_{k_{m+1} - 1} 
     + \sum_{j=k_m+1}^{k_{m+1} - 2} \gamma_j \right)
     \right)
     \\ &
     = \mathcal{O}\left(\left(I_m E_{k_m}^a +  \sqrt{\gamma_{k_m}} \right) \cumstep_m  \right). 
     \numberthis \label{eq:f3}
\end{align*}
Combining \eqref{eq:f1}--\eqref{eq:f3}, we obtain 
\begin{align*}
    \Lp{(f)}
    &\leq 
    \lVert \bar{A} \rVert \left(\lVert \bar{A} \rVert \lVert \Sigma_a^{1/2}Z \rVert_{L_p} \beta_p^y (m) \cumstep_m^2 + C_p^{BR} \rosenthal{M}\cumstep_m^{3/2} 
    + \mathcal{O}\left( \left(I_m E_{k_m}^a +\sqrt{\gamma_{k_m}}\right) \cumstep_m\right)\right)
    .
\end{align*}
Using that $\sqrt{\gamma_{k_m}} \cumstep_m \ll \cumstep_m^{3/2}$ when $I_m$ is an increasing sequence, we obtain
\begin{equation}\label{eq:f_bound}
    \lVert (f) \rVert_{L_p} 
    \leq C_p^{BR}\rosenthal{M} \lVert \bar{A} \rVert \cumstep_m^{3/2} + \mathcal{O}\left(
    \cumstep_m^2 + (I_m E_{k_m}^a + \sqrt{\gamma_{k_m}} )\cumstep_m \right) . 
\end{equation}
To obtain the result in \Cref{lem:general_remainder}, we combine \eqref{eq:abcde} with \eqref{eq:f_bound} to obtain
\begin{align*}
    \Lp{\tilde{R}_m} 
    &\leq \lVert \Sigma_a^{1/2}Z \rVert_{L_p} \cdot I_m E_{k_m}^a \beta_p^y (m) + 4 \beta_p^{\Phi} \sqrt{\gamma_{k_m}}
    + 
    C_p^{BR} \rosenthal{M}\lVert \bar{A} \rVert \cumstep_m^{3/2}  \\ 
    &
    + \mathcal{O}\left(
    I_N E_n^a \cumstep_m +
    \sqrt{\gamma_{k_m}} \frac{I_m}{k_m} + \sqrt{\gamma_{k_m} \cumstep_m}
    + \cumstep_m^2
    \right)
    .
\end{align*}

\subsection{Proof of \Cref{lem:Wp_recursion}}\label{app:Wp_recursion}
The result is proved by a comparison argument. 
Recall the two recursions
\begin{equation}\label{eq:thm1_dt_recursions}
\begin{split}
    y_{k_{m+1}} &= y_{k_m} - \cumstep_m \bar{A} y_m + \sqrt{\cumstep_m \Gamma} \hat{Z}_m + \tilde{R}_m, 
    \\ 
    u_{m+1} &= u_m - \cumstep_m \bar{A} u_m + \sqrt{\cumstep_m \Gamma} Z_m , \quad u_1 = y_1
\end{split}
\end{equation}
where $\{Z_m\}$ is an iid standard Gaussian sequence and $\Gamma$ is the asymptotic covariance. 
Using the triangle inequality,
\begin{equation}\label{eq:Wp_triangle_inequality}
    \mathcal{W}_{p} (y_{k_{m+1}},  u_{m+1})
    \leq (1 - \lambdaDT \cumstep_m) \mathcal{W}_{p} (y_{k_m},  u_{k_m})
    + \lVert \tilde{R}_m \rVert_{L_p} 
    + \sqrt{\cumstep_m \lVert \Gamma \rVert} \mathcal{W}_{p} (\hat{Z}_m, Z_m),     
\end{equation}
where we used that $\lVert I - \cumstep_m \bar{A} \rVert \leq 1 - \lambdaDT \cumstep_m$. 
By \Cref{lem:martingale_clt_weighted}, we have \eqref{eq:clt_weighted_master} where
\begin{align*}
    \mathcal{W}_{p} (\hat{Z}_m, Z_m) 
    \leq
    \left(\mathbb{E} \mathcal{W}_{p}^{p} (\hat{Z}_m, Z_m | \mathcal{H}_{k_m})\right)^{1/{p}}
    \leq 
     C_{p}^{BR} \rosenthal{M} \cdot a \frac{I_m}{k_m} + \frac{\clt(I_m)}{\sqrt{I_m}} , 
\end{align*}
where we used that the CLT rates in \Cref{app:clt_weighted} hold uniformly over $\mathcal{H}_{k_m}$.
By \Cref{lem:general_remainder}, $\Lp{\tilde{R}_m} = \mathcal{O}(I_m E_{k_m}^a \beta_p^y(m) + \sqrt{\gamma_{k_m}} + \cumstep_m^{3/2})$ decays at a polynomial rate faster than $\cumstep_m$ whenever $\sqrt{\gamma_{k_m}} \ll \cumstep_m$, which requires $I_m \sqrt{\gamma_{k_m}} \to 0$.
For this choice, the recursion in \eqref{eq:Wp_triangle_inequality} is solved using \Cref{lem:recursion}:
\begin{equation}\label{eq:yN_uN}
\begin{split}
    \mathcal{W}_{p} (y_{k_{N+1}}, u_{N+1}) 
    &\leq
    \frac{1}{\lambdaDT \cumstep_N} \left(
    \sqrt{\lVert \Gamma \rVert \cumstep_N} \left(C_{p}^{BR} \rosenthal{M} \cdot a \frac{I_N}{k_N} + \frac{\clt(I_N)}{\sqrt{I_N}} \right)
    + \lVert \tilde{R}_N \rVert_{L_p}
    \right)    
    \\ &
    + \mathcal{O}\left(\frac{1}{\cumstep_N^2 k_N} \left( \sqrt{\cumstep_N} \left(\frac{I_N}{k_N} + \frac{ \clt (I_N)}{\sqrt{I_N}} \right) + \lVert \tilde{R}_N \rVert_{L_p} \right) \right)
\end{split}
\end{equation}
Let $\tau_1 = 0$ and consider the interpolant times $\tau_{m+1} = \tau_m + \cumstep_m$. 
The distance between the discrete-time OU process and its continuous time counterpart 
\begin{equation}\label{eq:ct_ou_sde}
    dU_t = - \bar{A}_a U_t dt + \sqrt{\Gamma} dB_t , \quad U_{\tau_1} \overset{d.}{=} u_1
\end{equation}
is described.
\begin{lemma}\label{lem:ou_discretization}
    For $p \geq 1$ such that $\lVert U_{\tau_1} \rVert_{L_p}$ is finite, there exists a constant $C_p^{BDG} > 0$ such that the solutions to \eqref{eq:thm1_dt_recursions} and \eqref{eq:ct_ou_sde} at interpolation times $\tau_{m+1} - \tau_m = \cumstep_m$ satisfy
    \begin{equation}\label{eq:ou_discretization}
    \mathcal{W}_p (u_{N+1}, U_{\tau_{N+1}}) 
    \leq 
    K \frac{C_p^{BDG}}{\lambdaDT} 
    \frac{\lVert \bar{A}_a \rVert }{\lambdaCT} \sqrt{\frac{\lVert Q \rVert_2\mathrm{Tr} \Gamma}{3}} \sqrt{\cumstep_m}
    + \mathcal{O}\left(\cumstep_m \right) 
    .
    \end{equation}
\end{lemma}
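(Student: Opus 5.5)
We will prove the bound by a synchronous coupling: the discrete chain $\{u_m\}$ and the continuous OU process \eqref{eq:ct_ou_sde} are driven by the same Brownian motion. We set $U_{\tau_1}=u_1$ and
\begin{equation*}
Z_m = \cumstep_m^{-1/2}\left(B_{\tau_{m+1}} - B_{\tau_m}\right),
\end{equation*}
so that $\{Z_m\}$ is an i.i.d. standard Gaussian sequence and the law of $u_{N+1}$ is unchanged. Since $\mathcal{W}_p$ is an infimum over couplings, it then suffices to bound $\lVert e_{N+1}\rVert_{L_p}$ for the error $e_m = u_m - U_{\tau_m}$.

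\textbf{Step 1: one-step error decomposition.} Integrating the SDE over $[\tau_m,\tau_{m+1}]$ gives
\begin{equation*}
U_{\tau_{m+1}} = U_{\tau_m} - \bar{A}_a\int_{\tau_m}^{\tau_{m+1}} U_s\,ds + \sqrt{\Gamma}\left(B_{\tau_{m+1}}-B_{\tau_m}\right).
\end{equation*}
Subtracting this from the Euler step, the noise terms cancel exactly, and we obtain
\begin{equation*}
e_{m+1} = (\identity - \cumstep_m \bar{A}_a)e_m + \bar{A}_a\int_{\tau_m}^{\tau_{m+1}}(U_s - U_{\tau_m})\,ds .
\end{equation*}
Using \eqref{eq:contraction} and Minkowski's inequality, this yields the scalar recursion
\begin{equation*}
\lVert e_{m+1}\rVert_{L_p}\le (1-\lambdaDT\cumstep_m)\lVert e_m\rVert_{L_p} + \lVert\bar{A}_a\rVert\,\delta_m,
\qquad
\delta_m = \Lp{\int_{\tau_m}^{\tau_{m+1}}(U_s-U_{\tau_m})\,ds}.
\end{equation*}

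\textbf{Step 2: local increment of the OU process.} For $s\in[\tau_m,\tau_{m+1}]$ we write
\begin{equation*}
U_s - U_{\tau_m} = -\bar{A}_a\int_{\tau_m}^s U_r\,dr + \sqrt{\Gamma}\,(B_s - B_{\tau_m}).
\end{equation*}
The drift part contributes $\mathcal{O}(\cumstep_m^2)$ to $\delta_m$, provided $\sup_t \lVert U_t\rVert_{L_p}$ is finite.

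To see that this supremum is finite, we use the variation-of-constants formula
\begin{equation*}
U_t = e^{-\bar{A}_a t}U_{\tau_1} + \int_0^t e^{-\bar{A}_a(t-r)}\sqrt{\Gamma}\,dB_r .
\end{equation*}
Exponential stability bounds the first term by $K\lVert U_{\tau_1}\rVert_{L_p}$. The BDG inequality bounds the stochastic integral in $L_p$ by
\begin{equation*}
C_p^{BDG}\, K\sqrt{\lVert Q\rVert_2\,\mathrm{Tr}\,\Gamma/(2\lambdaCT)}.
\end{equation*}
This is where the factors $K$ and $\lambdaCT$ in the statement enter.

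The diffusion part is
\begin{equation*}
\sqrt{\Gamma}\int_{\tau_m}^{\tau_{m+1}}(B_s-B_{\tau_m})\,ds = \sqrt{\Gamma}\int_{\tau_m}^{\tau_{m+1}}(\tau_{m+1}-r)\,dB_r .
\end{equation*}
This is a Gaussian stochastic integral. By BDG its $L_p$ norm is at most
\begin{equation*}
C_p^{BDG}\sqrt{\lVert Q\rVert_2\,\mathrm{Tr}\,\Gamma}\left(\int_0^{\cumstep_m} r^2\,dr\right)^{1/2} = C_p^{BDG}\sqrt{\lVert Q\rVert_2\,\mathrm{Tr}\,\Gamma/3}\;\cumstep_m^{3/2}.
\end{equation*}
This produces the $\sqrt{1/3}$ factor. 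Hence
\begin{equation*}
\delta_m \le C_p^{BDG}\sqrt{\lVert Q\rVert_2\,\mathrm{Tr}\,\Gamma/3}\,\cumstep_m^{3/2} + \mathcal{O}(\cumstep_m^2).
\end{equation*}

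\textbf{Step 3: solving the recursion.} The recursion has the form \eqref{eq:recursion_lemma} with time increments $\cumstep_m$ and forcing of order $\cumstep_m^{3/2}$. We solve it as in \Cref{lem:recursion}, with $\cumstep_m$ playing the role of $\gamma_k$. Either the induction argument transfers directly, since $\cumstep_m$ varies slowly and the forcing-to-step ratio is $\sqrt{\cumstep_m}$, or we bound it crudely by
\begin{equation*}
\sum_m \prod_{j>m}(1-\lambdaDT\cumstep_j)\,\cumstep_m^{3/2} \le \frac{1}{\lambdaDT}\max_{m}\sqrt{\cumstep_m}.
\end{equation*}
This gives $\lVert e_{N+1}\rVert_{L_p} \le \frac{\lVert\bar{A}_a\rVert}{\lambdaDT}\,\delta_N/\cumstep_N$ to leading order, which combined with Step 2 yields the stated form. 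The constant in the statement keeps an extra factor $K/\lambdaCT$; this is a harmless overestimate, or it arises from expressing $\delta_m$ through the stationary moment bound of $U$.

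\textbf{Main obstacle.} The delicate point is Step 3. The plain recursion only guarantees $\sqrt{\max_m \cumstep_m}$, whereas the statement has $\sqrt{\cumstep_N}$ at the last interval. Obtaining the latter requires the slow variation of $\{\cumstep_m\}$, which holds because $I_m$ grows and $\gamma_{k_m}I_m$ decays polynomially. I would therefore first verify the induction hypothesis of \Cref{lem:recursion} for the sequence $\alpha_m = \cumstep_m^{3/2}$, and absorb the resulting corrections into the $\mathcal{O}(\cumstep_m)$ term.
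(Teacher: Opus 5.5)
Your proof is correct at the paper's level of rigor and has the same skeleton. It uses a synchronous coupling with $U_{\tau_1}=u_1$, the contraction $\lVert \identity-\cumstep_m\bar A_a\rVert\le 1-\lambdaDT\cumstep_m$, a BDG estimate giving a local error of order $\sqrt{\lVert Q\rVert_2\mathrm{Tr}\,\Gamma/3}\,\cumstep_m^{3/2}$ plus an $\mathcal{O}(\cumstep_m^2)$ term, and the recursion solved as in \Cref{lem:recursion}. The paper just writes ``solving the recursion'', so the slow-variation issue you flag in Step 3 is left equally implicit there.

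The genuine difference is the one-step decomposition. The paper subtracts the mild solution $U_{\tau_{m+1}}=e^{-\bar A_a\cumstep_m}U_{\tau_m}+\int_{\tau_m}^{\tau_{m+1}}e^{-\bar A_a(\tau_{m+1}-t)}\sqrt{\Gamma}\,dB_t$, so the noise does not cancel. It is left with $\int_{\tau_m}^{\tau_{m+1}}(e^{-\bar A_a(\tau_{m+1}-t)}-\identity)\sqrt{\Gamma}\,dB_t$ and the exponential tail $\sum_{j\ge 2}(-\cumstep_m\bar A_a)^jU_{\tau_m}/j!$. It bounds the stochastic part through $\lVert e^{-\bar A_a t}-\identity\rVert\le K\lVert\bar A_a\rVert t/\lambdaCT$, and that inequality is where $K/\lambdaCT$ comes from. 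In both routes the moment bound on $U_t$ only feeds the $\mathcal{O}(\cumstep_m^2)$ term, so your attribution in Step 2 is off.

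Your integral form makes the Brownian increments cancel exactly. It isolates $\bar A_a\sqrt{\Gamma}\int_{\tau_m}^{\tau_{m+1}}(\tau_{m+1}-r)\,dB_r$, which is precisely the first-order Taylor term of the paper's stochastic remainder. This buys two things:
\begin{itemize}
\item a leading constant $\lVert\bar A_a\rVert C_p^{BDG}\sqrt{\lVert Q\rVert_2\mathrm{Tr}\,\Gamma/3}/\lambdaDT$ with no $K/\lambdaCT$;
\item an explicit uniform bound on $\sup_t\lVert U_t\rVert_{L_p}$, which the paper's $\mathcal{O}(\cumstep_m^2)$ term tacitly needs.
\end{itemize}

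Do not call the paper's factor a ``harmless overestimate''. Since $\lambdaCT=1/(2\lVert Q\rVert_2)$ can exceed $K$, the factor $K/\lambdaCT$ can be below $1$. Moreover, the paper's inequality follows from $\int_0^1e^{-\lambdaCT ts}\,ds\le1$ only when $\lambdaCT\le1$; in general that step gives $K\lVert\bar A_a\rVert t$. The right reconciliation is that the lemma only asserts the existence of $C_p^{BDG}$, so you can put your bound in the stated form by rescaling that constant.
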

\begin{proof}
    See Section \ref{app:ou_discretization}.
\end{proof}
The convergence of the continuous time OU process to its stationary distribution is proved for completeness.
A simple calculation shows that the stationary limit is a Gaussian distribution $\mathcal{N}(0, \Sigma_a)$, where $\Sigma_a$ is defined in \eqref{eq:yn_asymptotic_clt}.
\begin{lemma}\label{lem:ct_ou_convergence}
    Let $U_\infty \sim \mathcal{N}(0, \Sigma_a)$. 
    There exists a constant $K > 0$ such that the solution to \eqref{eq:ct_ou_sde} with initial condition $\lVert U_0 \rVert_{L_p} <\infty $ satisfies
    \begin{equation}\label{eq:ct_ou_convergence}
        \mathcal{W}_p \left(U_t, U_\infty  \right)
        \leq 
        K e^{-\lambdaCT t} \mathcal{W}_p (U_0, U_\infty) . 
    \end{equation}
\end{lemma}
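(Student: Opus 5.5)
The natural route is a synchronous coupling of two copies of the Ornstein--Uhlenbeck process \eqref{eq:ct_ou_sde} driven by the same Brownian motion. Let $(U_0, U_\infty)$ be an optimal $\mathcal{W}_p$ coupling of the initial law and the stationary law $\mathcal{N}(0,\Sigma_a)$. Such a coupling exists, since both laws have finite $p$-th moments: $\lVert U_0\rVert_{L_p}<\infty$ by hypothesis, and Gaussians have all moments. Take a Brownian motion $(B_t)$ independent of this pair, and run two solutions of $dU_t = -\bar{A}_a U_t\,dt + \sqrt{\Gamma}\,dB_t$: $U_t$ started at $U_0$, and $V_t$ started at $V_0 = U_\infty$.

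First I would check that $\mathcal{N}(0,\Sigma_a)$ is invariant. By variation of constants,
\[
V_t = e^{-\bar{A}_a t}V_0 + \int_0^t e^{-\bar{A}_a(t-s)}\sqrt{\Gamma}\,dB_s .
\]
This is Gaussian with mean zero and covariance
\[
e^{-\bar{A}_a t}\Sigma_a e^{-\bar{A}_a^\dagger t} + \int_0^t e^{-\bar{A}_a s}\Gamma e^{-\bar{A}_a^\dagger s}\,ds .
\]
Using the Lyapunov relation $\bar{A}_a\Sigma_a + \Sigma_a\bar{A}_a^\dagger = \Gamma$ from \eqref{eq:yn_asymptotic_clt}, the integrand equals $-\frac{d}{ds}\big(e^{-\bar{A}_a s}\Sigma_a e^{-\bar{A}_a^\dagger s}\big)$. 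The integral therefore telescopes to $\Sigma_a - e^{-\bar{A}_a t}\Sigma_a e^{-\bar{A}_a^\dagger t}$, and the covariance is $\Sigma_a$ for every $t$. Hence $V_t \overset{d.}{=} U_\infty$ for all $t\ge 0$.

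Next, because the noise is additive and shared, the difference $D_t = U_t - V_t$ satisfies the deterministic linear ODE $\dot D_t = -\bar{A}_a D_t$. So $D_t = e^{-\bar{A}_a t}(U_0 - U_\infty)$ pathwise. The exponential stability bound stated after \eqref{eq:contraction} gives $\lVert D_t\rVert \le K e^{-\lambdaCT t}\lVert U_0 - U_\infty\rVert$ almost surely. Taking $L_p$ norms and using the optimality of the initial coupling,
\[
\mathcal{W}_p(U_t, U_\infty) \le \lVert U_t - V_t\rVert_{L_p} \le K e^{-\lambdaCT t}\,\lVert U_0 - U_\infty\rVert_{L_p} = K e^{-\lambdaCT t}\,\mathcal{W}_p(U_0,U_\infty),
\]
which is \eqref{eq:ct_ou_convergence}.

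The only step requiring care is the Lyapunov computation verifying stationarity; everything else is a pathwise identity. I would also note that the norm in the Wasserstein cost is the weighted norm $\lVert\cdot\rVert$, so the constant $K$ is exactly the one appearing in the stated exponential stability estimate in that norm. No equivalence-of-norms constant is needed beyond it.
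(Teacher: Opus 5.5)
Your proposal is correct and follows essentially the same argument as the paper: a synchronous coupling from an optimal initial coupling, so that the difference solves the deterministic linear ODE, followed by the exponential stability bound in the weighted norm. The one addition is your explicit Lyapunov-equation verification that $\mathcal{N}(0,\Sigma_a)$ is invariant, which the paper only asserts as ``a simple calculation.''
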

\begin{proof}
    See Section \ref{app:ct_ou_convergence}. 
\end{proof}
\textbf{Convergence rate:}
Combining Eqs. \eqref{eq:yN_uN}, \eqref{eq:ou_discretization}, and \eqref{eq:ct_ou_convergence} with $U_0 = y_1$, we obtain
\begin{equation}
\begin{split}
    \mathcal{W}_{p} (y_{k_{N+1}}, U_\infty)
    &\leq 
    \mathcal{W}_{p} (U_{\tau_{N+1}}, U_\infty) + \mathcal{W}_{p} (u_{N+1}, U_{\tau_{N+1}})  + \mathcal{W}_{p} (y_{k_{n+1}}, u_{N+1}) 
    \\
    &\leq
    K e^{-\lambdaCT \sum_{m=1}^N \cumstep_m} \mathcal{W}_p \left(y_1, U_\infty \right) 
    +  K\frac{C_p^{BDG}}{\lambdaDT} 
    \frac{\lVert \bar{A}_a \rVert }{\lambdaCT} \sqrt{\frac{\lVert Q \rVert_2\mathrm{Tr} \Gamma}{3}} \sqrt{\cumstep_N}
    \\ &
    + \frac{1}{\lambdaDT \cumstep_N} \left(
    \sqrt{\lVert \Gamma \rVert \cumstep_N} \left(C_{p}^{BR} \rosenthal{M} \cdot a \frac{I_N}{k_N} + \frac{\clt(I_N)}{\sqrt{I_N}} \right)
    + \lVert \tilde{R}_N \rVert_{L_p}
    \right)    
    \\ &
    + \mathcal{O}\left(\cumstep_N 
    +
    \frac{1}{\cumstep_N^2 k_N} \left( \sqrt{\cumstep_N} \left(\frac{I_N}{k_N} + \frac{ \clt (I_N)}{\sqrt{I_N}} \right) + \lVert \tilde{R}_N \rVert_{L_p} \right) 
    \right)
\end{split}
\end{equation}

\subsection{Proof of Theorem \ref{thm:general}}\label{app:general}
We substitute for $\lVert \tilde{R}_N \rVert_{L_p}$ the bound in \Cref{lem:general_remainder}. 
The $I_N$ in \Cref{lem:Wp_recursion} is chosen to (approximately) minimize the error bound. 
Substituting the bound in \Cref{lem:general_remainder} for the error bound in \Cref{lem:Wp_recursion}, we obtain
\begin{equation}\label{eq:thm1_all_terms}
\begin{split}
    \mathcal{W}_{1}(y_n, U_\infty) 
    &\leq 
    K e^{-\lambdaCT \sum_{m=1}^N \cumstep_m} \mathcal{W}_1 \left(y_1, U_\infty \right) 
    + K\frac{C_p^{BDG}}{\lambdaDT} 
    \frac{\lVert \bar{A}_a \rVert }{\lambdaCT} \sqrt{\frac{\lVert Q \rVert_2\mathrm{Tr} \Gamma}{3}} \sqrt{\cumstep_N}
    \\ &
    + \frac{1}{\lambdaDT} 
    \sqrt{\frac{\lVert \Gamma \rVert}{\cumstep_N}}  \left( C_p^{BR} \rosenthal{\Gamma^{-1/2} M} \cdot a \frac{I_m}{k_m} + \frac{\clt(I_m)}{\sqrt{I_m}} \right)
    \\ &
    + \frac{1}{\lambdaDT \cumstep_N} \left(          \lVert \Sigma_a^{1/2}Z \rVert_{L_p} \cdot I_N E_{k_N}^a \beta_p^y (N)
    +
    4    \beta_p^\Phi \sqrt{\gamma_{k_N}}
    + 
        C_p^{BR} \rosenthal{ M} \lVert \bar{A}_a \rVert \cumstep_N^{3/2} \right)
    \\ &
    + \mathcal{O}\left(\cumstep_N 
    +
    \frac{1}{\cumstep_N^2 k_N} \left( \sqrt{\cumstep_N} \left(\frac{I_N}{\sqrt{\cumstep_N} k_N} + \frac{ \clt(I_N)}{\sqrt{\gamma_{k_N}} I_N} \right) + \lVert \tilde{R}_N \rVert_{L_p} \right)
    \right)
    \\ &
    +     
    \mathcal{O}\left( \frac{1}{\cumstep_N} \left( \gamma_{k_N} I_N E_{k_N}^a + \sqrt{\gamma_{k_N}} \left( \cumstep_N + \frac{I_N}{k_N} + \sqrt{\cumstep_N}\right) 
    + I_N^2 \gamma_{k_N}^2
    \right)\right)     
\end{split}
\end{equation}
The $\mathcal{O}(\cdot)$ terms were previously verified to be asymptotically negligible when $I_m$ is increasing at a rate such that $\cumstep_m \approx \gamma_{k_m} I_m$ is decreasing.
Ignoring the transient big-$\mathcal{O}$ terms, we obtain that the bound is dominated by
\begin{equation}\label{eq:thm1_dominant}
\begin{split}
     &K\frac{C_p^{BDG}}{\lambdaDT} 
    \frac{\lVert \bar{A}_a \rVert }{\lambdaCT} \sqrt{\frac{\lVert Q \rVert_2\mathrm{Tr} \Gamma}{3}} \sqrt{\cumstep_N}
     + \frac{1}{\lambdaDT} 
    \sqrt{\frac{\lVert \Gamma \rVert}{\cumstep_N}} \left( C_p^{BR} \rosenthal{\Gamma^{-1/2} M} \cdot a \frac{I_m}{k_m} + \frac{\clt(I_m)}{\sqrt{I_m}}\right)
    \\  +& 
     \frac{1}{\lambdaDT \cumstep_N} \left(          \lVert \Sigma_a^{1/2}Z \rVert_{L_p} \cdot I_N E_{k_N}^a \beta_p^y (N)
    +
    4    \beta_p^\Phi \sqrt{\gamma_{k_N}}
    + 
        C_p^{BR} \rosenthal{M}\lVert \bar{A}_a \rVert \cdot \cumstep_N^{3/2} \right) 
        .    
\end{split}    
\end{equation}
Since $\cumstep_N \sim \gamma_{k_N} I_N$, the $I_N E_{k_N}^a/\cumstep_N$ does not affect the choice of $I_N$. 
The second rate $I_N/k_N\sqrt{\cumstep_N}$ is of order $n^{-1 + a/2} I_N^{1/2} \ll \sqrt{\cumstep_N}$ and is negligible. 
Grouping the dominant terms by their rates and neglecting the common term $1/\lambdaDT$, we aim to optimize over $I_N$ the bound
\begin{equation}\label{eq:thm1_dominant_step2}
\begin{split}
    &\sqrt{\gamma_n I_N} \left(  \lVert \bar{A}_a \rVert \left(
    K\frac{C_p^{BDG}}{\lambdaDT \lambdaCT} \sqrt{\frac{\lVert Q \rVert_2\mathrm{Tr} \Gamma}{3}} 
    + C_p^{BR} \rosenthal{M}\right)\right)
    \\ + & 
    \frac{1}{\sqrt{\gamma_n} I_N} \left(\sqrt{\lVert \Gamma \rVert} \clt(I_N) + 4 \beta_p^{\Phi} \right)
    .         
\end{split}
\end{equation}
The bound is optimized over $I_N$, treating $\clt(I_N)$ as a constant $\dummy$ independent of $I_N$, which yields
\begin{equation}\label{eq:INstar}
    I_N^* = \left(\frac{\sqrt{\lVert \Gamma \rVert} \dummy  + 4 \beta_p^{\Phi}}{\lVert \bar{A}_a \rVert \left(K\frac{C_p^{BDG}}{\lambdaDT \lambdaCT} \sqrt{\frac{\lVert Q \rVert_2\mathrm{Tr} \Gamma}{3}}  + C_p^{BR} \rosenthal{M} \right)} \right)^{2/3} \gamma_n^{-2/3} .     
\end{equation}
For simplicity, we treat $I_N^*$ as a continuous variable during the optimization; since the optimal choice grows with $n$, rounding it to the nearest integer does not affect the asymptotic rate of the bound. 
Using this choice of $I_N^*$, we obtain that the error bound in \Cref{lem:Wp_recursion} is 
\begin{equation}\label{eq:optimized_Wp}
\begin{split}
    \mathcal{W}_p (y_n, U_\infty) 
    &\leq
    \frac{2}{\lambdaDT} \left(\lVert \bar{A}_a \rVert \left(K\frac{C_p^{BDG}}{\lambdaDT \lambdaCT} \sqrt{\frac{\lVert Q \rVert_2\mathrm{Tr} \Gamma}{3}}  + C_p^{BR} \rosenthal{M} \right) \right)^{2/3}  
    \\ &
    \cdot \left( \sqrt{\lVert \Gamma \rVert} \clt(I_N^*) + 4 \beta_p^\Phi\right)^{1/3} \gamma_n^{1/6}
    \\ 
    &+
    \frac{\lVert \Sigma_a^{1/2}Z\rVert_{L_p}}{\lambdaDT} \beta_p^y (N) \frac{E_{k_N}^a}{\gamma_n}
    + 
    \mathcal{O}\left( \gamma_n^{1/3} \right).     
\end{split}
\end{equation}
The bounds in \Cref{thm:general} are obtained by instantiating \eqref{eq:optimized_Wp} for the following special cases.

\textbf{General Noise:}
The first part uses the martingale CLT \Cref{lem:martingale_clt}: for a universal constant $C>0$, the bound holds for $p = 1$ and $\mathrm{CLT}(I_N^*) = \mathcal{O}(\log \gamma_n^{-1})$.
Substituting, we obtain
\begin{align*}
    \mathcal{W}_1 (y_n, U_\infty) 
    = \mathcal{O}\left(\gamma_n^{1/6} \left(\log \gamma_n \right)^{1/3} + \frac{E_n^a}{\gamma_n} \right) .
\end{align*}
To ensure our choice of $I_N^*$ is valid, we must verify that $I_N^* \in [1, n]$ for sufficiently large $n$. 
The choice of $I_N^*$ is a solution to an equation of the form 
\begin{align*}
    I_N^* = \left(D_1 \log I_N^* + D_2 \right)^{2/3} \gamma_n^{-2/3} , 
\end{align*}
where $D_1, D_2$ are problem dependent constants independent of $n$. 
When $n$ is sufficiently large, then 
\begin{align*}
    \lim_{n \to \infty} \frac{\log I_N^*}{\log \gamma_n^{-1}} 
    = 
    \frac{2}{3}\lim_{n \to \infty} \left\{ \frac{\log \left( D_1 \log I_N^* + D_2\right)}{\log \gamma_n^{-1}} + 1\right\}
    = \frac{2}{3} , 
\end{align*}
which confirms that the choice of $I_N^*$ is valid when $n$ is sufficiently large.

\textbf{Independent Noise:}
When $\{W_k\}$ and $\{\xi_k\}$ are independent sequences and satisfy the conditions in \Cref{lem:bonis_weighted}, then \Cref{ass:CLT} holds with $\mathrm{CLT}(I_N) = C_p \sqrt{\lVert Q \rVert_2} (d^{1/4} (\beta_{p+2}^{\Gamma^{-1/2}M})^2 + (\beta_{p+2}^{\Gamma^{-1/2}M})^{1+2/p})$, which is independent of $I_N$.
Substituting for \eqref{eq:INstar}, we obtain that for any $p \geq 2$, \eqref{eq:optimized_Wp} becomes
\begin{align*}
    \mathcal{W}_p (y_n, U_\infty) 
    &\leq
    \frac{1}{\lambdaDT} \left(\lVert \bar{A}_a \rVert \left(K\frac{C_p^{BDG}}{\lambdaDT \lambdaCT} \sqrt{\frac{\lVert Q \rVert_2\mathrm{Tr} \Gamma}{3}}  + C_p^{BR} \beta_p^M \right) \right)^{2/3}  
    \\ 
    &\cdot 
    \left( C_p \sqrt{\lVert Q \rVert_2 \lVert \Gamma \rVert}
        \left(d^{1/4} (\beta_4^{\Gamma^{-1/2}M})^2 + (\beta_{p+2}^{\Gamma^{-1/2}M})^{1+2/p} \right)  
        + 4 \beta_p^\Phi  \right)^{1/3} \gamma_n^{1/6}
    \\ 
    &+
    \frac{\lVert \Sigma_a^{1/2}Z\rVert_{L_p}}{\lambdaDT} \beta_p^y (N) \frac{E_{k_N}^a}{\gamma_n}
    + 
    \mathcal{O}\left( \gamma_n^{1/3} \right).     
\end{align*}
It is easy to check that \eqref{eq:INstar} lies within the interval $[1, n]$ for all sufficiently large $n$, hence is a valid choice. 
Moreover, the solution $\Phi$ to the Poisson equation $f(x^*, \xi)  = (I - P)\Phi(\xi) $ simplifies to $f(x^*, \xi)$ for i.i.d. noise, and the result is presented using $\beta_p^f$ in place of $\beta_p^\Phi$.

\subsection{Proof of Theorem \ref{thm:pr_average}}\label{app:pr_average}
Note that $a < 1$ is assumed for Polyak-Ruppert averaging, and so $\bar{A} = \nabla f(x^*)$. 
Recall the recursion \eqref{eq:sa}, which is rearranged as
\begin{align*}
    x_{k+1} &=
    x_k - \gamma_k \bar{A}(x_k - x^*) 
    + \gamma_k (\bar{A}(x_k - x^*) - \bar{f}(x_k))
    + \gamma_k (\bar{f}(x_k) - f(x_k, \xi_k))
    - \gamma_k W_k .
\end{align*}
For the analysis of Polyak-Ruppert averaging, it is convenient to rearrange the recursion as
\begin{align*}
    \gamma_k \bar{A} (x_k - x^*) = 
    \left[x_k - x_{k+1} \right] + \gamma_k \left[\bar{A}(x_k - x^*) - \bar{f}(x_k)  \right] 
    + \gamma_k \left[ \bar{f}(x_k) - f(x_k, \xi_k)\right]
    - \gamma_k W_k
    . 
\end{align*}
Substituting $g(x_k, \xi_k) = f(x_k, \xi_k) - \bar{f}(x_k)$ defined in \eqref{eq:gk} and dividing by $\gamma_k$, we obtain
\begin{align*}
    \bar{A}(x_k - x^*) 
    &=
    \gamma_k^{-1}(x_k - x_{k+1}) 
    + (\bar{A}(x_k - x^*) - \bar{f}(x_k))
    + g(x_k, \xi_k) 
    - W_k
    \\ &=
    M_k + \gamma_k^{-1}(x_k - x_{k+1}) 
    + (\bar{A}(x_k - x^*) - \bar{f}(x_k))
    - (g(x_k, \xi_k) - g(x^*, \xi_k))
    \\ &
    - (g(x^*, \xi_k) + M_k + W_k)
    . 
\end{align*}
Summing over $k = 1, 2, \cdots, n$ and normalizing by $\sqrt{n}$, we obtain
\begin{equation}\label{eq:pr_decomposition}
\begin{split}
    \bar{A} \bar{y}_n 
    &= 
     \underbrace{\frac{1}{\sqrt{n}} \sum_{k=1}^n M_k }_{(a)}
     +
    \underbrace{\frac{1}{\sqrt{n}}  \sum_{k=1}^{n} \gamma_k^{-1}(x_k - x_{k+1}) }_{(b)}
    + 
    \underbrace{\frac{1}{\sqrt{n}} \sum_{k=1}^n \left(\bar{A} (x_k - x^*) - \bar{f}(x_k) \right)}_{(c)}
    \\ &
    - \underbrace{\frac{1}{\sqrt{n}} \sum_{k=1}^n (g(x_k, \xi_k) - g(x^*, \xi_k))}_{(d)} 
    -
    \underbrace{\frac{1}{\sqrt{n}} \sum_{k=1}^n (g(x^*, \xi_k) + M_k + W_k)}_{(e)}
    .     
\end{split}    
\end{equation}
The terms (b)--(e) are analyzed separately.

\textbf{Term (b):}
We apply the summation by parts formula to obtain
\begin{align*}
    \frac{1}{\sqrt{n}} \sum_{k=1}^n \gamma_k^{-1}(x_k - x_{k+1})
    &= \sqrt{\frac{1}{\gamma_1 n} } y_1 - \sqrt{\frac{1}{\gamma_{n+1}(n+1)}} y_{n+1} + \frac{1}{\sqrt{n}} \sum_{k=1}^n (\gamma_k^{-1} - \gamma_{k+1}^{-1})\sqrt{\gamma_k} y_k         .
\end{align*}
By the Minkowski inequality, 
\begin{align*}
    \Lp{\frac{1}{\sqrt{n}} \sum_{k=1}^n \gamma_k^{-1}(x_k - x_{k+1})}
    &
    \leq 
    \frac{1}{\sqrt{\gamma_n n}} \lVert y_{n+1} \rVert_{L_p} + 
    \frac{\sup_{k \leq n} \lVert y_k \rVert_{L_p}}{\sqrt{n}} \sum_{k=1}^n \lvert \gamma_k^{-1} - \gamma_{k+1}^{-1}\rvert \sqrt{\gamma_k} 
    \\ &
    + \mathcal{O}\left(\frac{1}{\sqrt{n}} \right) . 
\end{align*}
Using $\lvert \gamma_k^{-1} - \gamma_{k+1}^{-1} \rvert \leq (\gamma_k k)^{-1}$ and $\sum_{k=1}^n k^{-1}/\sqrt{\gamma_k} \leq 1/\sqrt{\gamma_k} + \mathcal{O}(1/n\sqrt{\gamma_k})$, we obtain
\begin{equation}\label{eq:pr_b_bound}
    \Lp{\frac{1}{\sqrt{n}} \sum_{k=1}^n \gamma_k^{-1}(x_k - x_{k+1})}
    \leq 
    \frac{\max_{k \leq n+1} \lVert y_k \rVert_{L_p}}{\sqrt{\gamma_n n}}
    + \mathcal{O}\left(\frac{1}{\sqrt{n}}\right) . 
\end{equation}

\textbf{Term (c):}
We use Taylor's remainder theorem, where for $\tilde{x} = \theta x + (1-\theta) x^*$ with $\theta \in [0, 1]$,
\begin{align*}
    \bar{f}(x) &= \bar{f}(x^*) + \nabla \bar{f}(\tilde{x}) (x - x^*)
    \\ &
    = \bar{f}(x^*) + \nabla \bar{f}(x^*)^\dagger (x - x^*)
    + (\nabla \bar{f}(\tilde{x}) - \nabla \bar{f}(x^*))^\dagger (x - x^*). 
\end{align*}
Using uniform Lipschitz continuity of $\bar{f}$ in \Cref{ass:general_sa} and $\bar{f}(x^*) = 0$, we obtain
\begin{equation}\label{eq:remainder_barf}
    \left\lVert \bar{f}(x_k) - (\bar{A}(x_k - x^*)) \right\rVert 
    \leq \lip(\nabla \bar{f}) \lVert x_k - x^* \rVert^2 .     
\end{equation}
Therefore,
\begin{align*}
    \Lp{\frac{1}{\sqrt{n}} \sum_{k=1}^n (\bar{A}(x_k - x^*) - \bar{f}(x_k))}
    &\leq 
    \frac{1}{\sqrt{n}}\lip(\nabla \bar{f}) \sum_{k=1}^n \gamma_k \lVert y_k \rVert_{L_{2p}}^2     
    \\ &
    \leq \frac{\lip(\nabla \bar{f}) (\sup_{k \leq n} \lVert y_k \rVert^2_{L_{2p}})}{1-a} n^{1/2} \gamma_n + \mathcal{O}\left(\frac{1}{\sqrt{n}}\right). 
\end{align*}
To summarize,
\begin{equation}\label{eq:pr_c_bound}
    \Lp{(c)} \leq 
    \frac{\lip(\nabla \bar{f}) (\max_{k \leq n} \lVert y_k \rVert^2_{L_{2p}})}{1-a} n^{1/2} \gamma_n + \mathcal{O}\left(\frac{1}{\sqrt{n}}\right). 
\end{equation}

\textbf{Term (d):}
Following the steps in \eqref{eq:e_high_level}, we obtain for the first term
\begin{align*}
    &\Lp{\frac{1}{\sqrt{n}} \sum_{k=1}^n (g(x_k, \xi_k) - g(x^*, \xi_k))}
    \\ 
    \leq &
    \frac{1}{\sqrt{n}} \Lp{\sum_{k=1}^n \sqrt{\gamma_k} (A(\xi_k) - \bar{A}) y_k}
    +
    \frac{\lip(\nabla g)}{\sqrt{n}} \Lp{\sum_{k=1}^n \gamma_k \lVert y_k \rVert_{L_{2p}}^2
    }
    \\ 
    \leq &
    \frac{1}{\sqrt{n}} \Lp{\sum_{k=1}^n \sqrt{\gamma_k} (A(\xi_k) - \bar{A}) y_k}
    +
    \frac{\left(\lip(\nabla g) \max_{k \leq n} \lVert y_k \rVert^2_{L_{2p}} \right)}{1-a} n^{1/2} \gamma_n
    + \mathcal{O}\left(\frac{1}{\sqrt{n}} \right) 
    \numberthis \label{eq:pr_d_bound_first}
    .
\end{align*}
The first term in the bound \eqref{eq:pr_d_bound_first} is analyzed next using the Poisson equation
\begin{align*}
    A(\xi_k) - \bar{A} = \Phi^A (\xi_k) - (P \Phi^A) (\xi_k) . 
\end{align*}
The sum is decomposed as
\begin{equation}\label{eq:pr_d_first_term}
\begin{split}
    \frac{1}{\sqrt{n}} \sum_{k=1}^n \sqrt{\gamma_k} (A(\xi_k) - \bar{A}) y_k
    &=
    \frac{1}{\sqrt{n}}\sum_{k=1}^n (\sqrt{\gamma_k} \Phi^A (\xi_k) y_k - \sqrt{\gamma_{k+1}} \Phi^A (\xi_{k+1}) y_{k+1})
    \\ &+
    \frac{1}{\sqrt{n}} \sum_{k=1}^n \Phi^A (\xi_{k+1}) (\sqrt{\gamma_{k+1}} y_{k+1} - \sqrt{\gamma_k} y_k)
    \\ &
    + \frac{1}{\sqrt{n}} \sum_{k=1}^n \sqrt{\gamma_k} (\Phi^A(\xi_{k+1}) - (P\Phi^A)(\xi_{k})) y_k .     
\end{split}    
\end{equation}
After telescoping, the $L_p$ norm of the first term in \eqref{eq:pr_d_first_term} is bounded using the triangle inequality and the Cauchy-Schwartz inequality \eqref{eq:holder} as 
\begin{align*}
    &\Lp{\frac{1}{\sqrt{n}}\sum_{k=1}^n (\sqrt{\gamma_k} \Phi^A (\xi_k) y_k - \sqrt{\gamma_{k+1}} \Phi^A (\xi_{k+1}) y_{k+1})}
    \\ 
    \leq&
    \frac{2}{\sqrt{n}} \beta_{2p}^{\Phi^A} \left(\sqrt{\gamma_1} \lVert y_1 \rVert_{L_{2p}} + \sqrt{\gamma_{n+1}} \lVert y_{n+1} \rVert_{L_{2p}} \right)
    \\ 
    =& \mathcal{O}\left(\frac{1}{\sqrt{n}} \right) 
    \numberthis \label{eq:pr_d_identity_first_bound}
    . 
\end{align*}
The second term in \eqref{eq:pr_d_first_term} is bounded using Minkowski's inequality
\begin{align*}
    &\Lp{\frac{1}{\sqrt{n}} \sum_{k=1}^n \Phi^A (\xi_{k+1}) (\sqrt{\gamma_{k+1}} y_{k+1} - \sqrt{\gamma_k} y_k)}
    \\
    \leq
    &\frac{\beta_{2p}^{\Phi^A}}{\sqrt{n}} \sum_{k=1}^n \left\lVert (\sqrt{\gamma_{k+1}}y_{k+1} - \sqrt{\gamma_k} y_k\right\rVert_{L_{2p}}
    \\ 
    \leq& \frac{\beta_{2p}^{\Phi^A}}{\sqrt{n}} \sum_{k=1}^n 
    \left(\lvert \sqrt{\gamma_{k+1}} - \sqrt{\gamma_k} \rvert \lVert y_{k+1} \rVert_{L_{2p}}
    + \sqrt{\gamma_k}\lVert y_{k+1} - y_k\rVert_{L_{2p}}
    \right)
    \\ 
    \leq& 
    \frac{\beta_{2p}^{\Phi^A}}{\sqrt{n}} 
    \sum_{k=1}^n \left(\frac{\max_{k \leq {n+1}} \lVert y_k \rVert_{L_{2p}}}{\sqrt{\gamma_k} k} + \gamma_k (\beta_{2p}^f + \beta_{2p}^W) \right)
    + 
    \mathcal{O}\left(\frac{1}{\sqrt{n}} \right)
    ,
\end{align*}
where we used \Cref{lem:yk_variation} for the bound $\lVert y_{k+1} - y_k \rVert_{L_{2p}} \leq \sqrt{\gamma_k} (\beta_{2p}^f + \beta_{2p}^W) + \mathcal{O}(\gamma_k)$. 
Computing the sum, we obtain the bound on $\Lp{(b)}$:
\begin{equation}\label{eq:pr_d_identity_second_bound}
\begin{split}
    &\Lp{\frac{1}{\sqrt{n}} \sum_{k=1}^n \Phi^A (\xi_{k+1}) (\sqrt{\gamma_{k+1}} y_{k+1} - \sqrt{\gamma_k} y_k)}
    \\ 
    \leq &
        \beta_{2p}^{\Phi^A}
    \left(\frac{2 \max_{k \leq {n+1}} \lVert y_k \rVert_{L_{2p}} }{a \sqrt{n \gamma_n}} + (\beta_{2p}^f + \beta_{2p}^W) \frac{\gamma_n}{1-a} n^{1/2} \right)
    + \mathcal{O}\left(\frac{1}{\sqrt{n}} \right) . 
\end{split}
\end{equation}
Lastly, the third term in \eqref{eq:pr_d_first_term} is a MDS sum and can be bounded using the BR inequality
\begin{align*}
    \Lp{\frac{1}{\sqrt{n}} \sum_{k=1}^n \sqrt{\gamma_k} (\Phi^A (\xi_{k+1}) - (P\Phi^A) (\xi_k)) y_k }
    &= \mathcal{O}\left(\frac{1}{\sqrt{n}} \sqrt{\sum_{k=1}^n \gamma_k} \right)
    \\ &
    = \mathcal{O}\left( \sqrt{\gamma_n} \right).
    \numberthis \label{eq:pr_d_identity_third_bound}
\end{align*}
Substituting equations \eqref{eq:pr_d_identity_first_bound}, \eqref{eq:pr_d_identity_second_bound}, and \eqref{eq:pr_d_identity_third_bound} for the first term in \eqref{eq:pr_d_bound_first}, we obtain
\begin{equation}\label{eq:pr_d_bound}
\begin{split}
    \Lp{(d)}
    &\leq 
    \beta_{2p}^{\Phi^A}
    \left(\frac{2 \max_{k \leq {n+1}} \lVert y_k \rVert_{L_{2p}} }{a \sqrt{n \gamma_n}} + (\beta_{2p}^f + \beta_{2p}^W) \frac{\gamma_n}{1-a} n^{1/2} \right)
    \\ 
    &+
    \frac{\left(\lip(\nabla g) \max_{k \leq n} \lVert y_k \rVert^2_{L_{2p}} \right)}{1-a} n^{1/2} \gamma_n
    + \mathcal{O}\left(\sqrt{\gamma_n}\right) .     
\end{split}
\end{equation}

\textbf{Term (e):}
Recall $g(x^*, \xi_k) = f(x^*, \xi_k) - \bar{f}(x^*)$. 
We again use the Poisson equation
\begin{align*}
    g(x^*, \xi_k) + M_k + W_k
    &=
    f(x^*, \xi_k) - \bar{f}(x^*) + M_k + W_k
    \\ &
    = \Phi(\xi_k) - (P \Phi)(\xi_k) - (\Phi(\xi_k) - (P\Phi)(\xi_{k-1}))
    \\ &
    = (P \Phi)(\xi_{k-1}) - (P\Phi)(\xi_k)  . 
\end{align*}
The sum is a telescopic series, and we obtain
\begin{equation}\label{eq:pr_e_bound}
    \frac{1}{\sqrt{n}} \Lp{\sum_{k=1}^n \left( g(x^*, \xi_k) + M_k + W_k\right) }
    =
    \frac{1}{\sqrt{n}} \Lp{(P\Phi)(\xi_0) - (P\Phi)(\xi_n)}
    = \mathcal{O}\left(\frac{1}{\sqrt{n}} \right) .     
\end{equation}
Using the triangle inequality for the $\mathcal{W}_p$ distance, we obtain from \eqref{eq:pr_decomposition} the bound
\begin{equation}
\begin{split}
    &\mathcal{W}_{p} \left( \bar{A} \bar{y}_n, \Gamma^{1/2}Z \right) 
    \\ 
    \leq
    &
    \mathcal{W}_{p} \left(\frac{1}{\sqrt{n}} \sum_{k=1}^n M_k, \Gamma^{1/2} Z \right)
    +
    \Lp{(a)} + \Lp{(b)}
    + \Lp{(c)}
    + \Lp{(d)}
    + \Lp{(e)} .     
\end{split}
\end{equation}
Substituting the respective bounds \eqref{eq:pr_b_bound}, \eqref{eq:pr_c_bound}, \eqref{eq:pr_c_bound}, \eqref{eq:pr_d_bound}, \eqref{eq:pr_e_bound}, we obtain 
\begin{equation}
\begin{split}
    \mathcal{W}_{p} \left( \bar{A} \bar{y}_n, \Gamma^{1/2}Z \right) 
    &\leq 
    \mathcal{W}_{p} \left(\frac{1}{\sqrt{n}} \sum_{k=1}^n M_k, \Gamma^{1/2}Z \right)
    \\ 
    &+    
    \frac{\max_{k \leq n+1} \lVert y_k \rVert_{L_p}}{\sqrt{\gamma_n n}}
    +
    \frac{\lip(\nabla \bar{f}) (\max_{k \leq n} \lVert y_k \rVert^2_{L_{2p}})}{1-a} n^{1/2} \gamma_n
    \\ 
    &
    + \beta_{2p}^{\Phi^A}
    \left(\frac{2\max_{k \leq {n+1}} \lVert y_k \rVert_{L_{2p}} }{a \sqrt{n \gamma_n}} + (\beta_{2p}^f + \beta_{2p}^W) \frac{\gamma_n}{1-a} n^{1/2} \right)
    \\ 
    &+
    \frac{\left(\lip(\nabla g) \max_{k \leq n} \lVert y_k \rVert^2_{L_{2p}} \right)}{1-a} n^{1/2} \gamma_n
    \\ &
    + \mathcal{O}\left(\sqrt{\gamma_n}\right) .     
\end{split}
\end{equation}
The statement is presented with $\lip(\nabla g) = \lip(\nabla f - \nabla \bar{f}) \leq \lip(\nabla f) + \lip(\nabla \bar{f})$.

\subsection{Proof of Corollary \ref{cor:high_probability}}\label{app:high_probability}
        Let $Z \sim \mathcal{N}(0, \identity)$ be a Gaussian variable and consider the optimal coupling $(y_k, \Sigma_a^{1/2} Z)$. 
        By the triangle inequality and union bound, 
    \begin{align*}
        \mathbb{P}\left( \lVert y_k \rVert \geq \dummy \right) 
        &\leq 
        \mathbb{P}\left( \lVert \Sigma_a^{1/2} Z \rVert + \lVert y_k - \Sigma_a^{1/2} Z \rVert \geq \dummy \right) 
        \\ &
        \leq 
        \inf_{\theta \in (0, \dummy)} \left\{ \mathbb{P}\left(\lVert \Sigma_a^{1/2} Z \rVert \geq \dummy - \theta \right) + \mathbb{P}\left( \lVert y_k - \Sigma_a^{1/2} Z \rVert \geq \theta \right) \right\}. 
        \numberthis \label{eq:c1_union}
    \end{align*}
    Observing that $\lVert \Sigma_a^{1/2} Z \rVert$ is $\sqrt{\lVert \Sigma_a\rVert}$--Lipschitz continuous, we use the concentration for Lipschitz functions of Gaussian variables \citep[Theorem 5.6]{boucheron12} to obtain
    \begin{align*}
        \mathbb{P}\left( \lVert \Sigma_a^{1/2} Z \rVert \geq \mathbb{E}\lVert \Sigma_a^{1/2} Z \rVert + t \right) 
        \leq \exp \left(-\frac{t^2}{2\lVert \Sigma_a\rVert}\right) , \quad \forall t > 0. 
    \end{align*}
    Substituting for \eqref{eq:c1_union} and applying Markov's inequality, we obtain
\begin{equation}\label{eq:wasserstein_tail}
    \mathbb{P}(\lVert y_n \rVert \geq \dummy )
    \leq \inf_{\theta \in (0, \dummy)}  \left\{ \exp \left(-\frac{\left(\dummy - \theta - \mathbb{E}\lVert \Sigma_a^{1/2} Z \rVert\right)^2}{2 \lVert \Sigma_a \rVert} \right)
    + \frac{\mathcal{W}_p^{p} \left(y_n, \Sigma_a^{1/2} Z\right)}{\theta^p} \right\}.             
\end{equation}
From \Cref{thm:general}, we have that $\mathcal{W}_p (y_n, \Sigma^{1/2} Z) \leq C_{p,1} \gamma_n^{1/6} + C_{p, 2} \gamma_n^{-1} E_n^a$ for some $C_{p,1}, C_{p,2}$. 
For sufficiently large $\dummy > 0$, both terms in \eqref{eq:wasserstein_tail} are $\leq \delta/2$ when $\theta$ is chosen to satisfy
\begin{align*}
    \theta \leq \dummy - \left(\mathbb{E}\lVert \Sigma_a^{1/2}Z \rVert + \sqrt{2 \lVert \Sigma_a \rVert \log \frac{2}{\delta}} \right) 
    , \qquad 
    \theta \geq \mathcal{W}_p (y_n, \Sigma_a^{1/2} Z) \left( \frac{1}{\delta} \right)^{1/p}
    .
\end{align*}
For the choice
\begin{align*}
    \dummy = \left(\mathbb{E}\lVert \Sigma_a^{1/2}Z \rVert + \sqrt{2 \lVert \Sigma_a \rVert \log \frac{2}{\delta}} \right)  +
    \mathcal{W}_p (y_n, \Sigma_a^{1/2} Z) \left( \frac{2}{\delta} \right)^{1/p} , 
\end{align*}
we obtain that with probability $\geq 1 - \delta$,
\begin{align*}
    \lVert y_n \rVert 
    &\leq \left(\mathbb{E}\lVert \Sigma_a^{1/2} Z \rVert + \sqrt{2 \lVert \Sigma_a \rVert \log \frac{2}{\delta}} + \mathcal{W}_p (y_n, \Sigma_a^{1/2} Z) \left(\frac{2}{\delta} \right)^{1/p} \right)        
    \\     
    &\leq \left(\mathbb{E}\lVert \Sigma_a^{1/2} Z \rVert + \sqrt{2 \lVert \Sigma_a \rVert \log \frac{2}{\delta}} + \left(C_{p, 1} \gamma_n^{1/6} + C_{p, 2} \frac{E_n^a}{\gamma_n} \right) \left(\frac{2}{\delta} \right)^{1/p} \right)         .
\end{align*}
Multiplying both sides by $\sqrt{\gamma_n}$, we obtain the first result.

The second result follows using the same steps with the coupling $(\bar{y}_n, \bar{\Sigma}^{1/2}Z)$. 
Substituting $\mathcal{W}_p (\bar{y}_n, (\bar{A}^{-1} \Gamma \bar{A}^{-\dagger})^{-1/2} Z) \leq \bar{C}_p n^{-1/6}$, which holds when $a = 2/3$ to optimize the rate in \Cref{thm:pr_average}, we obtain
\begin{align*}
    \lVert \bar{y}_n \rVert 
    \leq 
    \left( 
    \mathbb{E}\lVert \bar{\Sigma}^{1/2}Z \rVert
    + \sqrt{2 \lVert \bar{\Sigma} \rVert \log \frac{2}{\delta}}
    \right)
    + 
    \frac{\bar{C}_{p}}{n^{1/6}} \left(\frac{2}{\delta} \right)^{1/p} . 
\end{align*}
Dividing both sides by $\sqrt{n}$, we obtain the result.

\section{Proof of Auxiliary Lemmas}

\subsection{Proof of \Cref{lem:e_bound}}\label{app:e_state_dependent}
Recall from \eqref{eq:poisson} the solution $\Phi^{A}$ to the Poisson equation
\begin{align*}
    A(\xi_k) - \mathbb{E}A(\xi) = 
    \Phi^A (\xi_k) - \mathbb{E}_k \Phi^A (\xi_{k+1})
    .
\end{align*}
The quantity of interest is the $L_p$ norm of
\begin{align*}
    &\sum_{k=k_m}^{k_{m+1} - 1} \alpha_k \sqrt{\gamma_k} (A(\xi_k) - \mathbb{E}A) y_k 
    \\=& 
    \sum_{k=k_m}^{k_{m+1} - 1} \alpha_k \sqrt{\gamma_k} (\Phi^A (\xi_k) - \Phi^A(\xi_{k+1})) y_k 
    + \sum_{k=k_m}^{k_{m+1} - 1} \alpha_k \sqrt{\gamma_k} \left(\Phi^A (\xi_{k+1}) - \mathbb{E}_{k} \Phi^A (\xi_{k+1}) \right) y_k 
    \\ 
    = 
    &
    \sum_{k=k_m}^{k_{m+1} - 1} (\alpha_k \sqrt{\gamma_k} \Phi^A(\xi_k) y_k - \alpha_{k+1} \sqrt{\gamma_{k+1}} \Phi^A(\xi_{k+1}) y_{k+1}) \numberthis \label{eq:e1}
    \\ + &
    \sum_{k=k_m}^{k_{m+1} - 1} \Phi^A (\xi_{k+1}) (\alpha_{k+1} \sqrt{\gamma_{k+1}} y_{k+1} - \alpha_k \sqrt{\gamma_k} y_k ) \numberthis \label{eq:e2}
    \\ +
    &\sum_{k=k_m}^{k_{m+1} - 1} \alpha_k \sqrt{\gamma_k} \left(\Phi^A (\xi_{k+1}) - \mathbb{E}_{k} \Phi^A (\xi_{k+1}) \right) y_k 
    \numberthis \label{eq:e3}
    .
\end{align*}
The $L_p$ norm of \eqref{eq:e1}--\eqref{eq:e3} are bounded separately to obtain the result.

The norm of \eqref{eq:e1} is bounded by telescoping the sum and applying the Minkowski inequality and the Cauchy-Schwartz inequality \eqref{eq:holder}:
\begin{align*}
    &\Lp{    \sum_{k=k_m}^{k_{m+1} - 1} (\alpha_k \sqrt{\gamma_k} \Phi^A(\xi_k) y_k - \alpha_{k+1} \sqrt{\gamma_{k+1}} \Phi^A(\xi_{k+1}) y_{k+1})}
    \\ 
    =&
    \Lp{\alpha_{k_m} \sqrt{\gamma_{k_m}} \Phi^A (\xi_{k_m}) y_{k_m} - \alpha_{k_{m+1}} \sqrt{\gamma_{k_{m+1}}} \Phi^A (\xi_{k_{m+1}}) y_{k_{m+1}}}
    \\ \leq & 
    \beta_{2p}^{\Phi^A} \lVert \Sigma_a^{1/2} Z \rVert_{L_{2p}}
    \left( \alpha_{k_m} \sqrt{\gamma_{k_m}} \beta_{2p}^y (m)  + \alpha_{k_{m+1}} \sqrt{\gamma_{k_{m+1}}} \beta_{2p}^y (m+1)\right) 
    \\ 
    = &\mathcal{O}\left(\gamma_{k_m}  \right) , 
\end{align*}
where we substituted $\alpha_k = \gamma_k/\sqrt{\gamma_{k+1}} = \mathcal{O}(\sqrt{\gamma_k})$.

For \eqref{eq:e2}, we use the identity
\begin{align*}
    \alpha_{k+1} \sqrt{\gamma_{k+1}} y_{k+1} - \alpha_k \sqrt{\gamma_k} y_k 
    &=
    \alpha_{k+1} \sqrt{\gamma_{k+1}} (y_{k+1} - y_k) + (\alpha_{k+1} \sqrt{\gamma_{k+1}} - \alpha_k \sqrt{\gamma_k}) y_k 
\end{align*}
and substitute to obtain
\begin{align*}
    &\Lp{    \sum_{k=k_m}^{k_{m+1} - 1} \Phi^A (\xi_{k+1}) (\alpha_{k+1} \sqrt{\gamma_{k+1}} y_{k+1} - \alpha_k \sqrt{\gamma_k} y_k )}
    \\
    \leq &
    \Lp{\sum_{k=k_m}^{k_{m+1} - 1} \alpha_{k+1} \sqrt{\gamma_{k+1}}\Phi^A(\xi_{k+1})  \left( y_{k+1} -  y_k   \right)}
    \\ 
    + &
    \Lp{\sum_{k=k_m}^{k_{m+1}}
    (\alpha_{k+1} \sqrt{\gamma_{k+1}} - \alpha_k \sqrt{\gamma_k}) \Phi^A (\xi_{k+1}) y_k 
    }
    \numberthis \label{eq:e_ub_yrecursion}
\end{align*}
Since $\alpha_k \sqrt{\gamma_k} \approx \gamma_k$ is a power decay sequence with $\lvert \gamma_{k+1} - \gamma_k\rvert \leq \gamma_k/k$, we have $\lvert \alpha_{k+1} \sqrt{\gamma_{k+1}} - \alpha_k \sqrt{\gamma_k}\rvert \leq C \gamma_k/k$ for some $C > 0$ and by the Minkowski inequality,
\begin{align*}
    \Lp{\sum_{k=k_m}^{k_{m+1}}
    (\alpha_{k+1} \sqrt{\gamma_{k+1}} - \alpha_k \sqrt{\gamma_k}) \Phi^A (\xi_{k+1}) y_k }
    &\leq 
    C \sum_{k=k_m}^{k_{m+1}} \frac{\gamma_k}{k} \beta_{2p}^{\Phi^A} \beta_{2p}^y (m) \lVert \Sigma_a^{1/2}Z  \rVert_{L_{2p}}
    \\ &
    \leq
    C \frac{\cumstep_m}{k_m} \beta_{2p}^{\Phi^A} \beta_{2p}^y(m) \lVert \Sigma_a^{1/2} Z \rVert_{L_{2p}} . 
    \numberthis \label{eq:temp1}
\end{align*}
For the first term in the upper bound \eqref{eq:e_ub_yrecursion}, we use that $\lvert \alpha_k \sqrt{\gamma_k} \rvert \leq \sqrt{2} \gamma_k$, the Minkowski inequality, and \Cref{lem:yk_variation}:
\begin{align*}
    &\Lp{\sum_{k=k_m}^{k_{m+1} - 1} \alpha_{k+1} \sqrt{\gamma_{k+1}} \Phi^A (\xi_{k+1}) (y_{k+1} - y_k)}
    \\ 
    \leq&
    \sqrt{2} \sum_{k=k_m}^{k_{m+1} - 1} \gamma_{k+1} \beta_{2p}^{\Phi^A} \lVert y_{k+1} - y_k\rVert_{L_2p}
    \\ 
    \leq& 
    \sqrt{2} \cumstep_m \beta_{2p}^{\Phi^A} \left(\sqrt{\gamma_{k_m}} (\beta_{2p}^f + \beta_{2p}^W)
    + \mathcal{O}\left(\gamma_{k_m} \beta_{2p}^y (m) + \frac{\sqrt{\gamma_{k_m}}}{k_m} \right)\right) . 
\end{align*}
The $\cumstep_m/k_m$ rate in \eqref{eq:temp1} is negligible and we have the following upper bound on \eqref{eq:e2}:
\begin{equation}
    \eqref{eq:e2} \leq 
        \sqrt{2} \cumstep_m \beta_p^{\Phi^A} \left(\sqrt{\gamma_{k_m}} (\beta_p^f + \beta_p^W)
    + \mathcal{O}\left(\gamma_{k_m} \beta_{2p}^y (m) + \frac{\sqrt{\gamma_{k_m}}}{k_m} \right)\right)  
\end{equation}
Next we obtain a bound on the $L_p$ norm of \eqref{eq:e3}. 
The quantity is a sum of a martingale difference sequence, since
\begin{align*}
    \mathbb{E}\left[\alpha_k \sqrt{\gamma_k} (\Phi^A (\xi_{k+1}) - \mathbb{E}_k \Phi^A (\xi_{k+1}) )y_k \Big| \mathcal{H}_k \right] = 0 . 
\end{align*}
Applying the BR inequality as in \eqref{eq:br_inequality}, we have that 
\begin{align*}
&\Lp{\sum_{k=k_m}^{k_{m+1} - 1} \alpha_k \sqrt{\gamma_k} (\Phi^A (\xi_{k+1}) - P \Phi^A (\xi_k) ) y_k}
\leq   
     C_p^{BR}    \sqrt{\sum_{k=k_m}^{k_{m+1} - 1}\alpha_k^2 \gamma_k}
     \\ &
     \cdot      \sup_{k \geq 1}\Big\{
\sqrt{ \left\lvert \mathrm{Tr} \mathbb{E}\left[
    (\Phi^A (\xi_{k+1}) - P \Phi^A(\xi_k)) y_k y_k^\dagger (\Phi^A (\xi_{k+1}) - P \Phi^A(\xi_k))^\dagger \Big| \mathcal{H}_{k-1} 
    \right] \right\rvert_{L_{p/2}}} \\ 
    &+
    \left\lVert \left(\Phi^A(\xi_{k+1}) - P \Phi^A (\xi_{k})\right) y_k \right\rVert_{L_p}
    \Big\}
    .
\end{align*}
The two quantities inside the $\sup_{k \geq 1}$ are bounded: For any $p \geq 2$, we have by the cyclic property of the trace
\begin{align*}
    &\left\lvert \mathrm{Tr} \mathbb{E}\left[
    (\Phi^A (\xi_{k+1}) - P \Phi^A(\xi_k)) y_k y_k^\dagger (\Phi^A (\xi_{k+1}) - P \Phi^A(\xi_k))^\dagger \Big| \mathcal{H}_{k-1} 
    \right]\right\rvert_{L_{p/2}}
    \\ = &
    \left\lvert \mathrm{Tr} y_k^\dagger \mathbb{E}\left[ 
    (\Phi^A(\xi_{k+1} ) - P \Phi^A(\xi_k) ) (\Phi^A(\xi_{k+1} ) - P \Phi^A(\xi_k) ) ^\dagger | \mathcal{H}_{k-1}
    \right] y_k^\dagger \right\rvert_{L_{p/2}}
    \\ \leq &
    \left\lvert 
    \left\lVert \mathbb{E} \left[(\Phi^A(\xi_{k+1} ) - P \Phi^A(\xi_k) ) (\Phi^A(\xi_{k+1} ) - P \Phi^A(\xi_k) ) ^\dagger | \mathcal{H}_{k-1} \right] \right\rVert \lVert y_k \rVert^2
    \right\rvert_{L_{p/2}}
    \\ \leq &
    \beta_p^{\Phi^A} \lVert y_k \rVert^2_{L_p} . 
\end{align*}
The other quantity in the $\sup$ is bounded using the Cauchy-Schwartz and the triangle inequality $\lVert (\Phi^A(\xi_{k+1}) - P \Phi^A(\xi_k))y_k \rVert_{L_p} \leq 2\beta_{2p}^{\Phi^A} \beta_{2p}^y$. 
Together we have by $\alpha_k = \gamma_k/\sqrt{\gamma_{k+1}}$ the bound
\begin{equation}
    \lVert \eqref{eq:e3}\rVert_{L_p} = \mathcal{O}\left(\sqrt{\sum_{k=k_m}^{k_{m+1} - 1} \gamma_k^{2} }\right) 
    = \mathcal{O}\left(\gamma_{k_m} \sqrt{I_m}\right) . 
\end{equation}

Recall the rates for \eqref{eq:e1}--\eqref{eq:e3} are respectively $\mathcal{O}(\gamma_{k_m} )$, $\mathcal{O}(\cumstep_m \sqrt{\gamma_{k_m}})$, and $\mathcal{O}(\gamma_{k_m}\sqrt{I_m} )$. 
The $\mathcal{O}(\gamma_{k_m}\sqrt{I_m} )$ rate in \eqref{eq:e3} dominates, and we have 
\begin{align*}
    &\left\lVert 
    \sum_{k=k_m}^{k_{m+1} - 1} \alpha_k \sqrt{\gamma_k} (A(\xi_k) - \mathbb{E}A(\xi)) y_k 
    \right\rVert_{L_p}
    = \mathcal{O}(\gamma_{k_m} \sqrt{I_m}) . 
\end{align*}

\subsection{Proof of Lemma \ref{lem:ou_discretization}}\label{app:ou_discretization}
Convergence of the Euler-Maruyama discretization $\{u_m\}$ of the Ornstein-Uhlenbeck process $(U_t)$ is proved when using a diminishing \stepsize sequence $\{\cumstep_m\}$. 
The processes $\{u_m\}$ and $(U_t)$ are described by the equations
\begin{align*}
    u_{m+1} &= u_m - \cumstep_m \bar{A}_a u_m + \sqrt{\cumstep_m \Gamma} Z_m 
    \\ 
    d U_t &= - \bar{A}_a U_t dt + \sqrt{\Gamma} dB_t . 
\end{align*}
The SDE is solved in the interval $[\tau_m, \tau_{m+1}]$, where the solution is given by
\begin{align*}
    U_{\tau_{m+1}} &= e^{-\bar{A}_a \cumstep_m} U_{\tau_m}
    +
    \int_{\tau_m}^{\tau_{m+1}} e^{-\bar{A}_a (\tau_{m+1} - t)} \sqrt{\Gamma} dB_t . 
\end{align*}
Subtracting the continuous time and discrete time equations, we have
\begin{align*}
    U_{\tau_{m+1}} - u_{m+1} 
    &= 
    e^{-\bar{A}_a \cumstep_m} U_{\tau_m} - (\identity - \cumstep_m \bar{A}_a)u_m 
    + 
    \int_{\tau_m}^{\tau_{m+1}} e^{-\bar{A}_a (\tau_{m+1} - t)} \sqrt{\Gamma} dB_t  - 
    \sqrt{\cumstep_m \Gamma} Z_m 
    \\ &
    \underset{(*)}{=} (\identity - \cumstep_m \bar{A}_a) (U_{\tau_m} - u_m)
    + 
    \sum_{j=2}^\infty \frac{(-\cumstep_m \bar{A}_a)^j}{j!} U_{\tau_m}
    \\ &
    +
    \int_{\tau_m}^{\tau_{m+1}} \left(e^{-\bar{A}_a (\tau_{m+1} - t)} - \identity \right)  \sqrt{\Gamma} dB_t
    ,
\end{align*}
where $(*)$ used a synchronized coupling between the two Brownian motions driving the discrete-time and continuous time processes, and the Taylor expansion of the matrix exponential:
\begin{align*}
    \left\lVert 
    \sum_{j=2}^\infty \frac{(-\cumstep_m \bar{A}_a)^j}{j!} U_{\tau_m}
    \right\rVert_{L_p}
    \leq 
    \sum_{j=2}^\infty \frac{\cumstep_m^j}{j!} \lVert \bar{A}_a \rVert \lVert U_{\tau_m} \rVert_{L_p}
    &=
    \lVert \bar{A}_a \rVert \lVert U_{\tau_m} \rVert_{L_p} \cdot \left( e^{\cumstep_m} - 1 - \cumstep_m
    \right)
    \\ &
    \leq 
    \frac{e}{2} \cumstep_m^2\lVert \bar{A}_a \rVert \lVert U_{\tau_m} \rVert_{L_p} ,
\end{align*}
where we used $\lvert e^x - 1 - x \rvert \leq x^2 e^x / 2 \leq e x^2/2$ when $x \in [0, 1]$. 
For the synchronous coupling above, it therefore holds that
\begin{equation}\label{eq:OU_recursion1}
\begin{split}
    \lVert U_{\tau_{m+1}} - u_{m+1} \rVert_{L_p}
    &\leq
    (1 - \lambdaDT \cumstep_m) \lVert U_{\tau_m} - u_m \rVert_{L_p}
    + 
    \left\lVert     \int_{\tau_m}^{\tau_{m+1}} \left(e^{-\bar{A}_a (\tau_{m+1} - t)} - \identity \right)  \sqrt{\Gamma} dB_t\right\rVert_{L_p}
    \\ &
    + \frac{e}{2} \cumstep_m^2\lVert \bar{A}_a \rVert \lVert U_{\tau_m} \rVert_{L_p} . 
\end{split}
\end{equation}
The quantity $\Lp{U_{\tau_m}}$ is finite for all $m \geq 1$ when the initial condition $U_{\tau_1}$ is in $L_p$.

The $L_p$ norm of the It\^{o} integral is bounded using the Burkholder-Davis-Gundy (BDG) inequality \citep[Theorem 5.16]{le2016brownian}. 
We first relate the weighted norm to the Euclidean norm and then apply the BDG inequality, which is evaluated as
\begin{align*}
    &\mathbb{E}\left[\sup_{t \in [\tau_m, \tau_{m+1}]} \left\lVert  
    \int_{\tau_m}^{t} \left(e^{-\bar{A}_a (\tau_{m+1} - t)} - \identity \right)  \sqrt{\Gamma} dB_t
    \right\rVert^p 
    \right]
    \\ \leq &
    \lVert Q \rVert_2^{p/2}
    \mathbb{E}\left[\sup_{t \in [\tau_m, \tau_{m+1}]} \left\lVert  
    \int_{\tau_m}^{t} \left(e^{-\bar{A}_a (\tau_{m+1} - t)} - \identity \right)  \sqrt{\Gamma} dB_t
    \right\rVert^p_2 
    \right]
    \\
    \leq & 
    \left( \lVert Q \rVert_2^{1/2}C_p^{BDG} \right)^p \left( 
    \int_{\tau_m}^{\tau_{m+1}}\mathrm{Tr}\left(e^{-\bar{A}_a (\tau_{m+1} - t)} - \identity \right) \Gamma \left(e^{-\bar{A}_a (\tau_{m+1} - t)} - \identity \right)^\dagger  dt
    \right)^{p/2} 
    \\ 
    = &
    \left( \lVert Q \rVert_2^{1/2}C_p^{BDG} \right)^p \left( 
    \int_{0}^{\cumstep_m}\mathrm{Tr}\left(e^{-\bar{A}_a t} - \identity \right) \Gamma \left(e^{-\bar{A}_a t} - \identity \right)^\dagger  dt
    \right)^{p/2} 
\end{align*}
for some constant $C_p^{BDG} > 0$. 
Using the property
\begin{align*}
    \mathrm{Tr}\left(e^{-\bar{A}_a t} - \identity \right) \Gamma \left(e^{-\bar{A}_a t} - \identity \right)^\dagger
    &\leq 
    \lVert  e^{-\bar{A}_a t} - \identity  \rVert_2^2 \mathrm{Tr} \Gamma  
\end{align*}
and the identity $e^{-\bar{A}_a t} - \identity = \int_0^1 (t \bar{A}_t) e^{-s t \bar{A}_a} ds $, we obtain for some constant $K > 0$ the bounds
\begin{align*}
    \lVert e^{-\bar{A}_a t} - \identity \rVert
    \leq 
    \lVert \bar{A}_a \rVert t \int_0^1 \lVert e^{-st \bar{A}_a} \rVert  ds 
    &\leq 
    K \lVert \bar{A}_a  \rVert t \int_0^1 e^{-\lambdaCT t s} ds
    \\ &
    \leq
    K \frac{\lVert \bar{A}_a \rVert t}{\lambdaCT} 
\end{align*}
for all $t > 0$. 
Substituting, we obtain 
\begin{align*}
&\mathbb{E}\left[\sup_{t \in [\tau_m, \tau_{m+1}]} \left\lVert  
    \int_{\tau_m}^{t} \left(e^{-\bar{A}_a (\tau_{m+1} - t)} - \identity \right)  \sqrt{\Gamma} dB_t
    \right\rVert^p 
    \right]
    \\ 
    \leq & 
        \left(K \lVert Q \rVert_2^{1/2}C_p^{BDG} \right)^p \left( 
    \int_{0}^{\cumstep_m}\left(\frac{\lVert \bar{A}_a \rVert t}{\lambdaCT} \right)^2 \mathrm{Tr} \Gamma  dt
    \right)^{p/2}     
    \\ 
    = &
    \left(K \frac{\lVert Q \rVert_2^{1/2}C_p^{BDG}}{\sqrt{3}} \right)^p
    \left(\frac{\lVert \bar{A}_a \rVert }{\lambdaCT} \sqrt{\mathrm{Tr} \Gamma} \right)^p
    \cumstep_m^{3p/2} . 
\end{align*}
From \eqref{eq:OU_recursion1}, we obtain the recursion
\begin{equation}
\begin{split}
    \lVert U_{\tau_{m+1}} - u_{m+1} \rVert_{L_p}
    &\leq (1 - \lambdaDT \cumstep_m) \lVert U_{\tau_m} - u_m \rVert_{L_p}
    + 
    K \frac{\lVert Q \rVert_2^{1/2}C_p^{BDG}}{\sqrt{3}}
    \left(\frac{\lVert \bar{A}_a \rVert }{\lambdaCT} \sqrt{\mathrm{Tr} \Gamma} \right)
    \cumstep_m^{3/2} 
    \\ &
    + \frac{e}{2} \cumstep_m^2\lVert \bar{A}_a \rVert \lVert U_{\tau_m} \rVert_{L_p}
    .
\end{split}
\end{equation}
Solving the recursion, we obtain 
\begin{align*}
    \lVert U_{\tau_{N+1}} - u_{N+1} \rVert_{L_p}
    \leq 
    K \frac{C_p^{BDG}}{\lambdaDT} 
    \frac{\lVert \bar{A}_a \rVert }{\lambdaCT} \sqrt{\frac{\lVert Q \rVert_2\mathrm{Tr} \Gamma}{3}} \sqrt{\cumstep_m}
    + \mathcal{O}\left(\cumstep_m \right) 
    .
\end{align*}
Therefore, we conclude
\begin{equation}
    \mathcal{W}_p (U_{\tau_{N+1}}, u_{N+1})
    \leq
    K \frac{C_p^{BDG}}{\lambdaDT} 
    \frac{\lVert \bar{A}_a \rVert }{\lambdaCT} \sqrt{\frac{\lVert Q \rVert_2\mathrm{Tr} \Gamma}{3}} \sqrt{\cumstep_m}
    + \mathcal{O}\left(\cumstep_m \right) 
    .
\end{equation}

\subsection{Proof of Lemma \ref{lem:ct_ou_convergence}}\label{app:ct_ou_convergence}
Let $U_\infty$ be a stationary limit of the OU process $(U_t)$. 
For a synchronized coupling of the driving Brownian processes, the difference between the states satisfies
\begin{align*}
    d (U_t - U_\infty) &= - \bar{A}_a (U_t - U_\infty)  dt . 
\end{align*}
Therefore, 
\begin{align*}
    \mathcal{W}_p (U_t, U_\infty) \leq  \left\lVert e^{-A_a t} (U_0 - U_\infty) \right\rVert_{L_p}
    \leq K e^{-\lambdaCT t} \mathcal{W}_p (U_0, U_\infty) .
\end{align*}

\bibliographystyle{plainnat} 
\bibliography{bibliography}     

\end{document}